\documentclass{article}

\usepackage{amsmath}
\usepackage{amsthm}
\usepackage{amssymb}
\usepackage{graphicx}
\usepackage{subfigure}
\usepackage[numbers]{natbib}
\usepackage{algorithm}
\usepackage{algorithmic}
\usepackage[hidelinks]{hyperref}
\usepackage{authblk}
\usepackage{fullpage}

\DeclareMathOperator*{\argmax}{argmax}

\newtheorem{theorem}{Theorem}
\newtheorem{corollary}[theorem]{Corollary}
\newtheorem{lemma}[theorem]{Lemma}
\newtheorem{definition}[theorem]{Definition}
\newtheorem{remark}[theorem]{Remark}

\renewcommand{\eqref}[1]{Eq.~(\ref{#1})}

\title{Efficient Mixed-Norm Regularization: Algorithms and Safe Screening Methods}
\author[1]{Jie Wang}
\author[1]{Jun Liu}
\author[1]{Jieping Ye}
\affil[1]{Computer Science and Engineering, Arizona State University,
            USA}

\begin{document}

\maketitle

\begin{abstract}
Sparse learning has recently received increasing attention in many
areas including machine learning, statistics, and applied
mathematics. The mixed-norm regularization based on the
$\ell_1/\ell_q$ norm with $q>1$ is attractive in many applications of
regression and classification in that it facilitates group sparsity
in the model. The resulting optimization problem is, however,
challenging to solve due to the inherent structure of the
$\ell_1/\ell_q$-regularization. Existing work deals with special
cases including $q=2, \infty$, and they can not be easily extended to
the general case. In this paper, we propose an efficient algorithm
based on the accelerated gradient method for solving the
$\ell_1/\ell_q$-regularized problem, which is applicable for all
values of $q$ larger than $1$, thus significantly extending existing
work. One key building block of the proposed algorithm is the
$\ell_1/\ell_q$-regularized Euclidean projection (EP$_{1q}$). Our
theoretical analysis reveals the key properties of EP$_{1q}$ and
illustrates why EP$_{1q}$ for the general $q$ is significantly more
challenging to solve than the special cases. Based on our theoretical
analysis, we develop an efficient algorithm for EP$_{1q}$ by solving
two zero finding problems. To further improve the efficiency of solving large dimensional $\ell_1/\ell_q$ regularized problems, we propose an efficient and effective ``{\it screening}" method which is able to quickly identify the inactive groups, i.e., groups that have 0 components in the solution. This may lead to substantial reduction in the number of groups to be entered to the optimization. An appealing feature of our screening method is that the data set needs to be scanned only once to run the screening. Compared to that of solving the $\ell_1/\ell_q$-regularized problems, the computational cost of our screening test is negligible. The key of the proposed screening method is an accurate sensitivity analysis of the dual optimal solution when the regularization parameter varies. Experimental results demonstrate the
efficiency of the proposed algorithm.
\end{abstract}
\section{Introduction}

Regularization has played a central role in many machine learning
algorithms. The $\ell_1$-regularization has recently received
increasing attention, due to its sparsity-inducing property,
convenient convexity, strong theoretical guarantees, and great
empirical success in various applications. A well-known application
of the $\ell_1$-regularization is the
Lasso~\cite{TIBSHIRANI:1996:ID24}. Recent studies in areas such as
machine learning, statistics, and applied mathematics have witnessed
growing interests in extending the $\ell_1$-regularization to the
$\ell_1/\ell_q$-regularization~\cite{Bach:2008,Duchi:2009:FOLO,Kowalski:2009,Negahban:2009,Obozinski:2007,Sra2012,Vogt2012,Yuan:2006,Zhao:2009}.
This leads to the following $\ell_1/\ell_q$-regularized minimization
problem:
\begin{equation}\label{eq:optimization:problem}
    \min_{\mathbf W \in \mathbb{R}^p}  f(\mathbf W) \equiv l(\mathbf W) + \lambda \varpi(\mathbf W),
\end{equation}
where $\mathbf W \in \mathbb{R}^p$ denotes the model parameters,
$l(\cdot)$ is a convex loss dependent on the training samples and
their corresponding responses, $\mathbf W=[\mathbf w_1^{\rm T},
\mathbf w_2^{\rm T}, \ldots, \mathbf w_s^{\rm T}]^{\rm T}$ is divided
into $s$ non-overlapping groups, $\mathbf w_i \in \mathbb{R}^{p_i},
i=1, 2. \ldots, s$, $\lambda >0$ is the regularization parameter, and
\begin{equation}\label{eq:varpi}
     \varpi(\mathbf W)=\sum_{i=1}^s \|\mathbf w_i\|_q
\end{equation}
is the $\ell_1/\ell_q$ norm with $\|\cdot\|_q$ denoting the vector
$\ell_q$ norm ($q\geq 1$). One of the commonly used loss function is the least square loss, i.e., $l({\bf W})$ takes the form as
\begin{equation}
    l({\bf W}) = \frac{1}{2}\left\|Y - B{\bf W}\right\|_2^2 = \frac{1}{2}\left\|Y - \sum_{i=1}^sB_i{\bf w}_i\right\|_2^2,
\end{equation}
where $B=[B_1,B_2,\ldots,B_s]\in\mathbb{R}^{m\times p}$ is the data matrix with $m$ samples and $p$ features, $B_i\in\mathbb{R}^{m\times p_i}$, $i=1,2,\ldots,s$, is corresponding to the $i^{th}$ group, and $Y\in\mathbb{R}^m$ denotes the response vector.
The $\ell_1/\ell_q$-regularization belongs
to the composite absolute penalties (CAP)~\cite{Zhao:2009} family.
When $q=1$, the problem (\ref{eq:optimization:problem}) reduces to
the $\ell_1$-regularized problem. When $q>1$, the
$\ell_1/\ell_q$-regularization facilitates group sparsity in the
resulting model, which is desirable in many applications of
regression and classification.

The practical challenge in the use of the
$\ell_1/\ell_q$-regularization lies in the development of efficient
algorithms for solving (\ref{eq:optimization:problem}), due to the
non-smoothness of the $\ell_1/\ell_q$-regularization. According to
the black-box Complexity
Theory~\cite{NEMIROVSKI:1994:ID6,Nesterov:2004}, the optimal
first-order black-box method~\cite{NEMIROVSKI:1994:ID6,Nesterov:2004}
for solving the class of nonsmooth convex problems converges as
$O(\frac{1}{\sqrt{k}})$ ($k$ denotes the number of iterations), which
is slow. Existing algorithms focus on solving the problem
(\ref{eq:optimization:problem}) or its equivalent constrained version
for $q=2, \infty$, and they can not be easily extended to the general
case. In order to systematically study the practical performance of
the $\ell_1/\ell_q$-regularization family, it is of great importance
to develop efficient algorithms for solving
(\ref{eq:optimization:problem}) for any $q$ larger than $1$.

\subsection{First-Order Methods Applicable for (\ref{eq:optimization:problem})}

When treating $f(\cdot)$ as the general non-smooth convex function,
we can apply the subgradient
descent (SD)~\cite{Boyd:2003,NEMIROVSKI:1994:ID6,Nesterov:2004}:
\begin{equation}\label{eq:sd:iteration}
    \mathbf X_{i+1}=\mathbf X_i - \gamma_i \mathbf G_i,
\end{equation}
where $\mathbf G_i \in \partial f(\mathbf X_i)$ is a subgradient of
$f(\cdot)$ at $\mathbf X_i$, and $\gamma_i$ a step size. There are
several different types of step size rules, and more details can be
found in~\cite{Boyd:2003,NEMIROVSKI:1994:ID6}. Subgradient descent is
proven to converge, and it can yield a convergence rate of
$O(1/\sqrt{k})$ for $k$ iterations. However, SD has the following two
disadvantages: 1) SD converges slowly; and 2) the iterates of SD are
very rarely at the points of
non-differentiability~\cite{Duchi:2009:FOLO}, thus it might not
achieve the desirable sparse solution (which is usually at the point
of non-differentiability) within a limited number of iterations.


Coordinate Descent (CD)~\cite{Tseng:2001} and its recent
extension---Coordinate Gradient Descent (CGD) can be applied for
optimizing the non-differentiable composite
function~\cite{Tseng:2009:cd}. Coordinate descent has been applied
for the $\ell_1$-norm regularized least
squares~\cite{FRIEDMAN:2008:ID23}, $\ell_1/\ell_{\infty}$-norm
regularized least squares~\cite{Liu:han:2009:blockwise:cd}, and the
sparse group Lasso~\cite{Friedman:2010}. Coordinate gradient descent
has been applied for the group Lasso logistic
regression~\cite{Meier:2008}.  Convergence results for CD and CGD
have been established, when the non-differentiable part is
separable~\cite{Tseng:2001,Tseng:2009:cd}. However, there is no
global convergence rate for CD and CGD (Note, CGD is reported to
have a \emph{local} linear convergence rate under certain
conditions~\cite[Theorem~4]{Tseng:2009:cd}). In addition, it is not
clear whether CD and CGD are applicable to the
problem~\eqref{eq:optimization:problem} with an arbitrary $q \ge 1$.

Fixed Point Continuation~\cite{HALE:2007:ID23,Shi:2008} was recently
proposed for solving the $\ell_1$-norm regularized optimization
(i.e., $\varpi(\mathbf W) =\|\mathbf W\|_1$). It is based on the
following fixed point iteration:
\begin{equation}\label{eq:fixed:point}
    \mathbf X_{i+1}=\mathcal {P}^{\varpi}_{\lambda \tau}( \mathbf X_i - \tau l'(\mathbf
    X_i)),
\end{equation}
where $\mathcal {P}^{\varpi}_{\lambda \tau}(\mathbf
W)=\mbox{sgn}(\mathbf W) \odot \max( \mathbf W - \lambda \tau, 0)$
is an operator and $\tau >0$ is the step size. The fixed point
iteration~\eqref{eq:fixed:point} can be applied to
solve~\eqref{eq:optimization:problem} for any convex penalty
$\varpi(\mathbf W)$, with the operator $\mathcal
{P}^{\varpi}_{\lambda \tau}(\cdot)$ being defined as:
\begin{equation}\label{eq:P:definition}
    \mathcal {P}^{\varpi}_{\lambda \tau}(\mathbf W)=\arg \min_{\mathbf X}
    \frac{1}{2}\|\mathbf X- \mathbf W\|_2^2 + \lambda \tau
    \varphi(\mathbf X).
\end{equation}
The operator $\mathcal {P}^{\varpi}_{\lambda \tau}(\cdot)$ is called
the proximal
operator~\cite{Hiriart-Urruty:1993,Moreau:1965,Yosida:1964}, and is
guaranteed to be non-expansive. With a properly chosen $\tau$, the
fixed point iteration \eqref{eq:fixed:point} can converge to the
fixed point $\mathbf X^*$ satisfying
\begin{equation}\label{eq:fixed:point:optimal}
    \mathbf X^*=\mathcal {P}^{\varpi}_{\lambda \tau}( \mathbf X^* - \tau l'(\mathbf
    X^*)).
\end{equation}
It follows from \eqref{eq:P:definition} and
\eqref{eq:fixed:point:optimal} that,
\begin{equation}\label{eq:fixed:optimality:condition}
    \mathbf 0 \in \mathbf X^* - (\mathbf X^* - \tau l'(\mathbf X^*))
    + \lambda \tau \partial \varpi(\mathbf X^*),
\end{equation}
which together with $\tau >0$ indicates that $\mathbf X^*$ is the
optimal solution to~\eqref{eq:optimization:problem}.
In~\cite{Beck:2009,Nesterov:2007}, the gradient descent method is
extended to optimize the composite function in the form
of~\eqref{eq:optimization:problem}, and the iteration step is similar
to~\eqref{eq:fixed:point}. The extended gradient descent method for nonsmooth objective functions is
proven to yield the convergence rate of $O(1/k)$ for $k$ iterations.
However, as pointed out in~\cite{Beck:2009,Nesterov:2007}, the scheme
in~\eqref{eq:fixed:point} can be further accelerated for
solving~\eqref{eq:optimization:problem}.

Finally, there are various online learning algorithms that have been
developed for dealing with large scale data, e.g., the truncated
gradient method~\cite{Langford:2009:online}, the forward-looking
subgradient~\cite{Duchi:2009:FOLO}, and the regularized dual
averaging~\cite{Xiao:average:2009} (which is based on the dual
averaging method proposed in~\cite{Nesterov:dual:averaging:2009}).
When applying the aforementioned online learning methods for
solving~\eqref{eq:optimization:problem}, a key building block is the
operator $\mathcal {P}^{\varpi}_{\lambda \tau}(\cdot)$.

\subsection{Screening Methods for (\ref{eq:optimization:problem})}

Although many algorithms have been proposed to solve the mixed norm regularized problems, it remains challenging to solve especially for large-scale problems. To address this issue, ``{\it screening}" has been shown to be a promising approach. The key idea of screening is to first identify the ``{\it inactive}" features/groups, which have 0 coefficients in the solution. Then the inactive features/groups can be discarded from the optimization, leading to a reduced feature matrix and substantial savings in computational cost and memory size.

In \cite{Ghaoui2012}, El Ghaoui {\it et al.} proposed novel screening methods, called ``SAFE", to improve the efficiency for solving a class of $\ell_1$ regularized problems, including Lasso, $\ell_1$ regularized logistic regression and $\ell_1$ regularized support vector machines. Inspired by SAFE, Tibshirani {\it et al.} \cite{tibshirani2012} proposed ``strong rules" for a large class of $\ell_1$ regularized problems, including Lasso, $\ell_1$ regularized logistic regression, $\ell_1/\ell_q$ regularized problems and more general convex problems. Although in most of the cases strong rules are more effective in discarding features than SAFE, it is worthwhile to note that strong rules may mistakenly discard features that have non-zero coefficients in the solution. To overcome this limitation, in \cite{Wang2012} the authors proposed the ``DPP" rules for the group Lasso problem \cite{Yuan:2006}, that is, the $\ell_1/\ell_q$-regularization problem in which $q=2$. The DPP rules are safe in the sense that the features/groups discarded from the optimization are guaranteed to have 0 coefficients in the solution.

The core of strong rules for the $\ell_1/\ell_q$ problems is the assumption that the function $\nabla l({\bf X}^*(\lambda))$, that is, the gradient of the loss function $l(\cdot)$ at the optimal solution ${\bf X}^*(\lambda)$ of problem \eqref{eq:optimization:problem}, is a Lipschitz function of $\lambda$. However, this assumption does not always hold in practice \cite{tibshirani2012}. Therefore, groups which have non-zero coefficients can be discarded mistakenly by strong rules. The key idea of DPP rules is to bound the dual optimal solution of problem \eqref{eq:optimization:problem} within a region $\mathcal{R}$ and compute $\max_{\theta\in\mathcal{R}}\|B_i^T\theta\|_{\bar{q}}$, $i=1,2,\ldots,s$, where $1/q+1/{\bar{q}}=1$ (recall that $B_i$ is the data matrix corresponding to ${\bf w}_i$). The smaller the region $\mathcal{R}$ is, the more inactive groups can be detected. In this paper, we give a more accurate estimation of the region $\mathcal{R}$ and extend the idea to the general case with $q\geq1$.

\subsection{Main Contributions}

The main contributions of this paper include the following two parts.
\begin{enumerate}
\item We develop an efficient algorithm for solving the
    $\ell_1/\ell_q$-regularized problem (\ref{eq:optimization:problem}),
    for any $q \geq 1$. 
    More specifically, we develop the GLEP$_{1q}$
    algorithm\footnote{GLEP$_{1q}$ stands for \textbf{G}roup Sparsity
    \textbf{L}earning via the \textbf{$\ell_1/\ell_q$}-regularized
    \textbf{E}uclidean \textbf{P}rojection.}, which makes use of the
    accelerated gradient method~\cite{Beck:2009,Nesterov:2007} for
    minimizing the composite objective functions. GLEP$_{1q}$ has the
    following two favorable properties: (1) It is applicable to any
    smooth convex loss $l(\cdot)$ (e.g., the least squares loss and the logistic loss) and any $q\geq 1$. Existing algorithms
    are mainly focused on $\ell_1/\ell_2$-regularization and/or
    $\ell_1/\ell_{\infty}$-regularization. To the best of our knowledge, this is the first work that provides an efficient algorithm for solving (\ref{eq:optimization:problem}) with any $q \geq 1$; and (2) It achieves a global convergence rate of $O(\frac{1}{k^2})$ ($k$ denotes the number of iterations) for the smooth convex loss $l(\cdot)$. In comparison, although the methods proposed in~\cite{Argyriou:2008,DUCHI:2009:icml,Liu:han:2009:blockwise:cd,Obozinski:2007}
    converge, there is no known convergence rate; and the method proposed in~\cite{Meier:2008} has a \emph{local} linear convergence rate under certain conditions~\cite[Theorem~4]{Tseng:2009:cd}. In addition, these methods are not applicable for an arbitrary $q \ge 1$.

    The main technical contribution of the proposed GLEP$_{1q}$ is the development of
    an efficient algorithm for computing the $\ell_1/\ell_q$-regularized
    Euclidean projection (EP$_{1q}$), which is a key building block in
    the GLEP$_{1q}$ algorithm. More specifically, we analyze the
    key theoretical properties of the solution of EP$_{1q}$,  based on
    which we develop an efficient algorithm for EP$_{1q}$ by solving two
    zero finding problems. In addition, our theoretical analysis reveals
    why EP$_{1q}$ for the general $q$ is significantly more challenging
    than the special cases such as $q=2$. We have conducted experimental
    studies to demonstrate the efficiency of the proposed algorithm.
\item We develop novel {\bf s}creening methods for large-scale {\bf mi}xed-{\bf n}orm (Smin) regularized problems. The proposed screening method is able to quickly identify the inactive groups, i.e., groups that have 0 components in the solution. Consequently, the inactive groups can be removed from the optimization problem and the scale of the resulting problem can be significantly reduced. Several appealing features of our screening method includes: (1) It is ``{\it safe}" in the sense that the groups removed from the optimization are guaranteed to have 0 components in the solution. (2) The data set needs to be scanned only once to run the screening. (3) The computational cost of the proposed screening rule is negligible compared to that of solving the $\ell_1/\ell_q$-regularized problems. (4) Our screening method is independent of solvers for the $\ell_1/\ell_q$-regularized problems and thus it can be integrated with any existing solver to improve the efficiency. Due to the difficulty of the $\ell_1/\ell_q$-regularized problems, existing screening methods are limited to \cite{tibshirani2012,Wang2012}. In comparison, the method proposed in \cite{tibshirani2012} is ``{\it inexact}" in the sense that it may mistakenly remove groups from the optimization which have nonzero coefficients in the solution; and the method proposed in \cite{Wang2012} is designed for the case $q=2$.

    The key of the proposed screening method is an accurate estimation of the region $\mathcal{R}$ which includes the dual optimal solution of problem \eqref{eq:optimization:problem} via the ``variational inequalities" \cite{Guler2010}. After the upper bound $\max_{\theta\in\mathcal{R}}\|B_i^T\theta\|_{\bar{q}}$ is computed, we make use of the KKT condition to determine if the $i^{th}$ group has 0 coefficients in the solution and can be removed from the optimization. Experimental results show that the efficiency of the proposed GLEP$_{1q}$ can be improved by ``{\it three orders of magnitude}" with the screening method, especially for the large dimensional data sets.
\end{enumerate}
\subsection{Related Work}

We briefly review recent studies on $\ell_1/\ell_q$-regularization and the corresponding screening methods,
most of which focus on $\ell_1/\ell_2$-regularization and/or
$\ell_1/\ell_{\infty}$-regularization.


$\ell_1/\ell_2$-Regularization:  The group Lasso was proposed
in~\cite{Yuan:2006} to select the groups of variables for prediction
in the least squares regression. In~\cite{Meier:2008}, the idea of
group lasso was extended for classification by the logistic
regression model, and an algorithm via the coordinate gradient
descent~\cite{Tseng:2009:cd} was developed. In~\cite{Obozinski:2007},
the authors considered joint covariate selection for grouped
classification by the logistic loss, and developed a blockwise
boosting Lasso algorithm with the boosted Lasso~\cite{Zhao:2004}.
In~\cite{Argyriou:2008}, the authors proposed to learn the sparse
representations shared across multiple tasks, and designed an
alternating algorithm. The Spectral projected-gradient (Spg)
algorithm was proposed for solving the $\ell_1/\ell_2$-ball
constrained smooth optimization problem~\cite{BergFriedlander:2008},
equipped with an efficient Euclidean projection that has expected
linear runtime. The $\ell_1/\ell_2$-regularized multi-task learning
was proposed in~\cite{Liu:2009:uai}, and the equivalent smooth
reformulations were solved by the Nesterov's
method~\cite{Nesterov:2004}. 


$\ell_1/\ell_{\infty}$-Regularization: A blockwise coordinate descent
algorithm~\cite{Tseng:2001} was developed for the mutli-task
Lasso~\cite{Liu:han:2009:blockwise:cd}. It was applied to the neural
semantic basis discovery problem. In~\cite{Quattoni:2009}, the
authors considered the multi-task learning via the
$\ell_1/\ell_{\infty}$-regularization, and proposed to solve the
equivalent $\ell_1/\ell_{\infty}$-ball constrained problem by the
projected gradient descent. In~\cite{Negahban:2008:nips}, the authors
considered the multivariate regression via the
$\ell_1/\ell_{\infty}$-regularization, showed that the
high-dimensional scaling of $\ell_1/\ell_{\infty}$-regularization is
qualitatively similar to that of ordinary $\ell_1$-regularization,
and revealed that, when the overlap parameter is large enough
($>2/3$), $\ell_1/\ell_{\infty}$-regularization yields the improved
statistical efficiency over $\ell_1$-regularization.


$\ell_1/\ell_q$-Regularization: In~\cite{DUCHI:2009:icml}, the
authors studied the problem of boosting with structural sparsity, and
developed several boosting algorithms for regularization penalties
including $\ell_1$, $\ell_{\infty}$, $\ell_1/\ell_2$, and
$\ell_1/\ell_{\infty}$. In~\cite{Zhao:2009}, the composite absolute
penalties (CAP) family was introduced, and an algorithm called iCAP
was developed. iCAP employed the least squares loss and the
$\ell_1/\ell_{\infty}$ regularization, and was implemented by the
boosted Lasso~\cite{Zhao:2004}. The multivariate regression with the
$\ell_1/\ell_q$-regularization was studied
in~\cite{Liu:han:2009:report}. In~\cite{Negahban:2009}, a unified
framework was provided for establishing consistency and convergence
rates for the regularized $M$-estimators, and the results for
$\ell_1/\ell_q$ regularization was established.

$\ell_1/\ell_q$-Screening: To the best of our knowledge, existing screening methods for $\ell_1/\ell_q$-Regularization are limited to the methods proposed in \cite{tibshirani2012,Wang2012}. The methods proposed in \cite{tibshirani2012}, i.e., the strong rules, assume that the gradient of the loss function at the optimal solution of problem \eqref{eq:optimization:problem} is a Lipschitz function of $\lambda$. However, there are counterexamples showing that the assumption can be violated in practice. As a result, groups which have non-zero coefficients in the solution can be mistakenly discarded from the optimization by strong rules. In \cite{Wang2012}, the authors considered the group lasso problem \cite{Yuan:2006}, that is, the $\ell_1/\ell_2$-regularized problems and proposed the DPP rules. The key idea of DPP rules is to estimate a region $\mathcal{R}$ which includes the dual optimal solution $\theta^*(\lambda)$ of problem \eqref{eq:optimization:problem} by noting that $\theta^*(\lambda)$ is nonexpansive with respect to $\lambda$.

\subsection{Notation}

Throughout this paper, scalars are denoted by italic letters, and
vectors by bold face letters. Let $\mathbf X, \mathbf Y, \ldots$
denote the $p$-dimensional parameters, $\mathbf x_i, \mathbf y_i,
\ldots$ the $p_i$-dimensional parameters of the $i$-th group, and
$x_i$ the $i$-th component of $\mathbf x$. We denote $\bar q=
\frac{q}{q-1}$, and thus $q$ and $\bar q$ satisfy the following
relationship: $\frac{1}{\bar q} +\frac{1}{q}=1$. We use the following
componentwise operators: $\odot$, $.^{q}$, $|\cdot|$ and $ {\rm sgn}(\cdot)$.
Specifically, $\mathbf z=\mathbf x \odot \mathbf y$ denotes $z_i=x_i
y_i$; ${\bf y} = {\bf x}^q$ denotes $y_i=x_i^q$; $\mathbf y=|\mathbf x|$ denotes $y_i=|x_i|$; and $\mathbf y=
{\rm sgn}(\mathbf x)$ denotes $y_i={\rm sgn} (x_i)$, where ${\rm sgn}
(\cdot)$ is the signum function: ${\rm sgn} (t)=1$ if $t>0$; ${\rm
sgn} (t)=0$ if $t=0$; and ${\rm sgn} (t)=-1$ if $t<0$. We use $\langle{\bf x}, {\bf y}\rangle=\sum_i x_i y_i$ to denote the inner product of ${\bf x}$ and ${\bf y}$.

\section{The Proposed GLEP$_{1q}$ Algorithm}
\label{s:agmeep}


In this paper, we consider solving~\eqref{eq:optimization:problem} in
the batch learning setting, and propose to apply the accelerated
gradient method~\cite{Beck:2009,Nesterov:2007} due to its fast
convergence rate. We term our proposed algorithm as ``GLEP$_{1q}$'',
which stands for \textbf{G}roup Sparsity \textbf{L}earning via the
\textbf{$\ell_1/\ell_q$}-regularized \textbf{E}uclidean
\textbf{P}rojection. Note that, one can develop the online learning
algorithm for~\eqref{eq:optimization:problem} using the
aforementioned online learning algorithms, where the
$\ell_1/\ell_q$-regularized Euclidean projection is also a key
building block.

We first construct the following model for approximating the
composite function $\mathcal {M}(\cdot)$ at the point $\mathbf X$:
\begin{equation}\label{eq:model:mL}
\begin{aligned}
    \mathcal {M}_{L, \mathbf X} (\mathbf Y)  =  [\mbox{loss}(\mathbf X) + \langle \mbox{loss}'(\mathbf X), \mathbf Y - \mathbf X \rangle] + \lambda \varpi(\mathbf Y)+ \frac{L}{2} \|\mathbf Y-\mathbf X\|_2^2,
\end{aligned}
\end{equation}
where $L >0$. In the model $\mathcal {M}_{L, \mathbf X} (\mathbf
Y)$, we apply the first-order Taylor expansion at the point $\mathbf
X$ (including all terms in the square bracket) for the smooth loss
function $l(\cdot)$, and directly put the non-smooth penalty
$\varpi(\cdot)$ into the model. The regularization term $\frac{L}{2}
\|\mathbf Y-\mathbf X\|_2^2$ prevents $\mathbf Y$ from walking far
away from $\mathbf X$, thus the model can be a good approximation to
$f(\mathbf Y)$ in the neighborhood of $\mathbf X$.

The accelerated gradient method is based on two sequences $\{\mathbf
X_i\}$ and $\{\mathbf S_i \}$ in which $\{\mathbf X_i\}$ is the
sequence of approximate solutions, and $\{\mathbf S_i \}$ is the
sequence of search points. The search point $\mathbf S_i$ is the
affine combination of $\mathbf X_{i-1}$ and $\mathbf X_i$ as
\begin{equation}\label{eq:sk}
    \mathbf S_i= \mathbf X_i + \beta_i (\mathbf X_i - \mathbf
    X_{i-1}),
\end{equation}
where $\beta_i$ is a properly chosen coefficient. The approximate
solution $\mathbf X_{i+1}$ is computed as the minimizer of $\mathcal
{M}_{L_i, \mathbf S_i} (\mathbf Y)$:
\begin{equation}\label{eq:xkplus1}
    \mathbf X_{i+1}= \arg \min_{\mathbf Y} \mathcal {M}_{L_i, \mathbf S_i} (\mathbf   Y),
\end{equation}
where $L_i$ is determined by line search, e.g., the Armijo-Goldstein
rule so that $L_i$ should be appropriate for $\mathbf S_i$.

\begin{algorithm}
  \caption{GLEP$_{1q}$: \textbf{G}roup Sparsity \textbf{L}earning via the
\textbf{$\ell_1/\ell_q$}-regularized \textbf{E}uclidean
\textbf{P}rojection}
  \label{algorithm:GLEP$_{1q}$}
\begin{algorithmic}[1]
  \REQUIRE $\lambda_1 \geq 0, \lambda_2 \geq 0,  L_0 >0, \mathbf X_0, k$
  \ENSURE  $\mathbf X_{k+1}$
    \STATE Initialize $\mathbf X_1=\mathbf X_0$, $\alpha_{-1}=0$, $\alpha_0=1$, and $L=L_0$.
    \FOR{$i=1$ to $k$}
     \STATE Set $\beta_i= \frac{\alpha_{i-2}-1}{\alpha_{i-1}}$, $\mathbf S_i = \mathbf X_i + \beta_i (\mathbf X_i  - \mathbf X_{i-1})$
     \STATE Find the smallest $L=L_{i-1}, 2L_{i-1}, \ldots $ such that $$f( \mathbf X_{i+1}) \leq \mathcal {M}_{L,\mathbf S_i}( \mathbf X_{i+1} ),$$
     where $\mathbf X_{i+1}= \arg \min_{\mathbf Y} \mathcal {M}_{L, \mathbf S_i} (\mathbf  Y)$
    \STATE Set $L_i=L$ and $\alpha_{i+1}=\frac{1+\sqrt{1+4  \alpha_i^2}}{2}$
    \ENDFOR
\end{algorithmic}
\end{algorithm}

The algorithm for solving~\eqref{eq:optimization:problem} is
presented in Algorithm~\ref{algorithm:GLEP$_{1q}$}. GLEP$_{1q}$
inherits the optimal convergence rate of $O(1/k^2)$ from the
accelerated gradient method. In
Algorithm~\ref{algorithm:GLEP$_{1q}$}, a key subroutine
is~\eqref{eq:xkplus1}, which can be computed as $\mathbf
X_{i+1}=\pi_{1q}(\mathbf S_i- l'(\mathbf S_i) /L_i, \lambda / L_i)$,
where $\pi_{1q}(\cdot)$ is the $\ell_1/\ell_q$-regularized Euclidean
projection (EP$_{1q}$) problem:
\begin{equation}\label{eq:projection:1q}
    \pi_{1q}(\mathbf V, \lambda)= \arg \min_{\mathbf X \in \mathbb{R}^p} \frac{1}{2} \|\mathbf X-\mathbf V\|_2^2 +  \lambda \sum_{i=1}^s \|\mathbf
    x_i\|_q.
\end{equation}
The efficient computation of~\eqref{eq:projection:1q} for any $q > 1$
is the main technical contribution of this paper.
%
Note that the $s$ groups in \eqref{eq:projection:1q} are independent.
Thus the optimization in (\ref{eq:projection:1q}) decouples into a
set of $s$ independent $\ell_q$-regularized Euclidean projection
problems:
\begin{equation}\label{eq:subproblem}
    \pi_q (\mathbf v)= \arg \min_{\mathbf x \in \mathbb{R}^n} \left(g(\mathbf x)= \frac{1}{2} \|\mathbf x -\mathbf v\|_2^2 + \lambda
    \|\mathbf x\|_q\right),
\end{equation}
where $n=p_i$ for the $i$-th group. Next, we study the key properties
of (\ref{eq:subproblem}).

\subsection{Properties of the Optimal Solution to (\ref{eq:subproblem})}

The function $g(\cdot)$ is strictly convex, and thus it has a unique
minimizer, as summarized below:
\begin{lemma}\label{lemma:unique}
The problem (\ref{eq:subproblem}) has a unique minimizer.
\end{lemma}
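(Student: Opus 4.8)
The plan is to establish the lemma by showing the objective $g(\mathbf x) = \tfrac12\|\mathbf x - \mathbf v\|_2^2 + \lambda \|\mathbf x\|_q$ is strictly convex on $\mathbb{R}^n$ and then invoking the standard fact that a strictly convex function attains its infimum at most once, combined with a coercivity argument guaranteeing the minimizer exists. First I would observe that $\mathbf x \mapsto \tfrac12\|\mathbf x - \mathbf v\|_2^2$ is strictly convex, being a positive-definite quadratic (its Hessian is the identity matrix). Second, $\mathbf x \mapsto \lambda\|\mathbf x\|_q$ is convex for any $q \geq 1$ as a nonnegative scalar multiple of a norm (triangle inequality plus absolute homogeneity). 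The sum of a strictly convex function and a convex function is strictly convex, so $g$ is strictly convex.

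Next I would address existence. The function $g$ is continuous on $\mathbb{R}^n$, and since $\|\mathbf x\|_q \geq 0$ we have $g(\mathbf x) \geq \tfrac12\|\mathbf x - \mathbf v\|_2^2 \to \infty$ as $\|\mathbf x\|_2 \to \infty$; hence $g$ is coercive. A continuous coercive function on $\mathbb{R}^n$ attains its minimum (the sublevel set $\{\mathbf x : g(\mathbf x) \leq g(\mathbf 0)\}$ is closed and bounded, hence compact, and $g$ attains its minimum there, which is a global minimum). Therefore a minimizer exists.

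Finally, uniqueness: suppose $\mathbf x_1 \neq \mathbf x_2$ both minimize $g$ with common value $g^\ast$. By strict convexity, $g\bigl(\tfrac12(\mathbf x_1 + \mathbf x_2)\bigr) < \tfrac12 g(\mathbf x_1) + \tfrac12 g(\mathbf x_2) = g^\ast$, contradicting that $g^\ast$ is the minimum. Hence the minimizer is unique, and since the decoupled subproblem \eqref{eq:subproblem} is exactly of this form, and \eqref{eq:projection:1q} is a sum of $s$ such independent problems, the claim follows.

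There is essentially no hard part here — the only point requiring a little care is making the existence argument rigorous (coercivity plus continuity), since strict convexity alone only gives uniqueness of the minimizer \emph{if} one exists; on an unbounded domain a strictly convex function need not attain its infimum (e.g.\ $e^x$ on $\mathbb{R}$), so the coercivity observation is the load-bearing step. I would state it in one or two lines rather than belaboring it.
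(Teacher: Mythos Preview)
Your proposal is correct and follows essentially the same approach as the paper, which simply asserts that $g(\cdot)$ is strictly convex and hence has a unique minimizer without further elaboration. Your write-up is in fact more complete than the paper's, since you explicitly supply the coercivity/existence argument that the paper leaves implicit.
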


Next, we show that the optimal solution to (\ref{eq:subproblem}) is
given by zero under a certain condition, as summarized in the
following theorem:
\begin{theorem}\label{theorem:solution}
$\pi_q (\mathbf v) = \mathbf 0$ if and only if $\lambda \geq
\|\mathbf v\|_{\bar q}$.
\end{theorem}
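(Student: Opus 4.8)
The plan is to use the convexity of $g$ from \eqref{eq:subproblem} together with a sharp form of H\"older's inequality, proving the two implications separately.

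For the ``if'' direction, I would assume $\lambda \ge \|\mathbf v\|_{\bar q}$ and show directly that $\mathbf 0$ is the minimizer. Expanding, $g(\mathbf x) - g(\mathbf 0) = \tfrac12\|\mathbf x\|_2^2 - \langle \mathbf x, \mathbf v\rangle + \lambda\|\mathbf x\|_q$, and H\"older's inequality gives $\langle \mathbf x, \mathbf v\rangle \le \|\mathbf x\|_q\,\|\mathbf v\|_{\bar q}$. Hence $g(\mathbf x) - g(\mathbf 0) \ge \tfrac12\|\mathbf x\|_2^2 + (\lambda - \|\mathbf v\|_{\bar q})\|\mathbf x\|_q \ge 0$ for every $\mathbf x$, so $\mathbf 0$ minimizes $g$; uniqueness (Lemma~\ref{lemma:unique}) then yields $\pi_q(\mathbf v) = \mathbf 0$.

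For the ``only if'' direction I would argue by contrapositive: assume $\lambda < \|\mathbf v\|_{\bar q}$ and exhibit a strictly descending direction from $\mathbf 0$. Pick $\mathbf u$ with $\|\mathbf u\|_q = 1$ attaining equality in H\"older, i.e. $\langle \mathbf u, \mathbf v\rangle = \|\mathbf v\|_{\bar q}$ (such $\mathbf u$ exists by the duality $\|\mathbf v\|_{\bar q} = \max_{\|\mathbf u\|_q \le 1}\langle \mathbf u, \mathbf v\rangle$; note $\mathbf v \ne \mathbf 0$ here, since $\mathbf v=\mathbf 0$ gives $\|\mathbf v\|_{\bar q}=0\le\lambda$). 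Then for $t>0$, $g(t\mathbf u) - g(\mathbf 0) = \tfrac{t^2}{2}\|\mathbf u\|_2^2 - t\,(\|\mathbf v\|_{\bar q} - \lambda)$, which is negative for all sufficiently small $t$; hence $\mathbf 0$ is not the minimizer, so $\pi_q(\mathbf v) \ne \mathbf 0$.

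An alternative, more compact route is to note that $\mathbf 0$ solves \eqref{eq:subproblem} if and only if $\mathbf 0 \in \partial g(\mathbf 0) = -\mathbf v + \lambda\,\partial\|\cdot\|_q(\mathbf 0)$, and invoke the classical fact that the subdifferential of a norm at the origin is the closed unit ball of the dual norm, $\partial\|\cdot\|_q(\mathbf 0) = \{\mathbf u : \|\mathbf u\|_{\bar q} \le 1\}$; the optimality condition then becomes exactly $\|\mathbf v/\lambda\|_{\bar q} \le 1$. In either approach the only technical point is the attainment case of H\"older's inequality (equivalently, the description of the dual-norm ball), which is standard; the rest is a one-line computation, so I do not anticipate a serious obstacle.
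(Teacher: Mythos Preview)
Your proposal is correct and follows essentially the same idea as the paper: both arguments hinge on H\"older's inequality and its attainment to characterize when $\mathbf 0$ is optimal. The paper packages both directions at once via the directional derivative criterion $Dg(\mathbf 0)[\mathbf u] = -\langle \mathbf v,\mathbf u\rangle + \lambda\|\mathbf u\|_q \ge 0$ for all $\mathbf u$, whereas you expand $g(\mathbf x)-g(\mathbf 0)$ directly for the ``if'' part and exhibit a descending direction for the ``only if'' part; your subdifferential alternative is an equivalent restatement of the same first-order condition.
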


\begin{proof} Let us first compute the directional derivative of
$g(\mathbf x)$ at the point $\mathbf 0$:
\begin{equation*}
    D g(\mathbf 0)[\mathbf u] = \lim_{ \alpha \downarrow 0 }
    \frac{1}{\alpha}[g(\alpha \mathbf u) - g(\mathbf 0)]
                              = -\langle \mathbf v, \mathbf u \rangle +
                              \lambda \|\mathbf u\|_q,
\end{equation*}
where $\mathbf u$ is a given direction. According to the
H\"{o}lder's inequality, we have
\begin{equation*}
    |\langle \mathbf u, \mathbf v \rangle | \leq \|\mathbf u\|_q
    \|\mathbf v\|_{\bar q}, \forall \mathbf u.
\end{equation*}
Therefore, we have
\begin{equation}\label{eq:direction:zero}
    D g(\mathbf 0)[\mathbf u] \geq 0, \forall \mathbf u,
\end{equation}
if and only if $\lambda \geq \|\mathbf v\|_{\bar q}$. The result
follows, since (\ref{eq:direction:zero}) is the necessary and
sufficient condition for $\mathbf 0$ to be the optimal solution of
(\ref{eq:subproblem}).
\end{proof}

Next, we focus on solving (\ref{eq:subproblem}) for $0<\lambda <
\|\mathbf v\|_{\bar q}$. We first consider solving
(\ref{eq:subproblem}) in the case of $1 < q < \infty$, which is the
main technical contribution of this paper. We begin with a lemma
that summarizes the key properties of the optimal solution to the
problem (\ref{eq:subproblem}):

\begin{lemma}\label{lemma:sign}
Let $1<q<\infty$ and $0<\lambda < \|\mathbf v\|_{\bar q} $. Then,
$\mathbf x^*$ is the optimal solution to the problem
(\ref{eq:subproblem}) if and if only it satisfies:
\begin{equation}\label{eq:x:nonzero}
    \mathbf x^* + \lambda \|\mathbf x^*\|_q^{1-q} {\mathbf
    x^*}^{(q-1)} = \mathbf v,
\end{equation}
where $\mathbf y \equiv \mathbf x^{(q-1)}$ is defined
component-wisely as:
   $ y_i={\rm sgn}(x_i) |x_i|^{q-1}$.
Moreover, we have
\begin{equation}\label{eq:v:absv}
    \pi_q(\mathbf v)= {\rm sgn} (\mathbf v) \odot \pi_q(|\mathbf
    v|),
\end{equation}
\begin{equation}\label{eq:sign:keep}
    {\rm sgn}(\mathbf x^*)={\rm sgn} (\mathbf v),
\end{equation}
\begin{equation}\label{eq:x:bound}
    0 < |x^*_i| < |v_i|, \forall i \in \{i|v_i \neq 0\}.
\end{equation}
\end{lemma}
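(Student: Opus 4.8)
The plan is to prove the three claims~\eqref{eq:v:absv}, \eqref{eq:sign:keep}, \eqref{eq:x:bound} in sequence, using as the single main tool the characterization~\eqref{eq:x:nonzero} of the optimizer together with the strict convexity of $g(\cdot)$ (Lemma~\ref{lemma:unique}). First I would establish~\eqref{eq:x:nonzero} itself: since $g$ is convex and, on the region where $\mathbf x \neq \mathbf 0$, the $\ell_q$ norm ($1<q<\infty$) is differentiable, the point $\mathbf x^*$ is optimal iff $\nabla g(\mathbf x^*) = \mathbf 0$. Computing $\nabla \|\mathbf x\|_q = \|\mathbf x\|_q^{1-q}\,\mathbf x^{(q-1)}$ (with the component-wise convention $y_i = \sign(x_i)|x_i|^{q-1}$) gives exactly $\mathbf x^* - \mathbf v + \lambda \|\mathbf x^*\|_q^{1-q}\mathbf x^{*(q-1)} = \mathbf 0$. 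One must also note that $\mathbf x^* \neq \mathbf 0$ when $0<\lambda<\|\mathbf v\|_{\bar q}$, which is immediate from Theorem~\ref{theorem:solution}, so the differentiable branch indeed applies. Conversely, any solution of~\eqref{eq:x:nonzero} is a stationary point of the (strictly) convex $g$, hence the unique minimizer.

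Next, for~\eqref{eq:v:absv}: I would argue by a symmetry/sign-change argument. Let $\mathbf x^* = \pi_q(|\mathbf v|)$. Since $\sign(|v_i|) = 1$ wherever $v_i \neq 0$ and the coordinates with $v_i = 0$ are handled separately, one checks that $\sign(\mathbf v)\odot \mathbf x^*$ satisfies~\eqref{eq:x:nonzero} with right-hand side $\mathbf v$: the map $x_i \mapsto x_i + \lambda\|\mathbf x\|_q^{1-q}\sign(x_i)|x_i|^{q-1}$ is odd in each coordinate and $\|\mathbf x\|_q$ is invariant under sign flips, so flipping the sign of a coordinate of $\mathbf v$ flips the sign of the corresponding coordinate of the solution. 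By uniqueness, $\pi_q(\mathbf v) = \sign(\mathbf v)\odot \pi_q(|\mathbf v|)$. This reduces the remaining two claims to the case $\mathbf v \geq \mathbf 0$.

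For~\eqref{eq:sign:keep} and~\eqref{eq:x:bound}, assume $v_i \geq 0$ for all $i$ and fix an index with $v_i > 0$. Reading off the $i$-th component of~\eqref{eq:x:nonzero}, $x_i^* + \lambda\|\mathbf x^*\|_q^{1-q}\sign(x_i^*)|x_i^*|^{q-1} = v_i > 0$; since $\lambda\|\mathbf x^*\|_q^{1-q} > 0$, the left-hand side is a strictly increasing odd function of $x_i^*$ that vanishes at $x_i^* = 0$, hence $x_i^*$ must be positive, giving $\sign(x_i^*) = \sign(v_i)$. The same equation then reads $x_i^* + (\text{positive}) = v_i$, so $0 < x_i^* < v_i$, i.e.\ $0 < |x_i^*| < |v_i|$. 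For indices with $v_i = 0$: the $i$-th component of~\eqref{eq:x:nonzero} forces $x_i^*(1 + \lambda\|\mathbf x^*\|_q^{1-q}|x_i^*|^{q-2}) = 0$ when $q \geq 2$, hence $x_i^* = 0$; for $1 < q < 2$ one argues directly that a nonzero $x_i^*$ on a coordinate where $v_i = 0$ can only increase $g$ (setting it to zero decreases the squared-error term weakly and the $\ell_q$ term strictly), contradicting optimality, so again $x_i^* = 0 = \sign(v_i)$, completing~\eqref{eq:sign:keep}. Undoing the sign reduction via~\eqref{eq:v:absv} gives the stated results for general $\mathbf v$.

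I expect the main obstacle to be the rigorous justification that $\mathbf x^*$ is characterized by the \emph{smooth} first-order condition~\eqref{eq:x:nonzero} rather than a subdifferential inclusion --- specifically, ruling out the possibility that $x_i^* = 0$ on a coordinate with $v_i \neq 0$, and handling the coordinates with $v_i = 0$ when $1 < q < 2$ (where $\|\cdot\|_q$ fails to be twice differentiable and the gradient of $|t|^{q-1}$ blows up near $0$). The clean way around this is to first prove~\eqref{eq:x:bound} (via a direct perturbation/optimality argument on $g$) so that $\mathbf x^*$ lies in the open region where $\|\cdot\|_q$ is smooth, and only then invoke $\nabla g(\mathbf x^*) = \mathbf 0$; alternatively one invokes the scalar monotonicity of $t \mapsto t + c\,\sign(t)|t|^{q-1}$ as above, which sidesteps differentiability concerns at $0$ entirely.
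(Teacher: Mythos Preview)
Your proposal is correct and follows essentially the same approach as the paper: use Theorem~\ref{theorem:solution} to ensure $\mathbf x^* \neq \mathbf 0$, invoke differentiability of $\|\cdot\|_q$ on $\mathbb{R}^n \setminus \{\mathbf 0\}$ to obtain~\eqref{eq:x:nonzero}, and then read off~\eqref{eq:v:absv}--\eqref{eq:x:bound} from the componentwise identity $\mathrm{sgn}(x_i^*)\bigl(|x_i^*| + c^*|x_i^*|^{q-1}\bigr) = v_i$ with $c^* = \lambda\|\mathbf x^*\|_q^{1-q} > 0$. Your concern in the final paragraph is unnecessary: for $1<q<\infty$ the map $\mathbf x \mapsto \|\mathbf x\|_q$ is differentiable at every nonzero $\mathbf x$, including points where some individual $x_i = 0$ (since $|t|^q$ is $C^1$ at $t=0$ for $q>1$), so the smooth stationarity condition~\eqref{eq:x:nonzero} applies directly with no subdifferential subtleties or separate treatment of the $1<q<2$ case.
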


\begin{proof} Since $\lambda < \|\mathbf v\|_{\bar q}$, it follows
from Theorem~\ref{theorem:solution} that the optimal solution
$\mathbf x^* \neq \mathbf 0$. $\|\mathbf x\|_q$ is differentiable
when $\mathbf x \neq \mathbf 0$, so is $g(\mathbf x)$. Therefore, the
sufficient and necessary condition for $\mathbf x^*$ to be the
solution of (\ref{eq:subproblem}) is $g'(\mathbf x^*)=0$, i.e.,
(\ref{eq:x:nonzero}). Denote $c^* \equiv \lambda \|\mathbf
x^*\|_q^{1-q} >0$. It follows from (\ref{eq:x:nonzero}) that
(\ref{eq:v:absv}) holds, and
\begin{equation}\label{eq:x:comp}
    \mbox{sgn}(x^*_i) \left( |x^*_i| + c^*  |x^*_i|^{q-1} \right)=v_i,
\end{equation}
from which we can verify (\ref{eq:sign:keep}) and (\ref{eq:x:bound}).
\end{proof}

\begin{figure}
  \centering
  \includegraphics[width=2.1in]{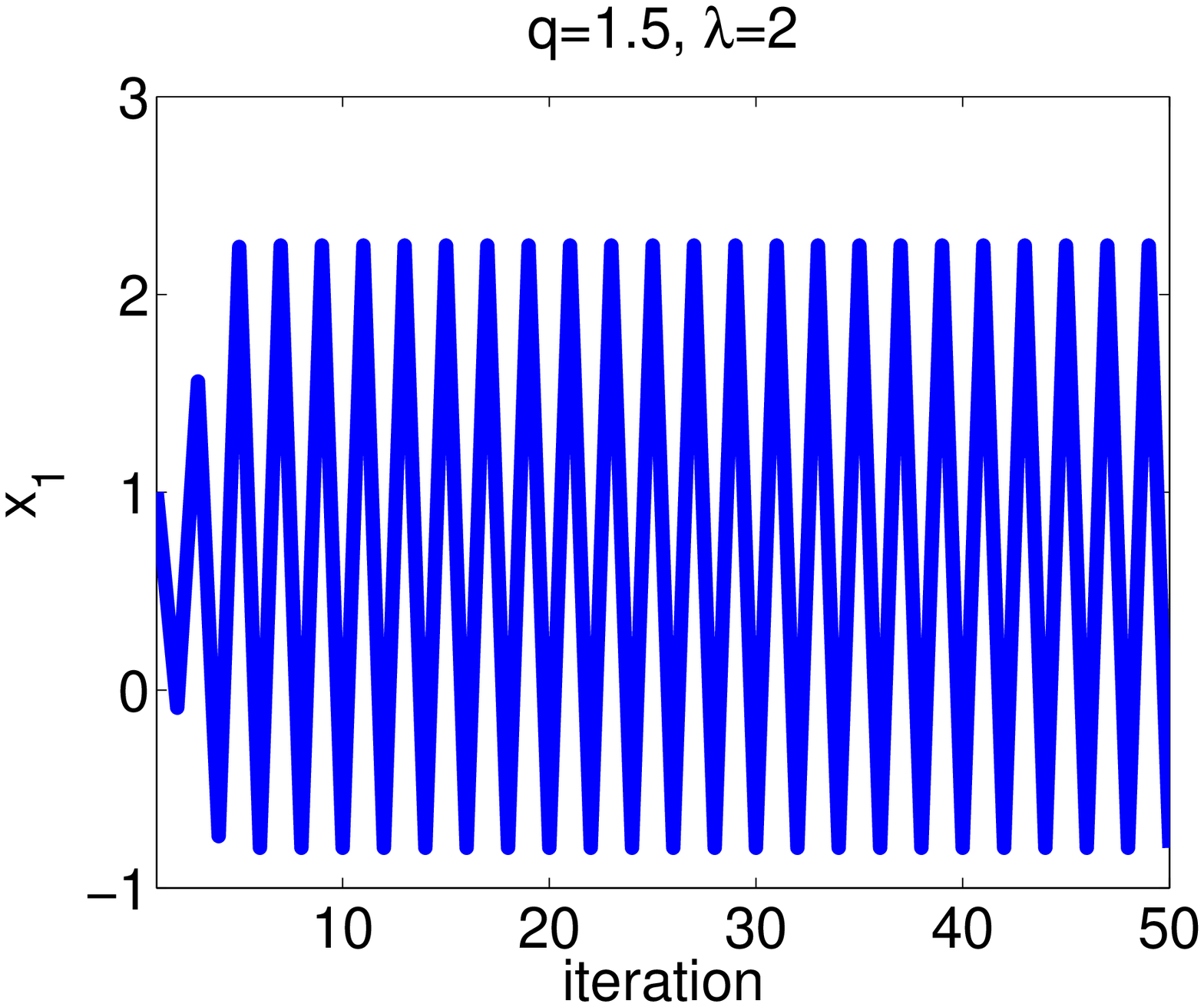} \hspace{0.1in}
  \includegraphics[width=2.1in]{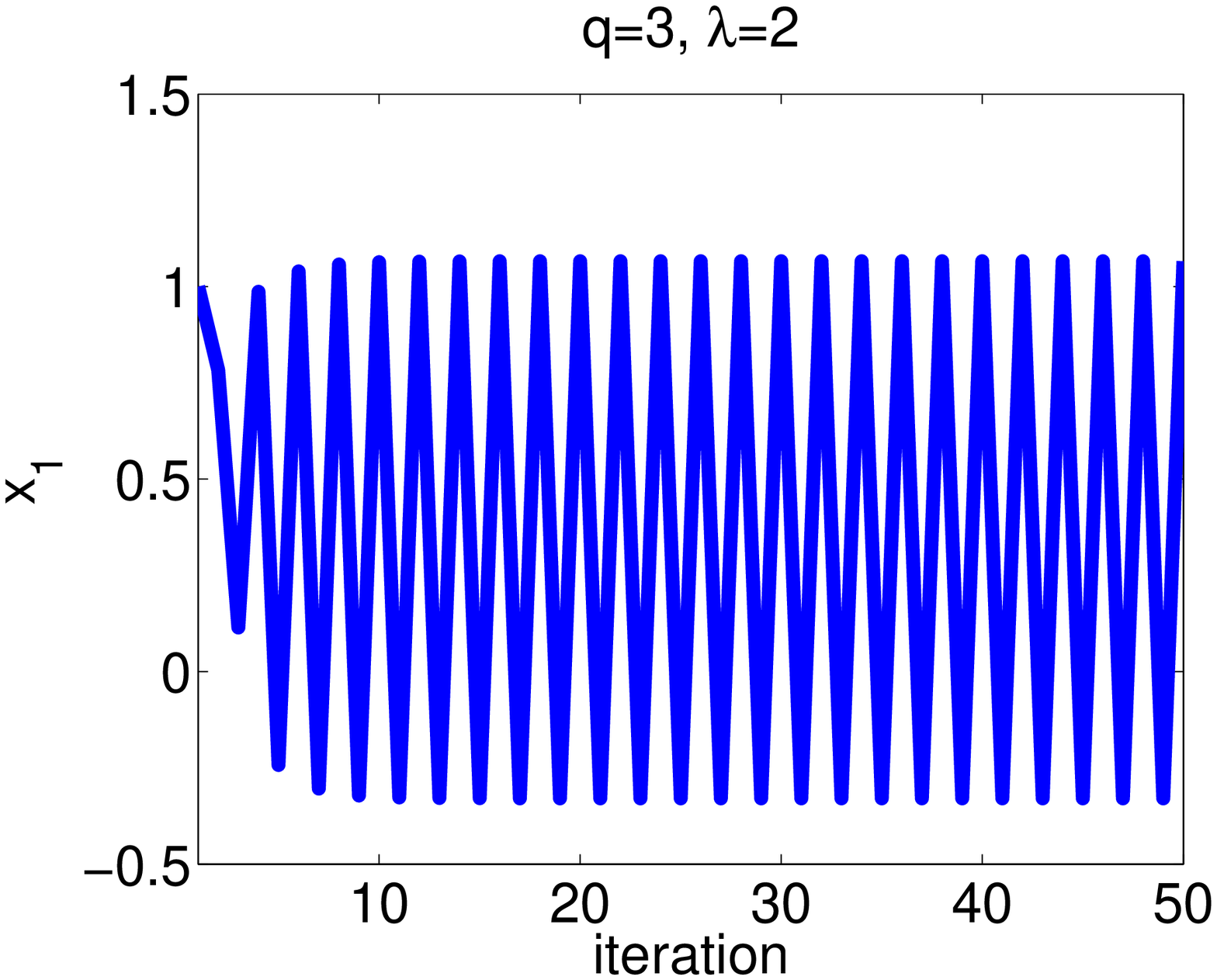}\\[-0.25cm]
  \caption{{\small Illustration of the failure of the fixed point iteration $\mathbf x
= \mathbf v - \lambda \|\mathbf x\|_q^{1-q} {\mathbf x}^{(q-1)}$ for
solving (\ref{eq:subproblem}). We set $\mathbf v=[1, 3]^{\rm T}$ and
the starting point $\mathbf x=[1, 3]^{\rm T}$. The vertical axis
denotes the values of $x_1$ during the
iterations.}}\label{fig:fp:fail}
\end{figure}

It follows from Lemma~\ref{lemma:sign} that i) if $v_i=0$ then
$x_i^*=0$; and ii) $\pi_q(\mathbf v)$ can be easily obtained from
$\pi_q(|\mathbf v|)$. Thus, we can restrict our following discussion
to $\mathbf v> \mathbf 0$, i.e., $v_i>0, \forall i$. It is clear
that, the analysis can be easily extended to the general $\mathbf v$.
The optimality condition in (\ref{eq:x:nonzero}) indicates that
$\mathbf x^*$ might be solved via the fixed point iteration
$$\mathbf x =\eta(\mathbf x) \equiv \mathbf v - \lambda \|\mathbf x\|_q^{1-q}
{\mathbf x}^{(q-1)},$$ which is, however, not guaranteed to converge
(see Figure~\ref{fig:fp:fail} for examples), as $\eta(\cdot)$ is not
necessarily a contraction
mapping~\cite[Proposition~3]{Kowalski:2009}. In addition, $\mathbf
x^*$ cannot be trivially solved by firstly guessing $c=\|\mathbf
x\|_q^{1-q}$ and then finding the root of $\mathbf x + \lambda c
{\mathbf x}^{(q-1)} = \mathbf v$, as when $c$ increases, the values
of $\mathbf x$ obtained from $\mathbf x + \lambda c {\mathbf
x}^{(q-1)} = \mathbf v$ decrease, so that $c=\|\mathbf x\|_q^{1-q}$
increases as well (note, $1-q<0$).

\subsection{Computing the Optimal Solution $\mathbf x^*$ by Zero Finding}

In the following, we show that $\mathbf x^*$ can be obtained by
solving two zero finding problems. Below, we construct our first
auxiliary function $h_c^v(\cdot)$ and reveal its properties:

\begin{definition}[Auxiliary Function $h_c^v(\cdot)$ ]\label{def:auxiliary-1}
Let $c>0$, $1 < q < \infty$, and $v >0$. We define the auxiliary
function $h_c^v(\cdot)$ as follows:
\begin{equation}\label{eq:h}
    h_c^v(x)= x + c x^{q-1} - v, 0\leq x \leq v.
\end{equation}
\end{definition}

\begin{lemma}\label{lemma:hf}
Let $c>0$, $1 < q < \infty$, and $v >0$. Then, $h_c^v(\cdot)$ has a
unique root in the interval $(0, v)$.
\end{lemma}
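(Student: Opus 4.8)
The plan is to prove Lemma~\ref{lemma:hf} by a standard continuity–monotonicity argument applied to the function $h_c^v$ on the closed interval $[0,v]$. First I would record that $h_c^v$ is continuous on $[0,v]$ (indeed it is $C^\infty$ on $(0,\infty)$ and continuous at $0$ since $q-1>0$, so $x^{q-1}\to 0$), which lets us apply the Intermediate Value Theorem. Then I would evaluate $h_c^v$ at the two endpoints: at $x=0$ we get $h_c^v(0) = 0 + c\cdot 0^{q-1} - v = -v < 0$ because $v>0$, and at $x=v$ we get $h_c^v(v) = v + c v^{q-1} - v = c v^{q-1} > 0$ because $c>0$, $v>0$, and $q-1>0$. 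By the Intermediate Value Theorem there exists at least one root $x_0 \in (0,v)$; note the root is strictly interior since the endpoint values are strictly nonzero.

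For uniqueness I would show $h_c^v$ is strictly increasing on $[0,v]$. Differentiating for $x>0$ gives $(h_c^v)'(x) = 1 + c(q-1)x^{q-2}$. Since $c>0$, $q-1>0$, and $x^{q-2}>0$ for $x>0$ (this holds whether $q-2$ is positive, zero, or negative, as long as $x>0$), we have $(h_c^v)'(x) \geq 1 > 0$ on $(0,v)$. Hence $h_c^v$ is strictly increasing on $[0,v]$, so it can vanish at most once. Combining existence and uniqueness, $h_c^v$ has a unique root in $(0,v)$, as claimed.

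The argument is essentially routine; there is no serious obstacle. The only point requiring a moment's care is the behavior of $x^{q-1}$ and its derivative $x^{q-2}$ near $x=0$ when $1<q<2$, where $x^{q-2}$ blows up; but since we only need $(h_c^v)'(x)>0$ on the open interval $(0,v)$ (to conclude strict monotonicity on the closed interval via continuity), and since the blow-up of the derivative is in the favorable direction ($+\infty$, not $-\infty$), this causes no difficulty. One could alternatively avoid derivatives entirely by observing directly that $x\mapsto x$ and $x\mapsto cx^{q-1}$ are both strictly increasing on $[0,v]$, so their sum is, which gives uniqueness without any differentiability discussion — I would likely present it this way for cleanliness. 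A remark worth inserting afterwards is that this unique root is exactly the value one would assign to the $i$-th coordinate $x_i^*$ once the correct constant $c = \lambda\|\mathbf x^*\|_q^{1-q}$ is known (cf. \eqref{eq:x:comp}), which motivates the second zero-finding problem for determining $c$ itself.
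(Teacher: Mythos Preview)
Your proof is correct and follows essentially the same approach as the paper: continuity of $h_c^v$ on $[0,v]$, the sign change between the endpoint values $h_c^v(0)=-v<0$ and $h_c^v(v)=cv^{q-1}>0$, and strict monotonicity to conclude existence and uniqueness via the Intermediate Value Theorem. The paper's version is terser (it simply asserts strict monotonicity without writing out the derivative), so your additional discussion of $x^{q-2}$ near $0$ and the alternative derivative-free argument are harmless elaborations rather than a different route.
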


\begin{proof} It is clear that $h_c^v(\cdot)$ is continuous and
strictly increasing in the interval $[0, v]$, $h_c^v(0)=-v<0$, and
$h_c^v(v)= c v^{q-1}
>0$. According to the Intermediate Value Theorem, $h_c^v(\cdot)$ has a
unique root lying in the interval $(0, v)$. This concludes the proof.
\end{proof}

\begin{corollary}\label{corolary:hsf}
Let $\mathbf x, \mathbf v \in \mathbb{R}^n$, $c>0$, $1 < p <
\infty$, and $\mathbf v >\mathbf 0$. Then, the function
\begin{equation}\label{eq:simple:problem}
    \varphi_c^{\mathbf v}(\mathbf x)=\mathbf x + c \mathbf x^{(q-1)}
    -\mathbf v, \mathbf 0 < \mathbf x
    < \mathbf v
\end{equation}
has a unique root.
\end{corollary}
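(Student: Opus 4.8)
The plan is to reduce the vector statement to the scalar result already established in Lemma~\ref{lemma:hf}. First I would observe that the equation $\varphi_c^{\mathbf v}(\mathbf x) = \mathbf 0$ is separable across coordinates: the $i$-th component reads $x_i + c\, x_i^{q-1} - v_i = 0$ subject to $0 < x_i < v_i$, which is precisely $h_c^{v_i}(x_i) = 0$ with $h_c^{v_i}$ the auxiliary function of Definition~\ref{def:auxiliary-1} (the role of $v$ there being played by $v_i$). Since $\mathbf v > \mathbf 0$ by hypothesis, every $v_i > 0$, so for each $i$ the hypotheses of Lemma~\ref{lemma:hf} are satisfied with $c > 0$ and $1 < q < \infty$ (the ``$p$'' appearing in the statement of the corollary should read $q$).

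Next I would invoke Lemma~\ref{lemma:hf} coordinatewise: for each $i$ there is a unique $x_i^* \in (0, v_i)$ with $h_c^{v_i}(x_i^*) = 0$. Assembling $\mathbf x^* = (x_1^*, \ldots, x_n^*)$ produces a point with $\mathbf 0 < \mathbf x^* < \mathbf v$ and $\varphi_c^{\mathbf v}(\mathbf x^*) = \mathbf 0$, establishing existence. For uniqueness, note that the constraint set in \eqref{eq:simple:problem} is exactly the product of the open intervals $(0, v_i)$; hence any root $\mathbf x$ of $\varphi_c^{\mathbf v}$ lying in this box must have each $x_i$ solving $h_c^{v_i}(x_i) = 0$ in $(0, v_i)$, and the uniqueness clause of Lemma~\ref{lemma:hf} forces $x_i = x_i^*$ for every $i$, i.e.\ $\mathbf x = \mathbf x^*$.

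There is essentially no real obstacle: the corollary is a straightforward coordinatewise lifting of Lemma~\ref{lemma:hf}. The only points meriting a sentence of care are verifying that the admissible region in \eqref{eq:simple:problem} coincides with the product of the intervals on which the scalar lemma operates, and flagging the $p/q$ typographical slip. Alternatively, one could re-derive uniqueness directly from the strict monotonicity of each $h_c^{v_i}$ (already noted in the proof of Lemma~\ref{lemma:hf}), but simply reusing the lemma verbatim is cleaner.
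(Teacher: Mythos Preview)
Your proposal is correct and matches the paper's approach: the corollary is stated immediately after Lemma~\ref{lemma:hf} without proof, precisely because it is the coordinatewise lifting you describe. Your remark about the $p/q$ typo is also apt.
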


Let $\mathbf x^*$ be the optimal solution satisfying
(\ref{eq:x:nonzero}). Denote $c^* = \lambda \|\mathbf x^*\|_q^{1-q}$.
It follows from Lemma~\ref{lemma:sign} and
Corollary~\ref{corolary:hsf} that $\mathbf x^*$ is the unique root of
$\varphi_{c^*}^{\mathbf v} (\cdot)$ defined in
(\ref{eq:simple:problem}), provided that the optimal $c^*$ is known.
Our methodology for computing $\mathbf x^*$ is to first compute the
optimal $c^*$ and then compute $\mathbf x^*$ by computing the root of
$\varphi_{c^*}^{\mathbf v}(\cdot)$. Next, we show how to compute the
optimal $c^*$ by solving a single variable zero finding problem. We
need our second auxiliary function $\omega(\cdot)$ defined as
follows:

\begin{definition}[Auxiliary Function $\omega(\cdot)$] \label{def:auxiliary-2}
Let $1 < q < \infty $ and $v >0$. We define the auxiliary function
$\omega(\cdot)$ as follows:
\begin{equation}\label{eq:omega}
    c=\omega(x)= (v-x) /  x^{q-1}, 0< x \leq v.
\end{equation}
\end{definition}

\begin{lemma}\label{lemma:omegaf}
In the interval $(0, v]$, $c=\omega(x)$ is i) continuously
differentiable, ii) strictly decreasing, and iii) invertible.
Moreover, in the domain $[0, \infty)$, the inverse function
$x=\omega^{-1}(c)$ is continuously differentiable and strictly
decreasing.
\end{lemma}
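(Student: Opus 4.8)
The plan is to work directly from the closed form $\omega(x)=(v-x)/x^{q-1}=(v-x)\,x^{1-q}$ and to prove the four assertions in the order they are stated. For (i), I would note that on $(0,v]$ the factor $x\mapsto x^{1-q}$ is $C^\infty$ (its argument is positive) and $x\mapsto v-x$ is a polynomial, so $\omega$, being their product, is $C^\infty$, hence in particular continuously differentiable, on $(0,v]$. For (ii), I would differentiate to get $\omega'(x) = -x^{1-q}+(v-x)(1-q)x^{-q} = x^{-q}\bigl[(q-2)x-(q-1)v\bigr]$. Since $x^{-q}>0$, the sign of $\omega'(x)$ is that of $(q-2)x-(q-1)v$, and a short case split finishes it: if $1<q\le 2$ then $(q-2)x\le 0$ and $-(q-1)v<0$, so the bracket is negative; if $q>2$ then $x\le v$ gives $(q-2)x\le(q-2)v$, hence the bracket is $\le(q-2)v-(q-1)v=-v<0$. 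Thus $\omega'(x)<0$ on all of $(0,v]$ and $\omega$ is strictly decreasing. (Alternatively one can avoid calculus here: the numerator $v-x$ is nonnegative and strictly decreasing while the denominator $x^{q-1}$ is positive and strictly increasing, which already forces strict decrease.)

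For (iii), a continuous strictly decreasing function on an interval is injective, so $\omega$ is invertible onto its image. To pin down the image I would observe that $\omega(v)=0$ and that $\omega(x)=(v-x)x^{1-q}\to+\infty$ as $x\downarrow 0$ (the numerator tends to $v>0$ and $x^{1-q}\to+\infty$ because $q-1>0$); by the Intermediate Value Theorem together with monotonicity, $\omega$ maps $(0,v]$ bijectively onto $[0,\infty)$, so $x=\omega^{-1}(c)$ is well defined on $[0,\infty)$. The inverse of a strictly decreasing bijection is strictly decreasing, which gives that half of the final claim.

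It remains to show $\omega^{-1}$ is continuously differentiable on $[0,\infty)$. On $(0,\infty)$ this is immediate from the inverse function theorem, since $\omega$ is $C^1$ and $\omega'\neq 0$ throughout $(0,v]$, giving $(\omega^{-1})'(c)=1/\omega'(\omega^{-1}(c))$. The one subtlety is the endpoint $c=0$, which corresponds to $x=v$, a boundary rather than interior point of the domain $(0,v]$, where the inverse function theorem does not directly apply. I would resolve this by extending $\omega$ to $\tilde\omega(x)=(v-x)x^{1-q}$ on all of $(0,\infty)$; the same derivative formula gives $\tilde\omega'(v)=-v^{1-q}\neq 0$, so $\tilde\omega$ is a $C^1$ diffeomorphism from a two-sided neighborhood of $x=v$ onto a two-sided neighborhood of $c=0$, and restricting shows $\omega^{-1}$ is $C^1$ up to and including (one-sidedly) $c=0$. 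Combining the two cases yields continuous differentiability on $[0,\infty)$.

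The main obstacle is essentially bookkeeping rather than depth: keeping the sign of $\omega'$ straight across the two regimes $1<q\le 2$ and $q>2$, and handling the boundary point $x=v$ (equivalently $c=0$) carefully when invoking the inverse function theorem, for which the extension of $\omega$ past $x=v$ is the clean fix.
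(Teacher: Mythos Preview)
Your argument is correct and follows essentially the same route as the paper: verify that $\omega$ is continuously differentiable on $(0,v]$ with $\omega'(x)<0$, and then invoke the Inverse Function Theorem. Your treatment is in fact more careful than the paper's brief sketch---in particular, you explicitly identify the range as $[0,\infty)$ and handle the endpoint $c=0$ by extending $\omega$ past $x=v$, details the paper simply elides.
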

\begin{proof} It is easy to verify that, in the interval $(0, v]$, $c=\omega(x)$ is
continuously differentiable with a non-positive gradient, i.e.,
$\omega'(x) < 0$. Therefore, the results follow from the Inverse
Function Theorem.
\end{proof}

It follows from Lemma~\ref{lemma:omegaf} that given the optimal
$c^*$ and $\mathbf v$, the optimal $\mathbf x^*$ can be computed via
the inverse function $\omega^{-1}(\cdot)$, i.e., we can represent
$\mathbf x^*$ as a function of $c^*$. Since $\lambda \|\mathbf
x^*\|_q^{1-q}-c^* =0$ by the definition of $c^*$, the optimal $c^*$
is a root of our third auxiliary function $\phi(\cdot)$ defined as
follows:

\begin{definition}
[Auxiliary Function $\phi(\cdot)$] \label{def:auxiliary-3} Let $1 < q
< \infty$, $0<\lambda < \|\mathbf v\|_{\bar q}$, and $\mathbf v
>\mathbf 0$. We define the auxiliary function $\phi(\cdot)$ as follows:
\begin{equation}\label{eq:phi:c}
    \phi(c)=\lambda \psi(c)-c, c \geq 0,
\end{equation}
where
\begin{equation}\label{eq:psi:c}
    \psi(c)=\left(\sum_{i=1}^n
    (\omega_i^{-1}(c))^q\right)^{\frac{1-q}{q}},
\end{equation}
and $\omega_i^{-1}(c)$ is the inverse function of
\begin{equation}\label{eq:c:x}
    \omega_i(x)=(v_i-x) / x^{q-1}, 0< x \leq v_i.
\end{equation}
\end{definition}

Recall that we assume $0<\lambda < \|\mathbf v\|_{\bar q}$
(otherwise the optimal solution is given by zero from
Theorem~\ref{theorem:solution}). The following lemma summarizes the
key properties of the auxiliary function $\phi(\cdot)$:

\begin{lemma}\label{lemma:lessThanZero}
Let $1 < q < \infty$, $0<\lambda < \|\mathbf v\|_{\bar q}$, $\mathbf
v >\mathbf 0$, and
\begin{equation}\label{eq:epsilon}
    \epsilon=(\|\mathbf v\|_{\bar q} - \lambda) / \|\mathbf v\|_{\bar q}.
\end{equation}
Then, $\phi(\cdot)$ is continuously differentiable in the interval
$[0, \infty)$. Moreover, we have $$\phi(0)=\lambda \| \mathbf
v\|_q^{1-q}
>0,\phi(\overline c) \leq 0,$$ where
\begin{equation}\label{eq:c:tilde}
    \overline c= \max_i c_i,
\end{equation}
\begin{equation}\label{eq:ci:vi}
    c_i=\omega_i(v_i \epsilon), i=1, 2, \ldots, n.
\end{equation}
\end{lemma}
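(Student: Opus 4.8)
The plan is to verify the three claims in order: (1) continuous differentiability of $\phi$ on $[0,\infty)$; (2) the strict inequality $\phi(0)>0$ with the explicit value $\lambda\|\mathbf v\|_q^{1-q}$; and (3) $\phi(\overline c)\le 0$. The first two are quick; the third is the real content.

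For (1), I would note that by Lemma~\ref{lemma:omegaf} each inverse function $\omega_i^{-1}(c)$ is continuously differentiable on $[0,\infty)$, and moreover $\omega_i^{-1}(0)=v_i>0$, so on $[0,\infty)$ we have $\omega_i^{-1}(c)>0$; hence $\sum_{i=1}^n(\omega_i^{-1}(c))^q$ is a continuously differentiable, strictly positive function of $c$, and raising it to the fixed power $\tfrac{1-q}{q}$ preserves continuous differentiability. Thus $\psi$, and therefore $\phi(c)=\lambda\psi(c)-c$, is continuously differentiable on $[0,\infty)$. For (2), I would evaluate $\omega_i^{-1}(0)$: from $\omega_i(x)=(v_i-x)/x^{q-1}$ we get $\omega_i(v_i)=0$, so $\omega_i^{-1}(0)=v_i$. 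Therefore $\psi(0)=\big(\sum_i v_i^q\big)^{(1-q)/q}=\|\mathbf v\|_q^{1-q}$, giving $\phi(0)=\lambda\|\mathbf v\|_q^{1-q}>0$ since $\lambda>0$.

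The main step is (3). Here I would unwind the definitions. By \eqref{eq:ci:vi}, $c_i=\omega_i(v_i\epsilon)$, so by definition of the inverse, $\omega_i^{-1}(c_i)=v_i\epsilon$. Since $x=\omega_i^{-1}(c)$ is strictly decreasing (Lemma~\ref{lemma:omegaf}) and $\overline c=\max_i c_i\ge c_i$, we get $\omega_i^{-1}(\overline c)\le\omega_i^{-1}(c_i)=v_i\epsilon$ for every $i$. Plugging into $\psi$ and using that $\tfrac{1-q}{q}<0$ reverses the inequality:
\begin{equation*}
    \psi(\overline c)=\left(\sum_{i=1}^n(\omega_i^{-1}(\overline c))^q\right)^{\frac{1-q}{q}}\le\left(\sum_{i=1}^n(v_i\epsilon)^q\right)^{\frac{1-q}{q}}=\epsilon^{1-q}\|\mathbf v\|_q^{1-q}.
\end{equation*}
So it suffices to show $\lambda\epsilon^{1-q}\|\mathbf v\|_q^{1-q}\le\overline c$, i.e. to produce a good lower bound on $\overline c=\max_i c_i$. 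I would estimate $c_i=\omega_i(v_i\epsilon)=(v_i-v_i\epsilon)/(v_i\epsilon)^{q-1}=(1-\epsilon)\epsilon^{1-q}v_i^{2-q}$. Taking the maximum over $i$ and recalling $1-\epsilon=\lambda/\|\mathbf v\|_{\bar q}$ from \eqref{eq:epsilon}, the desired inequality reduces to
\begin{equation*}
    \frac{\lambda}{\|\mathbf v\|_{\bar q}}\max_i v_i^{2-q}\ge\lambda\|\mathbf v\|_q^{1-q},
\end{equation*}
i.e. $\max_i v_i^{2-q}\ge\|\mathbf v\|_{\bar q}\,\|\mathbf v\|_q^{1-q}$. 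The hard part will be confirming this last norm inequality cleanly for all $1<q<\infty$ (the exponent $2-q$ changes sign at $q=2$, so the case analysis $q\le 2$ versus $q\ge 2$, using monotonicity of $\ell_q$ norms in $q$ and the normalization built into $\epsilon$, is where the care is needed); I expect it to come down to comparing $\ell_q$ and $\ell_{\bar q}$ norms against the $\ell_\infty$ (or $\ell_1$) norm via standard norm-inequality estimates, possibly with a cleaner route that avoids splitting into cases. Once that inequality is in hand, chaining the displays gives $\lambda\psi(\overline c)\le\overline c$, i.e. $\phi(\overline c)\le 0$, completing the proof.
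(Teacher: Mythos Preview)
Parts (1) and (2) are fine and match the paper. The key step (3), however, has a sign error that is not cosmetic: from $\omega_i^{-1}(\overline c)\le v_i\epsilon$ you get $\sum_i(\omega_i^{-1}(\overline c))^q\le\sum_i(v_i\epsilon)^q$, and since $(1-q)/q<0$ the inequality \emph{reverses}, yielding
\[
\psi(\overline c)=\Bigl(\sum_i(\omega_i^{-1}(\overline c))^q\Bigr)^{\frac{1-q}{q}}\;\ge\;\Bigl(\sum_i(v_i\epsilon)^q\Bigr)^{\frac{1-q}{q}}=\epsilon^{1-q}\|\mathbf v\|_q^{1-q},
\]
not $\le$. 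So the upper bound on $x_i=\omega_i^{-1}(\overline c)$ gives a \emph{lower} bound on $\psi(\overline c)$, which is useless for showing $\lambda\psi(\overline c)\le\overline c$. The subsequent ``norm inequality'' you plan to verify is therefore aimed at the wrong target; even if true, it would not close the argument.

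What is missing is a \emph{lower} bound on each $x_i$. The paper obtains this by feeding the upper bound $x_i\le v_i\epsilon$ back into the defining relation $\overline c=\omega_i(x_i)=(v_i-x_i)/x_i^{q-1}$: since $v_i-x_i\ge v_i(1-\epsilon)$, one gets $\overline c\ge v_i(1-\epsilon)/x_i^{q-1}$, hence $x_i\ge\bigl(v_i(1-\epsilon)/\overline c\bigr)^{1/(q-1)}$. This lower bound, after raising to the negative power $(1-q)/q$, produces the correct upper bound $\psi(\overline c)\le\overline c/\bigl((1-\epsilon)\|\mathbf v\|_{\bar q}\bigr)$, and then $1-\epsilon=\lambda/\|\mathbf v\|_{\bar q}$ gives $\phi(\overline c)\le 0$ directly, with no case split on $q\lessgtr 2$ and no auxiliary norm inequality needed.
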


 \begin{proof} From Lemma~\ref{lemma:omegaf}, the function
$\omega_i^{-1}(c)$ is continuously differentiable in $[0, \infty)$.
It is easy to verify that $\omega_i^{-1}(c) >0, \forall c \in [0,
\infty)$. Thus, $\phi(\cdot)$ in~(\ref{eq:phi:c}) is continuously
differentiable in $[0, \infty)$.

It is clear that $\phi(0)=\lambda \|\mathbf v\|_q^{1-q}
>0$. Next, we show $\phi(\overline c) \leq 0$. Since $0<\lambda <
\|\mathbf v\|_{\bar q}$, we have
\begin{equation}\label{epsilon:range}
    0 <\epsilon <1.
\end{equation}
It follows from (\ref{eq:c:x}), (\ref{eq:c:tilde}), (\ref{eq:ci:vi})
and (\ref{epsilon:range}) that $0 < c_i \leq \overline c, \forall
i$. Let $\mathbf x=[x_1, x_2, \ldots, x_n]^{\rm T}$ be the root of
$\varphi_{\overline c}^{\mathbf v} (\cdot)$ (see
Corollary~\ref{corolary:hsf}). Then, $x_i=\omega^{-1}_i(\overline
c)$. Since $\omega^{-1}_i(\cdot)$ is strictly decreasing (see
Lemma~\ref{lemma:omegaf}), $c_i \leq \overline c$, $v_i
\epsilon=\omega_i^{-1}(c_i)$, and $x_i=\omega_i^{-1}(\overline c)$,
we have
\begin{equation}\label{eq:x:upper:bound}
    x_i \leq v_i \epsilon.
\end{equation}
Combining (\ref{eq:c:x}), (\ref{eq:x:upper:bound}), and $\overline c=
\omega_i(x_i)$, we have $ \overline c \geq
v_i(1-\epsilon)/x_i^{q-1}$, since $\omega_i(\cdot)$ is strictly
decreasing. It follows that $x_i \geq
\left(\frac{v_i(1-\epsilon)}{\overline c}\right)^{\frac{1}{q-1}}$.
Thus, the following holds:
\begin{equation*}
    \psi(\overline c)=  \left(\sum_{i=1}^n (\omega_i^{-1}(\overline c))^q
    \right)^{\frac{1-q}{q}} = \left(\sum_{i=1}^n x_i^q \right)^{\frac{1-q}{q}}
    \leq \frac{\overline c}{\|\mathbf v\|_{\bar q} (1-\epsilon)},
\end{equation*}
which leads to
\begin{equation*}\label{eq:phi:tilde:c}
    \phi(\overline c) =\lambda \psi(\overline c) -\overline c \leq \overline c
    \left(\frac{\lambda}{\|\mathbf v\|_{\bar q} (1-\epsilon)}-1 \right)=0,
\end{equation*}
where the last equality follows from (\ref{eq:epsilon}).
\end{proof}

\begin{corollary}\label{corolary:lower}
Let $1 < q < \infty$, $0<\lambda < \|\mathbf v\|_{\bar q}$, $\mathbf
v >\mathbf 0$, and $ \underline c= \min_i c_i$, where $c_i$'s are
defined in (\ref{eq:ci:vi}). We have $0<\underline c \leq \overline
c$ and $\phi(\underline c) \geq 0$.
\end{corollary}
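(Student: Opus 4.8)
The plan is to obtain Corollary~\ref{corolary:lower} as the mirror image of Lemma~\ref{lemma:lessThanZero}: since $\underline c=\min_i c_i$ plays the role dual to $\overline c=\max_i c_i$, every inequality in the proof of Lemma~\ref{lemma:lessThanZero} should simply reverse. First I would dispose of the easy part $0<\underline c\le\overline c$. From $0<\lambda<\|\mathbf v\|_{\bar q}$ we have $0<\epsilon<1$, so each $v_i\epsilon$ lies strictly inside $(0,v_i)$ and hence $c_i=\omega_i(v_i\epsilon)=(v_i-v_i\epsilon)/(v_i\epsilon)^{q-1}>0$ for every $i$; consequently $0<\min_i c_i\le\max_i c_i$, which is the first assertion.

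For $\phi(\underline c)\ge 0$, let $\mathbf x$ be the unique root of $\varphi_{\underline c}^{\mathbf v}(\cdot)$ (Corollary~\ref{corolary:hsf}), so that componentwise $x_i=\omega_i^{-1}(\underline c)$. Since $\underline c\le c_i$, $v_i\epsilon=\omega_i^{-1}(c_i)$, and $\omega_i^{-1}(\cdot)$ is strictly decreasing (Lemma~\ref{lemma:omegaf}), we get $x_i\ge v_i\epsilon$, the analogue of (\ref{eq:x:upper:bound}) with the inequality reversed. Substituting $x_i\ge v_i\epsilon$ back into $\underline c=\omega_i(x_i)=(v_i-x_i)/x_i^{q-1}$ from (\ref{eq:c:x}) gives $v_i-x_i\le v_i(1-\epsilon)$, hence $\underline c\le v_i(1-\epsilon)/x_i^{q-1}$, i.e. $x_i\le\big(v_i(1-\epsilon)/\underline c\big)^{1/(q-1)}$. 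Raising to the $q$-th power, summing over $i$, and using $\bar q=q/(q-1)$ yields $\sum_i x_i^q\le\big((1-\epsilon)/\underline c\big)^{\bar q}\|\mathbf v\|_{\bar q}^{\bar q}$. Because $\psi(\underline c)=\big(\sum_i x_i^q\big)^{(1-q)/q}$ (see (\ref{eq:psi:c})) and $t\mapsto t^{(1-q)/q}$ is strictly decreasing on $(0,\infty)$, this bound flips and, after simplifying the exponent $\bar q(1-q)/q=-1$, becomes $\psi(\underline c)\ge\underline c/\big((1-\epsilon)\|\mathbf v\|_{\bar q}\big)$. Finally $\phi(\underline c)=\lambda\psi(\underline c)-\underline c\ge\underline c\big(\lambda/((1-\epsilon)\|\mathbf v\|_{\bar q})-1\big)=0$, where the last equality uses $1-\epsilon=\lambda/\|\mathbf v\|_{\bar q}$ from (\ref{eq:epsilon}).

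I do not expect a genuine obstacle; the work is entirely sign bookkeeping. The one place that needs care is the chain of monotone maps: $\omega_i$ and $\omega_i^{-1}$ are decreasing, $t\mapsto t^{q-1}$ and $t\mapsto t^{q}$ are increasing, and $t\mapsto t^{(1-q)/q}$ is decreasing, so a single misjudged direction would collapse the argument — keeping the template of Lemma~\ref{lemma:lessThanZero} alongside is the safeguard. It is also worth a quick sanity check that the derived bounds on $x_i$ stay consistent with $\mathbf x\in(\mathbf 0,\mathbf v)$, which Corollary~\ref{corolary:hsf} already guarantees.
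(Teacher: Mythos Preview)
Your proposal is correct and is exactly the argument the paper has in mind: the corollary is stated without proof precisely because it is the mirror of Lemma~\ref{lemma:lessThanZero}, obtained by replacing $\overline c=\max_i c_i$ with $\underline c=\min_i c_i$ and reversing each inequality, which is what you carry out. Your bookkeeping of the monotone maps (in particular the flip when applying $t\mapsto t^{(1-q)/q}$ and the identity $\bar q(1-q)/q=-1$) is accurate.
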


Following Lemma~\ref{lemma:lessThanZero} and
Corollary~\ref{corolary:lower}, we can find at least one root of
$\phi(\cdot)$ in the interval $[\underline c, \overline c]$. In the
following theorem, we show that $\phi(\cdot)$ has a unique root:

\begin{theorem}\label{theorem:singleZF}
Let $1 < q < \infty$, $0<\lambda < \|\mathbf v\|_{\bar q}$, and
$\mathbf v >\mathbf 0$. Then, in $[\underline c, \overline c]$,
$\phi(\cdot)$ has a unique root, denoted by $c^*$, and the root of
$\varphi_{c^*}^{\mathbf v}(\cdot)$ is the optimal solution to
(\ref{eq:subproblem}).
\end{theorem}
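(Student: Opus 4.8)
The plan is to deduce the uniqueness of the root of $\phi(\cdot)$ from the uniqueness of the minimizer of (\ref{eq:subproblem}) (Lemma~\ref{lemma:unique}), rather than attempting to establish monotonicity of $\phi$ directly. Existence of a root in $[\underline c, \overline c]$ is already in hand: by Lemma~\ref{lemma:lessThanZero} the function $\phi$ is continuous on $[0,\infty)$ with $\phi(\overline c)\le 0$, and by Corollary~\ref{corolary:lower} we have $0<\underline c\le\overline c$ and $\phi(\underline c)\ge 0$, so the Intermediate Value Theorem yields a root $c^*\in[\underline c,\overline c]$, necessarily with $c^*\ge\underline c>0$.

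For uniqueness I would first set up the correspondence between candidate constants and candidate solutions. For each $c>0$ let $\mathbf x(c)$ be the unique root of $\varphi_c^{\mathbf v}(\cdot)$ guaranteed by Corollary~\ref{corolary:hsf}; componentwise $x_i(c)=\omega_i^{-1}(c)\in(0,v_i)$, and since each $\omega_i^{-1}(\cdot)$ is strictly decreasing (Lemma~\ref{lemma:omegaf}), the map $c\mapsto\mathbf x(c)$ is injective. The key observation is that every root $c>0$ of $\phi$ produces an optimal solution of (\ref{eq:subproblem}): since $\psi(c)=\left(\sum_i x_i(c)^q\right)^{\frac{1-q}{q}}=\|\mathbf x(c)\|_q^{1-q}$ (using $\mathbf x(c)>\mathbf 0$), the equation $\phi(c)=0$ reads $c=\lambda\|\mathbf x(c)\|_q^{1-q}$, while $\varphi_c^{\mathbf v}(\mathbf x(c))=\mathbf 0$ reads $\mathbf x(c)+c\,\mathbf x(c)^{(q-1)}=\mathbf v$; substituting the former into the latter gives exactly the optimality equation (\ref{eq:x:nonzero}), so by Lemma~\ref{lemma:sign} the vector $\mathbf x(c)$ is \emph{the} optimal solution of (\ref{eq:subproblem}).

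Uniqueness of the root then follows immediately: if $c_1\ne c_2$ were both roots of $\phi$ in $(0,\infty)$, then $\mathbf x(c_1)\ne\mathbf x(c_2)$ (by injectivity) would both be optimal solutions of (\ref{eq:subproblem}), contradicting Lemma~\ref{lemma:unique}. Hence $\phi$ has at most one root on $(0,\infty)$, which together with the existence above shows it has exactly one, namely $c^*\in[\underline c,\overline c]$. Applying the implication of the previous paragraph to $c=c^*$ shows that the root of $\varphi_{c^*}^{\mathbf v}(\cdot)$ is the optimal solution of (\ref{eq:subproblem}), which finishes the proof.

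I expect the main obstacle to be resisting the temptation to prove uniqueness by showing $\phi$ is strictly decreasing on $[\underline c,\overline c]$: since $\phi(c)=\lambda\psi(c)-c$ is a difference of two strictly increasing functions of $c$ (both $\lambda\psi$ and the identity), a direct monotonicity argument would require a quantitative estimate such as $\lambda\psi'(c)<1$, whose verification is messy. The round-trip through the optimization problem — each admissible $c$ determines a vector via the first zero-finding problem, and the root condition $\phi(c)=0$ forces that vector to satisfy the optimality equation — bypasses this analysis entirely and is the crux of the argument.
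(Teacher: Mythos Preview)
Your proposal is correct and follows essentially the same route as the paper: existence via the Intermediate Value Theorem from $\phi(\underline c)\ge 0$ and $\phi(\overline c)\le 0$, and uniqueness by observing that any root $c>0$ of $\phi$ makes the root of $\varphi_c^{\mathbf v}(\cdot)$ satisfy the optimality condition (\ref{eq:x:nonzero}), so two distinct roots would yield two distinct minimizers of (\ref{eq:subproblem}), contradicting Lemma~\ref{lemma:unique}. Your packaging via the injective map $c\mapsto\mathbf x(c)$ is slightly cleaner, but the argument is the paper's.
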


 \begin{proof} From Lemma~\ref{lemma:lessThanZero} and
Corollary~\ref{corolary:lower}, we have $\phi(\overline c) \leq 0$
and $\phi(\underline c) \geq 0$. If either $\phi(\overline c) = 0$
or $\phi(\underline c) = 0$, $\overline c$ or $\underline c$ is a
root of $\phi(\cdot)$. Otherwise, we have $\phi(\underline c)
\phi(\overline c) <0$. As $\phi(\cdot)$ is continuous in $[0,
\infty)$, we conclude that $\phi(\cdot)$ has a root in $(\underline
c, \overline c)$ according to the Intermediate Value Theorem.

Next, we show that $\phi(\cdot)$ has a unique root in the interval
$[0, \infty)$. We prove this by contradiction. Assume that
$\phi(\cdot)$ has two roots: $0< c_1 <c_2$. From
Corollary~\ref{corolary:hsf}, $\varphi_{c_1}^{\mathbf v}(\cdot)$ and
$\varphi_{c_2}^{\mathbf v}(\cdot)$ have unique roots. Denote $\mathbf
x^1=[x_1^1, x_2^1, \ldots, x_n^1]^{\rm T}$ and $\mathbf x^2=[x_1^2,
x_2^2, \ldots, x_n^2]^{\rm T}$ as the roots of
$\varphi_{c_1}^{\mathbf v}(\cdot)$ and $\varphi_{c_2}^{\mathbf
v}(\cdot)$, respectively. We have $0< x_i^1, x_i^2 < v_i, \forall i$.
It follows from (\ref{eq:phi:c}-\ref{eq:c:x}) that
\begin{eqnarray}
   \mathbf x^1 + \lambda \|\mathbf x^1\|_q^{1-q} {\mathbf x^1}^{(q-1)} -
   \mathbf
   v = \mathbf 0, \nonumber \\
   \mathbf x^2 + \lambda \|\mathbf x^2\|_q^{1-q} {\mathbf x^2}^{(q-1)} -
   \mathbf
   v = \mathbf 0. \nonumber
\end{eqnarray}
According to Lemma~\ref{lemma:sign}, $\mathbf x^1$ and $\mathbf x^2$
are the optimal solution of (\ref{eq:subproblem}). From
Lemma~\ref{lemma:unique}, we have $\mathbf x^1=\mathbf x^2$.
However, since $x_i^1=\omega_i^{-1}(c_1)$,
$x_i^2=\omega_i^{-1}(c_2)$, $\omega_i^{-1}(\cdot)$ is a strictly
decreasing function in $[0, \infty)$ by Lemma~\ref{lemma:omegaf},
and $c_1 < c_2$, we have $x_i^1 > x_i^2, \forall i$. This leads to a
contradiction. Therefore, we conclude that $\phi(\cdot)$ has a
unique root in $[\underline c, \overline c]$.

From the above arguments, it is clear that, the root of
$\varphi_{c^*}^{\mathbf v}(\cdot)$ is the optimal solution to
(\ref{eq:subproblem}).
\end{proof}

\begin{remark} \label{remark:q=2}
When $q=2$, we have $\underline c=\overline c
=\frac{\lambda}{\|\mathbf v\|_2 -\lambda}$. It is easy to verify
that $\phi(\underline c)=\phi(\overline c)=0$ and
\begin{equation}\label{eq:p:2:solution}
    \pi_2(\mathbf v)=\frac{\|\mathbf v\|_2 - \lambda}{\|\mathbf v\|_2} \mathbf
    v.
\end{equation}
Therefore, when $q=2$, we obtain a closed-form solution.
\end{remark}


\subsection{Solving the Zero Finding Problem by Bisection}

Let $1 < q < \infty$, $0<\lambda < \|\mathbf v\|_{\bar q}$, $\mathbf
v
>\mathbf 0$, $\overline v= \max_i v_i$, $\underline v =\min_i v_i$,
and $\delta
>0$ be a small constant (e.g., $\delta=10^{-8}$ in our experiments).
When $q>2$, we have
$$\underline c =
\frac{1-\epsilon}{\epsilon^{q-1}\overline v^{q-2}} \quad \mbox{  and
} \quad \overline c = \frac{1-\epsilon}{\epsilon^{q-1}\underline
v^{q-2}}.$$ When $1 < q < 2$, we have
$$\underline c =
\frac{1-\epsilon}{\epsilon^{q-1}\underline v^{q-2}} \quad \mbox{ and
} \quad \overline c = \frac{1-\epsilon}{\epsilon^{q-1}\overline
v^{q-2}}.$$ If either $\phi(\overline c)=0$ or $\phi(\underline
c)=0$, $\overline c$ or $\underline c$ is the unique root of
$\phi(\cdot)$. Otherwise, we can find the unique root of
$\phi(\cdot)$ by bisection in the interval $(\underline c, \overline
c)$, which costs at most
$$N=\log_2 \frac{(1-\epsilon)|\overline v^{q-2}- \underline
v^{q-2}|}{\epsilon^{q-1}\overline v^{q-2} \underline v^{q-2}\delta}$$
iterations for achieving an accuracy of $\delta$. Let $[c_1, c_2]$ be
the current interval of uncertainty, and we have computed
$\omega_i^{-1}(c_1)$ and $\omega_i^{-1}(c_2)$ in the previous
bisection iterations. Setting $c=\frac{c_1+c_2}{2}$, we need to
evaluate $\phi(c)$ by computing $\omega_i^{-1}(c), i=1, 2, \ldots,
n$. It is easy to verify that $\omega_i^{-1}(c)$ is the root of
$h_c^{v_i}(\cdot)$ in the interval $(0, v_i)$. Since
$\omega_i^{-1}(\cdot)$ is a strictly decreasing function (see
Lemma~\ref{lemma:omegaf}), the following holds:
$$\omega_i^{-1}(c_2 )
< \omega_i^{-1}(c) < \omega_i^{-1}(c_1),$$ and thus
$\omega_i^{-1}(c)$ can be solved by bisection using at most
$$\log_2 \frac{\omega_i^{-1}(c_2)-\omega_i^{-1}(c_1)}{\delta} < \log_2
\frac{v_i}{\delta} \leq \log_2 \frac{\overline v}{\delta}$$ iterations
for achieving an accuracy of $\delta$. For given $\mathbf v,
\lambda$, and $\delta$, $N$ and $\overline v$ are constant, and thus
it costs $O(n)$ for finding the root of $\phi(\cdot)$. Once $c^*$,
the root of $\phi(\cdot)$ is found, it costs $O(n)$ flops to compute
$\mathbf x^*$ as the unique root of $\varphi_{c^*}^{\mathbf
v}(\cdot)$. Therefore, the overall time complexity for solving
(\ref{eq:subproblem}) is $O(n)$.

We have shown how to solve (\ref{eq:subproblem}) for $1 < q <
\infty$. For $q=1$, the problem (\ref{eq:subproblem}) is reduced to
the one used in the standard Lasso, and it has the following
closed-form solution~\cite{Beck:2009}:
\begin{equation}\label{eq:x:solution}
    \pi_1(\mathbf v)={\rm sgn}(\mathbf v) \odot \max(|\mathbf
    v|-\lambda,0).
\end{equation}
For $q=\infty$,  the problem (\ref{eq:subproblem}) can computed
via~\eqref{eq:x:solution}, as summarized in the following theorem:
\begin{theorem}\label{theorem:oneAndInf}
Let $q=\infty$, $\bar q=1$, and $0<\lambda < \|\mathbf v\|_{\bar
q}$. Then we have
\begin{equation}\label{eq:x:infty}
    \pi_{\infty}(\mathbf v)={\rm sgn}(\mathbf v) \odot \min( |\mathbf
    v|, t^* ),
\end{equation}
where $t^*$ is the unique root of
\begin{equation}\label{eq:zero:finding:l1}
    h(t)=\sum_{i=1}^n \max( |v_i| - t, 0) - \lambda.
\end{equation}
\end{theorem}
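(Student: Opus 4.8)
The plan is to exhibit the point $\mathbf x$ with $x_i={\rm sgn}(v_i)\min(|v_i|,t^*)$ as a minimizer of the strictly convex objective $g$ of \eqref{eq:subproblem} by verifying a first-order optimality inequality, and then to invoke Lemma~\ref{lemma:unique} to conclude that it is \emph{the} minimizer, which is exactly \eqref{eq:x:infty}. Since $\bar q=1$, the hypothesis $0<\lambda<\|\mathbf v\|_{\bar q}$ reads $0<\lambda<\|\mathbf v\|_1$; in particular $\mathbf v\neq\mathbf 0$.

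First I would settle the well-posedness of $t^*$. The function $h$ in \eqref{eq:zero:finding:l1} is continuous on $\mathbb R$, strictly decreasing on $(-\infty,\overline v)$ with $\overline v:=\|\mathbf v\|_\infty$ (the summand coming from an index of maximal modulus is strictly decreasing there), and constant equal to $-\lambda$ on $[\overline v,\infty)$. Since $h(0)=\|\mathbf v\|_1-\lambda>0$ and $h(\overline v)=-\lambda<0$, the Intermediate Value Theorem together with strict monotonicity yields a unique root $t^*$, and moreover $0<t^*<\overline v$. Consequently $|x_i|=\min(|v_i|,t^*)$ for all $i$, so $\|\mathbf x\|_\infty=\min(\overline v,t^*)=t^*$.

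Next comes the optimality certificate. Expanding the quadratic term, for every $\mathbf y$ we have
\[
 g(\mathbf y)-g(\mathbf x)=\tfrac12\|\mathbf y-\mathbf x\|_2^2+\langle \mathbf x-\mathbf v,\,\mathbf y-\mathbf x\rangle+\lambda\bigl(\|\mathbf y\|_\infty-\|\mathbf x\|_\infty\bigr)\ge\langle \mathbf x-\mathbf v,\,\mathbf y-\mathbf x\rangle+\lambda\bigl(\|\mathbf y\|_\infty-\|\mathbf x\|_\infty\bigr),
\]
so it suffices to show the last expression is nonnegative for all $\mathbf y$. By construction $x_i-v_i=-{\rm sgn}(v_i)\max(|v_i|-t^*,0)$, which vanishes outside $I:=\{i:|v_i|>t^*\}$; on $I$ we have $x_i={\rm sgn}(v_i)\,t^*$ and, because $h(t^*)=0$, $\sum_{i\in I}(|v_i|-t^*)=\lambda$. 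A short computation then gives
\[
 \langle \mathbf x-\mathbf v,\,\mathbf y-\mathbf x\rangle=\lambda t^*-\sum_{i\in I}(|v_i|-t^*)\,{\rm sgn}(v_i)\,y_i\;\ge\;\lambda t^*-\|\mathbf y\|_\infty\sum_{i\in I}(|v_i|-t^*)=\lambda t^*-\lambda\|\mathbf y\|_\infty,
\]
and since $t^*=\|\mathbf x\|_\infty$ this equals $-\lambda(\|\mathbf y\|_\infty-\|\mathbf x\|_\infty)$, so the above lower bound on $g(\mathbf y)-g(\mathbf x)$ is $\ge0$. Hence $\mathbf x$ minimizes $g$, and Lemma~\ref{lemma:unique} makes it the unique minimizer, which proves \eqref{eq:x:infty}.

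I expect the only delicate point to be the strict inequality $t^*<\overline v$, i.e.\ that $\|\mathbf x\|_\infty$ is exactly $t^*$ rather than something smaller; this is what pins down the truncation level in \eqref{eq:x:infty} and is the place where $\lambda<\|\mathbf v\|_{\bar q}$ is genuinely used. Everything else is routine bookkeeping. An alternative and shorter route would identify $\pi_\infty(\mathbf v)$ with $\mathbf v$ minus the Euclidean projection of $\mathbf v$ onto the $\ell_1$-ball of radius $\lambda$ (Moreau's decomposition, using that $\|\cdot\|_\infty$ is the support function of that ball) and then quote the well-known soft-thresholding form of the $\ell_1$-ball projection; I would nevertheless prefer the self-contained argument above since it relies on nothing beyond Lemma~\ref{lemma:unique}.
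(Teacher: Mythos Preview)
Your proof is correct and takes a genuinely different route from the paper. The paper proceeds via duality: it writes $\|\mathbf x\|_\infty=\max_{\|\mathbf y\|_1\le 1}\langle\mathbf y,\mathbf x\rangle$, obtains a saddle-point formulation, swaps the min and max by von Neumann's lemma, and after eliminating $\mathbf x$ lands on the Euclidean projection of $\mathbf v$ onto the $\ell_1$-ball of radius $\lambda$, for which the soft-thresholding formula is quoted from the literature. This is exactly the ``alternative route'' you sketch at the end (it is Moreau's decomposition in disguise). Your argument, by contrast, is direct and self-contained: you guess the minimizer and certify it via the inequality $g(\mathbf y)\ge g(\mathbf x)+\langle\mathbf x-\mathbf v,\mathbf y-\mathbf x\rangle+\lambda(\|\mathbf y\|_\infty-\|\mathbf x\|_\infty)$, using only $h(t^*)=0$ and the bound $\sum_{i\in I}(|v_i|-t^*)\,{\rm sgn}(v_i)\,y_i\le\lambda\|\mathbf y\|_\infty$. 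The paper's approach buys a clean connection to a well-studied projection problem and explains \emph{where} the threshold $t^*$ comes from; yours buys independence from external references and from the min-max machinery, relies only on Lemma~\ref{lemma:unique}, and makes the role of the hypothesis $\lambda<\|\mathbf v\|_1$ more transparent---it is precisely what forces $t^*<\overline v$ and hence $\|\mathbf x\|_\infty=t^*$.
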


 \begin{proof} Making use of the property that $\|\mathbf
x\|_{\infty} = \max_{\|\mathbf y\|_1 \leq 1} \langle \mathbf y,
\mathbf x  \rangle$,
we can rewrite (\ref{eq:subproblem}) in the case of $q=\infty$ as
\begin{equation}\label{eq:minmax}
    \min_{\mathbf x} \max_{\mathbf y: \|\mathbf y\|_1 \leq \lambda } s(\mathbf x, \mathbf y) \equiv \frac{1}{2} \|\mathbf x -\mathbf v\|_2^2 + \langle \mathbf y, \mathbf x
    \rangle.
\end{equation}
The function $s(\mathbf x, \mathbf y)$ is continuously
differentiable in both $\mathbf x$ and $\mathbf y$, convex in
$\mathbf x$ and concave in $\mathbf y$, and the feasible domains are
solids. According to the well-known von Neumann
Lemma~\cite{NEMIROVSKI:1994:ID6}, the min-max problem
(\ref{eq:minmax}) has a saddle point, and thus the minimization and
maximization can be exchanged. Setting the derivative of $s(\mathbf
x, \mathbf y)$ with respect to $\mathbf x$ to zero, we have
\begin{equation}\label{eq:x:y}
    \mathbf x = \mathbf v -\mathbf y.
\end{equation}
Thus we obtain the following problem:
\begin{equation}\label{eq:l1:proj}
    \min_{\mathbf y: \|\mathbf y\|_1 \leq \lambda} \frac{1}{2} \|\mathbf y -\mathbf
    v\|_2^2,
\end{equation}
which is the problem of the Euclidean projection onto the $\ell_1$
ball~\cite{BergFriedlander:2008,DUCHI:2009:icml,Liu:2009:icml}. It
has been shown that the optimal solution $\mathbf y^*$to
(\ref{eq:l1:proj}) for $\lambda < \|\mathbf v\|_{1}$ can be obtained
by first computing $t^*$ as the unique root of
(\ref{eq:zero:finding:l1}) in linear time, and then computing
$\mathbf y^*$ as
\begin{equation}\label{eq:y:solution}
    \mathbf y^*={\rm sgn}(\mathbf v) \odot \max( |\mathbf v| - t^*, 0).
\end{equation}
It follows from (\ref{eq:x:y}) and (\ref{eq:y:solution}) that
(\ref{eq:x:infty}) holds.
\end{proof}

We conclude this section by summarizing the results for solving the
$\ell_q$-regularized Euclidean projection in
Algorithm~\ref{algorithm:Eq}.

\begin{algorithm}
  \caption{Ep$_{q}$: $\ell_q$-regularized Euclidean projection}
  \label{algorithm:Eq}
\begin{algorithmic}[1]
  \REQUIRE $\lambda>0, q \ge 1, \mathbf v \in \mathbb{R}^n$
  \ENSURE  $\mathbf x^*= \pi_q(\mathbf v)=\arg \min_{\mathbf x \in \mathbb{R}^n}  \frac{1}{2} \|\mathbf x -\mathbf v\|_2^2 + \lambda
    \|\mathbf x\|_q$
    \STATE Compute $\bar q= \frac{q}{q-1}$
    \IF{$\|\mathbf v\|_{\bar q} \le \lambda $}
    \STATE Set $\mathbf x^*=\mathbf 0$, return
    \ENDIF
    \IF{$q=1$}
    \STATE Set $\mathbf x^*={\rm sgn}(\mathbf v) \odot \max(|\mathbf
    v|-\lambda,0)$
    \ELSIF{$q=2$}
    \STATE Set $\mathbf x^*=\frac{\|\mathbf v\|_2 - \lambda}{\|\mathbf v\|_2} \mathbf  v$
    \ELSIF{$q=\infty$}
          \STATE Obtain $t^*$, the unique root of $h(t)$, via the improved bisection method~\cite{Liu:2009:icml}
          \STATE Set $\mathbf x^*={\rm sgn}(\mathbf v) \odot \min( |\mathbf  v|, t^* )$
    \ELSE
          \STATE Compute $c^*$, the unique root of $\phi(c)$, via
                 bisection in the interval $[\underline c, \overline c]$
          \STATE Obtain $\mathbf x^*$ as the unique root of $\varphi_{c^*}^{\mathbf v}(\cdot)$
    \ENDIF
\end{algorithmic}
\end{algorithm}

\section{The Proposed Screening Method (Smin) for the $\ell_1/\ell_q$-regularized Problems}

In this section, we assume the loss function $\ell(\cdot)$ is the least square loss, i.e., we consider the following problem:
\begin{equation}\label{prob:primal}
\min_{{\bf W}\in\mathbb{R}^p}f({\bf W})=\frac{1}{2}\left\|Y-\sum_{i=1}^sB_i{\bf w}_i\right\|_2^2+\lambda\sum_{i=1}^s\|{\bf w}_i\|_q.
\end{equation}
The dual problem of (\ref{prob:primal}) takes the form as:
\begin{align}\label{prob:dual0}
\max_{\theta}\,\,& \frac{1}{2}\|Y\|_2^2-\frac{\lambda^2}{2}\left\|\theta-\frac{Y}{\lambda}\right\|_2^2,\\ \nonumber
\mbox{s.t.}\,\,&\|B_i^{\rm T}\theta\|_{\bar{q}}\leq1, i = 1,2,\ldots,s.
\end{align}
Let ${\bf X}^*(\lambda)$ and $\theta^*(\lambda)$ be the optimal solutions of problems (\ref{prob:primal}) and (\ref{prob:dual0}) respectively. The KKT conditions read as:
\begin{equation}\label{eqn:KKT1}
Y = \sum_{i=1}^sB_i{\bf X}_i^*(\lambda)+\lambda\theta^*(\lambda),
\end{equation}
\begin{equation}\label{eqn:KKT2}
B_i^{\rm T}\theta^*(\lambda)\in
\begin{cases}
\frac{{\bf X}^*_i(\lambda)}{\|{\bf X}^*_i(\lambda)\|_{\bar{q}}},\hspace{28mm}\mbox{if }{\bf X}_i^*(\lambda)\neq0,\\
{\bf U}_i,\,\,{\bf U}_i\in\mathbb{R}^{p_i}, \|{\bf U}_i\|_{\bar{q}}\leq1,\hspace{3mm}\mbox{if }{\bf X}_i^*(\lambda)=0,
\end{cases}
i=1,2,\ldots,s.
\end{equation}
In view of \eqref{eqn:KKT2}, we can see that
\begin{equation}\tag{R}\label{rule0}
\|B_i^{\rm T}\theta^*(\lambda)\|_{\bar{q}}<1\Rightarrow {\bf X}_i^*(\lambda)=0.
\end{equation}
In other words, if $\|B_i^{\rm T}\theta^*(\lambda)\|_{\bar{q}}<1$, then the KKT conditions imply that the coefficients of $B_i$ in the solution ${\bf X}^*(\lambda)$, that is, ${\bf X}^*_i(\lambda)$, are 0 and thus the $i^{th}$ group can be safely removed from the optimization of problem (\ref{prob:primal}). However, since $\theta^*(\lambda)$ is in general unknown, (\ref{rule0}) is not very helpful to discard inactive groups.
To this end, we will estimate a region $\mathcal{R}$ which contains $\theta^*(\lambda)$. Therefore, if $\max_{\theta\in\mathcal{R}}\|B_i^{\rm T}\theta\|_{\bar{q}}<1$, we can also conclude that ${\bf X}^*_i(\lambda)=0$ by (\ref{rule0}). As a result, the rule in (\ref{rule0}) can be relaxed as
\begin{equation}\tag{R$'$}\label{rule}
\varphi(\theta^*(\lambda),B_i):=\max_{\theta\in\mathcal{R}}\|B_i^{\rm T}\theta\|_{\bar{q}}<1\Rightarrow {\bf X}^*_i(\lambda)=0.
\end{equation}
In this paper, (\ref{rule}) serves as the cornerstone for constructing the proposed screening rules. From (\ref{rule}), we can see that screening rules with smaller $\varphi(\theta^*(\lambda),B_i)$ are more effective in  identifying inactive groups. To give a tight estimation of $\varphi(\theta^*(\lambda),B_i)$, we need to restrict the region $\mathcal{R}$ containing $\theta^*(\lambda)$ as small as possible.

In Section \ref{subsection:region}, we give an accurate estimation of the possible region of $\theta^*(\lambda)$ via the variational inequality. We then derive the upper bound $\varphi(\theta^*(\lambda),B_i)$ in Section \ref{subsection:bound} and construct the proposed screening rules, that is, Smin, in Section {\ref{subsection:rules}} based on (\ref{rule}).

\subsection{Estimating the Possible Region for $\theta^*(\lambda)$}\label{subsection:region}
In this section, we briefly discuss the geometric properties of the optimal solution $\theta^*(\lambda)$ of problem (\ref{prob:dual0}), and then give an accurate estimation of the possible region of $\theta^*(\lambda)$ via the variational inequality.

Consider problem
\begin{align}\label{prob:dual}
\min_{\theta}\,\,& g(\theta):=\frac{1}{2}\left\|\theta-\frac{Y}{\lambda}\right\|_2^2,\\ \nonumber
\mbox{s.t.}\,\,&\|B_i^{\rm T}\theta\|_{\bar{q}}\leq1, i = 1,2,\ldots,s.
\end{align}
It is easy to see that problems (\ref{prob:dual0}) and (\ref{prob:dual}) have the same optimal solution. Let $\mathcal{F}=\{\theta:\|B_i^{\rm T}\theta\|_{\bar{q}}\leq1, i = 1,2,\ldots,s.\}$ denote the feasible set of problem (\ref{prob:dual}). We can see that the optimal solution $\theta^*(\lambda)$ of problem (\ref{prob:dual}) is the projection of $\frac{Y}{\lambda}$ onto the feasible set $\mathcal{F}$. The following theorem shows that problem (\ref{prob:primal}) and its dual (\ref{prob:dual0}) admit closed form solutions when $\lambda$ is large enough.

\begin{theorem}\label{thm:thetamx}
Let ${\bf X}^*(\lambda)$ and $\theta^*(\lambda)$ be the optimal solutions of problem (\ref{prob:primal}) and its dual (\ref{prob:dual0}). Then if $\lambda\geq\lambda_{max}:=\max_i \|B_i^{\rm T}Y\|_{\bar{q}}$, we have
\begin{equation}
{\bf X}^*(\lambda)=0\hspace{2mm}\mbox{and }\,\,\theta^*(\lambda)=\frac{Y}{\lambda}.
\end{equation}
\end{theorem}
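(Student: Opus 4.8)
The plan is to use the projection interpretation of the dual that was just established: $\theta^*(\lambda)$ is the Euclidean projection of $Y/\lambda$ onto the feasible set $\mathcal{F}=\{\theta:\|B_i^{\rm T}\theta\|_{\bar q}\le1,\ i=1,\dots,s\}$ of problem (\ref{prob:dual}). The first step is to check that when $\lambda\ge\lambda_{max}$ the point $Y/\lambda$ already lies in $\mathcal{F}$: by positive homogeneity of the $\ell_{\bar q}$-norm, $\|B_i^{\rm T}(Y/\lambda)\|_{\bar q}=\lambda^{-1}\|B_i^{\rm T}Y\|_{\bar q}\le\lambda^{-1}\lambda_{max}\le1$ for every $i$. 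Since the Euclidean projection of a point onto a convex set that contains it is the point itself, we conclude at once that $\theta^*(\lambda)=Y/\lambda$.

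The second step recovers the primal solution. Substituting $\theta^*(\lambda)=Y/\lambda$ into the KKT equation (\ref{eqn:KKT1}) gives $Y=\sum_{i=1}^s B_i\mathbf X_i^*(\lambda)+Y$, hence $\sum_{i=1}^s B_i\mathbf X_i^*(\lambda)=0$; in other words the residual $Y-\sum_i B_i\mathbf X_i^*(\lambda)$ equals $Y$. Plugging this into the primal objective of (\ref{prob:primal}) yields $f(\mathbf X^*(\lambda))=\tfrac12\|Y\|_2^2+\lambda\sum_{i=1}^s\|\mathbf X_i^*(\lambda)\|_q$. Comparing with $f(\mathbf 0)=\tfrac12\|Y\|_2^2$ and using optimality of $\mathbf X^*(\lambda)$ gives $\lambda\sum_{i=1}^s\|\mathbf X_i^*(\lambda)\|_q\le0$; since $\lambda>0$ this forces $\mathbf X_i^*(\lambda)=\mathbf 0$ for all $i$, i.e. $\mathbf X^*(\lambda)=0$.

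An equivalent route that avoids the dual altogether is to write the first-order optimality condition for (\ref{prob:primal}) directly, $\mathbf 0\in -B^{\rm T}(Y-B\mathbf W)+\lambda\,\partial\varpi(\mathbf W)$, and use that $\partial\varpi(\mathbf 0)$ is exactly $\{\mathbf U=[\mathbf u_1^{\rm T},\dots,\mathbf u_s^{\rm T}]^{\rm T}:\|\mathbf u_i\|_{\bar q}\le1\ \forall i\}$ (the subdifferential of a norm at the origin is its dual-norm unit ball, and $\varpi$ is group-separable). Then $\mathbf 0$ is primal optimal iff $B_i^{\rm T}Y=\lambda\mathbf u_i$ with $\|\mathbf u_i\|_{\bar q}\le1$ for each $i$, i.e. iff $\|B_i^{\rm T}Y\|_{\bar q}\le\lambda$ for all $i$, i.e. iff $\lambda\ge\lambda_{max}$; the value $\theta^*(\lambda)=Y/\lambda$ then follows from (\ref{eqn:KKT1}).

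I do not expect a genuine obstacle here: the statement is a feasibility check combined with the KKT conditions / strong duality. The one point that calls for care is the passage from $\sum_i B_i\mathbf X_i^*(\lambda)=0$ to $\mathbf X^*(\lambda)=0$ — the KKT relation (\ref{eqn:KKT1}) by itself does not give this, so one must additionally invoke optimality of $\mathbf X^*(\lambda)$ (the value comparison with $\mathbf W=\mathbf 0$) together with $\lambda>0$. Uniqueness of $\theta^*(\lambda)$ is automatic from strong convexity in $\theta$ of the objective of (\ref{prob:dual}), so there is no ambiguity on the dual side.
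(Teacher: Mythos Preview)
Your argument is correct and, in fact, cleaner than the paper's. The paper splits into two cases: for $\lambda>\lambda_{max}$ it observes that $Y/\lambda$ is a strict interior point of $\mathcal{F}$, so $\theta^*(\lambda)=Y/\lambda$, and then invokes the implication (\ref{rule0}) (strict inequality $\|B_i^{\rm T}\theta^*(\lambda)\|_{\bar q}<1$ forces $\mathbf X_i^*(\lambda)=0$). For the boundary case $\lambda=\lambda_{max}$ this fails, since at least one constraint is active; the paper instead appeals to nonexpansiveness of the projection to get continuity of $\theta^*(\lambda)$ in $\lambda$, takes the limit $\lambda\downarrow\lambda_{max}$ to obtain $\theta^*(\lambda_{max})=Y/\lambda_{max}$, and only then runs the objective-value comparison $f(\mathbf X^*(\lambda_{max}))\ge f(\mathbf 0)$ to force $\mathbf X^*(\lambda_{max})=0$.

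Your route handles both cases at once: you only need $Y/\lambda\in\mathcal{F}$ (not strict interiority), which holds for all $\lambda\ge\lambda_{max}$, and the projection of a feasible point is itself. You then use the KKT identity (\ref{eqn:KKT1}) plus the objective comparison uniformly. This avoids the continuity detour entirely. Your alternative argument via $\partial\varpi(\mathbf 0)$ is also valid and has the bonus of giving the exact threshold: $\mathbf 0$ is primal optimal \emph{iff} $\lambda\ge\lambda_{max}$, sharpening the statement. Your caveat about the step from $\sum_i B_i\mathbf X_i^*(\lambda)=0$ to $\mathbf X^*(\lambda)=0$ is well taken; the paper makes the same move using the same objective comparison.
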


\begin{proof}
We first consider the cases in which $\lambda>\lambda_{max}$.

When $\lambda>\lambda_{max}$, we can see that $\|B_i^{\rm T}\frac{Y}{\lambda}\|_{\bar{q}}<1$ for all $i = 1,2,\ldots,s$, i.e., $\frac{Y}{\lambda}$ is itself an interior point of $\mathcal{F}$. As a result, we have $\theta^*(\lambda)=\frac{Y}{\lambda}$.  Therefore, the rule in (\ref{rule0}) implies that ${\bf X}^*_i(\lambda)=0$, $i=1,2,\ldots,s$, i.e., the optimal solution ${\bf X}^*(\lambda)$ of problem (\ref{prob:primal}) is 0.

Next let us consider the case in which $\lambda=\lambda_{max}$. Because $\mathcal{F}$ is convex and $\theta^*(\lambda)$ is the projection of $\frac{Y}{\lambda}$ onto $\mathcal{F}$, we can see that $\theta^*(\lambda)$ is nonexpansive with respect to $\lambda$ \cite{Bertsekas2003} and thus $\theta^*(\lambda)$ is a continuous function of $\lambda$. Therefore, it is easy to see that
$$
\theta^*(\lambda_{max})=\lim_{\lambda\downarrow\lambda_{max}}\theta^*(\lambda)
=\frac{Y}{\lambda_{max}}.
$$
Let $h(\lambda):=B{\bf X}^*(\lambda)=\sum_{i=1}^sB_i{\bf X}^*_i(\lambda)$. By \eqref{eqn:KKT1}, we have
$$h(\lambda)=Y-\lambda\theta^*(\lambda),$$
which implies that $h(\lambda)$ is also continuous with respect to $\lambda$. Therefore, we have
$$h(\lambda_{max})=\lim_{\lambda\downarrow\lambda_{max}}h(\lambda)=0.$$
Clearly, we can set $\overline{\bf X}=0$ such that $h(\lambda_{max})=B\overline{X}=0$ can be satisfied. Therefore, both of the KKT conditions in \eqref{eqn:KKT1} and \eqref{eqn:KKT2} are satisfied by $\overline{X}$ and $\theta^*(\lambda_{max})=\frac{Y}{\lambda_{max}}$. Because problem (\ref{prob:primal}) is a convex optimization problem, the satisfaction of the KKT conditions implies that $\overline{\bf X}=0$ is an optimal solution of (\ref{prob:primal}). Therefore, we can choose 0 for ${\bf X}^*(\lambda_{max})$. Moreover, we can see that ${\bf X}^*(\lambda_{max})$ must be zero because otherwise $f({\bf X}^*(\lambda_{max}))=\frac{1}{2}\|Y\|_2^2+\lambda\sum_{i=1}^s\|{\bf X}^*_i(\lambda_{max})\|_{\bar{q}}>\frac{1}{2}\|Y\|_2^2=f(0)$.  Therefore, we have ${\bf X}^*(\lambda_{max})=0$ which completes the proof.
\end{proof}

Suppose we are given two distinct parameters $\lambda'$ and $\lambda''$ and the corresponding optimal solutions of (\ref{prob:dual}) are $\theta^*(\lambda')$ and $\theta^*(\lambda'')$ respectively. Without loss of generality, let us assume $\lambda_{max}\geq\lambda'>\lambda''>0$. Then the variational inequalities \cite{Guler2010} can be written as
\begin{equation}\label{ineqn:vi00}
\left\langle\theta^*(\lambda')-\theta^*(\lambda''),\nabla g(\theta^*(\lambda''))\right\rangle\geq0,
\end{equation}
\begin{equation}\label{ineqn:vi01}
\left\langle\theta^*(\lambda'')-\theta^*(\lambda'),\nabla g(\theta^*(\lambda'))\right\rangle\geq0.
\end{equation}
Because $\nabla g(\theta)=\theta-\frac{Y}{\lambda}$, the variational inequalities in \eqref{ineqn:vi00} and \eqref{ineqn:vi01} can be rewritten as:
\begin{equation}\label{ineqn:vi1}
\left\langle\theta^*(\lambda')-\theta^*(\lambda''),
\theta^*(\lambda'')-\frac{Y}{\lambda''}\right\rangle\geq0,
\end{equation}
\begin{equation}\label{ineqn:vi2}
\left\langle\theta^*(\lambda'')-\theta^*(\lambda'),
\theta^*(\lambda')-\frac{Y}{\lambda'}\right\rangle\geq0.
\end{equation}
When $\lambda'=\lambda_{max}$, Theorem \ref{thm:thetamx} tells that $\theta^*(\lambda')=\theta^*(\lambda_{max})=\frac{Y}{\lambda_{max}}$ and thus the inequality in \eqref{ineqn:vi2} is trivial. Let $B_*:=\argmax_i\|B_i^{\rm T}\theta^*(\lambda_{max})\|_{\bar{q}}$ and $\phi(\theta) := \|B_*^{\rm T}\theta\|_{\bar{q}}$ where $\theta\in\mathbb{R}^{m}$. Then the subdifferential of $\phi(\cdot)$ can be found as:
$$
\partial \phi(\theta):=\{B_*{\bf d}:\|{\bf d}\|_q\leq1,\langle{\bf d}, B_*^{\rm T}\theta\rangle=\|B_*^{\rm T}\theta\|_{\bar{ q}}\}.
$$
To simplify notations, let $\theta_{max}:=\theta^*(\lambda_{max})$. Consider $\partial \phi(\theta_{max})$. It is easy to see that $B_*^{\rm T}\theta_{max}\neq0$ since $\|B_*^{\rm T}\theta_{max}\|_{\bar{q}}=1$.
Let
\begin{equation}
{\bf d}_{max} = {\rm sgn}(B_*^{\rm T}\theta_{max})\odot|B_*^{\rm T}\theta_{max}|^{\bar{q}/q}.
\end{equation}
It is easy to check that $\|{\bf d}_{max}\|_q=1$ and $\langle{\bf d}_{max}, B_*^{\rm T}\theta_{max}\rangle=\|B_*^{\rm T}\theta_{max}\|_{\bar{ q}}=1$, which implies that $B_*{\bf d}_{max}\in\partial \phi(\theta_{max})$. As a result, the hyperplane
\begin{equation}
\mathcal{H}:=\{\theta\in\mathbb{R}^m:\langle B_*{\bf d}_{max},\theta\rangle=\|B_*^{\rm T}\theta\|_{\bar{ q}}=1\}
\end{equation}
is a supporting hyperplane \cite{Hiriart-Urruty:1993} to the set
$$
\mathcal{C}:=\{\theta\in
\mathbb{R}^m:\|B_*^{\rm T}\theta\|_{\bar{q}}\leq1\}.
$$
As a result, $\mathcal{H}$ is also a supporting hyperplane to the set $\mathcal{F}$. [Recall that $\mathcal{F}$ is the feasible set of problem (\ref{prob:dual}) and $\mathcal{C}\subseteq\mathcal{F}$.] Therefore, when $\lambda'=\lambda_{max}$, we have
\begin{equation}\label{ineqn:vis}
\left\langle\theta-\theta^*(\lambda'),
-B_*{\bf d}_{max}\right\rangle\geq0,\,\,\,\forall \theta\in\mathcal{F}.
\end{equation}

Suppose $\theta^*(\lambda')$ is known, for notational convenience, let
\begin{equation}\label{eqn:a}
{\bf a}(\lambda'',\lambda') = \frac{1}{2}\left[\frac{Y}{\lambda''}-\theta^*(\lambda')\right],
\end{equation}
\begin{equation}\label{eqn:b}
{\bf b}(\lambda') =
\begin{cases}
\frac{Y}{\lambda'}-\theta^*(\lambda'),\hspace{3mm}\mbox{if }\lambda'\in(0,\lambda_{max})\\
B_*{\bf d}_{max},\hspace{3mm}\mbox{if }\lambda'=\lambda_{max},
\end{cases}
\end{equation}
\begin{equation}\label{eqn:v}
{\bf v}(\lambda'',\lambda')={\bf a}(\lambda'',\lambda')-\frac{\langle{\bf a}(\lambda'',\lambda'),{\bf b}(\lambda')\rangle}{\|{\bf b}(\lambda')\|_2^2}{\bf b}(\lambda'),
\end{equation}
and
\begin{equation}\label{eqn:o}
{\bf o}(\lambda'',\lambda') = \theta^*(\lambda') + {\bf v}(\lambda'',\lambda').
\end{equation}
The next lemma shows that the inner product between ${\bf a}(\lambda'',\lambda')$ and ${\bf b}(\lambda')$ is nonnegative.
\begin{lemma}\label{lemma:ab}
Suppose $\theta^*(\lambda')$ and $\theta^*(\lambda'')$ are the optimal solutions of problem (\ref{prob:dual}) for two distinct parameters $\lambda_{max}\geq\lambda'>\lambda''>0$. Then
\begin{equation}
\langle{\bf a}(\lambda'',\lambda'),{\bf b}({\lambda'})\rangle\geq0.
\end{equation}
\end{lemma}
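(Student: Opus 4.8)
The plan is to split the argument according to the two branches in the definition~\eqref{eqn:b} of ${\bf b}(\lambda')$, and in each branch to exploit that $\theta^*(\lambda')$ is the Euclidean projection of $Y/\lambda'$ onto the feasible set $\mathcal{F}$ of problem~\eqref{prob:dual}, together with the trivial fact that $\mathbf{0}\in\mathcal{F}$ (indeed $\|B_i^{\rm T}\mathbf{0}\|_{\bar q}=0\le1$ for all $i$).

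\textbf{Case $\lambda'\in(0,\lambda_{max})$.} Here ${\bf b}(\lambda')=\frac{Y}{\lambda'}-\theta^*(\lambda')$, so $\langle{\bf a}(\lambda'',\lambda'),{\bf b}(\lambda')\rangle=\tfrac12\big\langle\frac{Y}{\lambda''}-\theta^*(\lambda'),\,\frac{Y}{\lambda'}-\theta^*(\lambda')\big\rangle$. I would write $\frac{Y}{\lambda''}=t\,\frac{Y}{\lambda'}$ with $t=\frac{\lambda'}{\lambda''}>1$ and use the splitting $t\,\frac{Y}{\lambda'}-\theta^*(\lambda')=t\big(\frac{Y}{\lambda'}-\theta^*(\lambda')\big)+(t-1)\theta^*(\lambda')$ to obtain
$$\Big\langle \frac{Y}{\lambda''}-\theta^*(\lambda'),\,\frac{Y}{\lambda'}-\theta^*(\lambda')\Big\rangle = t\,\Big\|\frac{Y}{\lambda'}-\theta^*(\lambda')\Big\|_2^2 + (t-1)\,\Big\langle\theta^*(\lambda'),\,\frac{Y}{\lambda'}-\theta^*(\lambda')\Big\rangle.$$
The first term is nonnegative since $t>0$. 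For the second, the variational (obtuse-angle) characterization of $\theta^*(\lambda')=P_{\mathcal{F}}(Y/\lambda')$ applied at the feasible point $\mathbf{0}$ gives $\big\langle\frac{Y}{\lambda'}-\theta^*(\lambda'),\,\mathbf{0}-\theta^*(\lambda')\big\rangle\le0$, i.e. $\big\langle\theta^*(\lambda'),\,\frac{Y}{\lambda'}-\theta^*(\lambda')\big\rangle\ge0$; since $t-1>0$, the second term is nonnegative as well, and the claim follows in this case.

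\textbf{Case $\lambda'=\lambda_{max}$.} Now Theorem~\ref{thm:thetamx} gives $\theta^*(\lambda')=\theta_{max}=\frac{Y}{\lambda_{max}}$, hence ${\bf a}(\lambda'',\lambda')=\tfrac12\big(\frac{Y}{\lambda''}-\frac{Y}{\lambda_{max}}\big)=\tfrac12\cdot\frac{\lambda_{max}-\lambda''}{\lambda''\lambda_{max}}\,Y$, which is a strictly positive multiple of $Y=\lambda_{max}\theta_{max}$ because $\lambda_{max}>\lambda''>0$. With ${\bf b}(\lambda')=B_*{\bf d}_{max}$ this shows $\langle{\bf a}(\lambda'',\lambda'),{\bf b}(\lambda')\rangle$ equals a positive constant times $\langle\theta_{max},B_*{\bf d}_{max}\rangle=\langle B_*^{\rm T}\theta_{max},{\bf d}_{max}\rangle=\|B_*^{\rm T}\theta_{max}\|_{\bar q}=1>0$, where the last two equalities are exactly the defining properties of ${\bf d}_{max}$ recorded just before the lemma.

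I do not expect a real obstacle here; the only points needing a bit of care are choosing the splitting $\frac{Y}{\lambda''}-\theta^*(\lambda')=t\big(\frac{Y}{\lambda'}-\theta^*(\lambda')\big)+(t-1)\theta^*(\lambda')$ so that the projection inequality at $\mathbf{0}$ becomes usable in the first case, and recalling in the second case that $Y$ is a positive multiple of $\theta_{max}$, which collapses the pairing with $B_*{\bf d}_{max}$ to the normalization $\langle{\bf d}_{max},B_*^{\rm T}\theta_{max}\rangle=1$.
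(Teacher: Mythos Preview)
Your proof is correct and follows essentially the same approach as the paper: both split into the two cases of~\eqref{eqn:b}, use the variational inequality for the projection $\theta^*(\lambda')=P_{\mathcal F}(Y/\lambda')$ evaluated at the feasible point $\mathbf 0$ in the case $\lambda'<\lambda_{max}$, and use the explicit form $\theta^*(\lambda_{max})=Y/\lambda_{max}$ together with the defining property of ${\bf d}_{max}$ in the case $\lambda'=\lambda_{max}$. Your algebraic decomposition in the first case (pulling out the factor $t=\lambda'/\lambda''$ from the whole difference rather than only from $Y/\lambda'$) is slightly tidier, and in the second case you compute the inner product exactly as a positive multiple of $\langle {\bf d}_{max},B_*^{\rm T}\theta_{max}\rangle=1$, whereas the paper bounds it via the supporting-hyperplane inequality~\eqref{ineqn:vis} at $\mathbf 0$; these are cosmetic differences, not a genuinely different route.
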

\begin{proof}
We first show that the statement holds when $\lambda'=\lambda_{max}$.

By Theorem \ref{thm:thetamx}, we can see that $\theta^*(\lambda_{max})=\frac{Y}{\lambda_{max}}$ and thus
$${\bf a}(\lambda'',\lambda_{max})=\frac{1}{2}
\left(\frac{1}{\lambda''}-\frac{1}{\lambda_{max}}\right)Y.$$
If $Y=0$, the statement is trivial. Let us assume $Y\neq0$. Then
\begin{equation}\label{ineqn:ab1}
\langle{\bf a}(\lambda'',\lambda_{max}),{\bf b}(\lambda_{max})\rangle=
\frac{1}{2}\left(\frac{1}{\lambda''}-\frac{1}{\lambda_{max}}\right)
\langle Y, B_*{\bf d}_{max}\rangle.
\end{equation}
On the other hand, because the zero point is also a feasible point of problem (\ref{prob:dual}), i.e., $0\in\mathcal{F}$, we can see that
\begin{equation}\label{ineqn:ab2}
\langle0-\theta^*(\lambda_{max}),-B_*{\bf d}_{max}\rangle=\langle-\frac{Y}{\lambda_{max}},-B_*{\bf d}_{max}\rangle\geq0.
\end{equation}
In view of \eqref{ineqn:ab1} and \eqref{ineqn:ab2}, we have $\langle{\bf a}(\lambda'',\lambda_{max}),{\bf b}(\lambda_{max})\rangle\geq0$.

Next, let us consider the case with $0<\lambda'<\lambda_{max}$.
In fact, we can see that
\begin{align*}
\langle2{\bf a}(\lambda'',\lambda'),{\bf b}(\lambda')\rangle&=\left\langle\frac{Y}{\lambda''}-\theta^*(\lambda'),
\frac{Y}{\lambda'}-\theta^*(\lambda')\right\rangle\\
&=\left(\frac{\lambda'}{\lambda''}-1\right)\left\langle \frac{Y}{\lambda'}, \frac{Y}{\lambda'}-\theta^*(\lambda')\right\rangle+\left\|\frac{Y}{\lambda'}-\theta^*(\lambda')\right\|_2^2\\
&\geq \left(\frac{\lambda'}{\lambda''}-1\right)\left\langle \theta^*(\lambda'), \frac{Y}{\lambda'}-\theta^*(\lambda')\right\rangle.
\end{align*}
Because $0\in\mathcal{F}$, the variational inequality leads to
$$
\left\langle0-\theta^*(\lambda'),\nabla g(\theta^*(\lambda'))\right\rangle=\left\langle-\theta^*(\lambda'),\theta^*(\lambda')-\frac{Y}{\lambda'}\right\rangle
\geq0.
$$
Therefore, it is easy to see that $$\langle2{\bf a}(\lambda'',\lambda'),{\bf b}(\lambda')\rangle\geq0,$$ which completes the proof.
\end{proof}

Next we show how to bound $\theta^*(\lambda'')$ inside a ball via the above variational inequalities.

\begin{theorem}\label{thm:ball}
Suppose $\theta^*(\lambda')$ and $\theta^*(\lambda'')$ are the optimal solutions of problem (\ref{prob:dual}) for two distinct parameters $\lambda_{max}\geq\lambda'>\lambda''$ and $\theta^*(\lambda')$ is known. Then
\begin{equation}\label{ineqn:ball}
\|\theta^*(\lambda'')-{\bf o}(\lambda'',\lambda')\|_2^2\leq\|{\bf v}(\lambda'',\lambda')\|_2^2.
\end{equation}
\end{theorem}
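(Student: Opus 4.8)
The plan is to turn the claimed ball containment into one scalar inequality and then read that inequality off from the variational inequalities already recorded above, together with Lemma~\ref{lemma:ab}. Throughout, abbreviate $\theta' := \theta^*(\lambda')$, $\theta'' := \theta^*(\lambda'')$, ${\bf a} := {\bf a}(\lambda'',\lambda')$, ${\bf b} := {\bf b}(\lambda')$, ${\bf v} := {\bf v}(\lambda'',\lambda')$, and ${\bf o} := {\bf o}(\lambda'',\lambda')$. Since ${\bf o} = \theta' + {\bf v}$ by \eqref{eqn:o}, expanding $\|\theta'' - {\bf o}\|_2^2 = \|(\theta'' - \theta') - {\bf v}\|_2^2$ and cancelling $\|{\bf v}\|_2^2$ from both sides shows that \eqref{ineqn:ball} is equivalent to
\begin{equation*}
\|\theta'' - \theta'\|_2^2 \le 2\langle \theta'' - \theta',\, {\bf v}\rangle ,
\end{equation*}
so it suffices to establish this inequality.

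I would next collect two elementary consequences about the displacement $\theta'' - \theta'$. First, the variational inequality \eqref{ineqn:vi1} reads $\langle \theta' - \theta'',\, \theta'' - Y/\lambda''\rangle \ge 0$; writing $\theta'' - Y/\lambda'' = (\theta'' - \theta') + (\theta' - Y/\lambda'') = (\theta'' - \theta') - 2{\bf a}$ by \eqref{eqn:a} and rearranging gives
\begin{equation*}
2\langle \theta'' - \theta',\, {\bf a}\rangle \ge \|\theta'' - \theta'\|_2^2 .
\end{equation*}
Second, $\langle \theta'' - \theta',\, {\bf b}\rangle \le 0$: when $0<\lambda'<\lambda_{max}$ this is precisely \eqref{ineqn:vi2} rewritten through ${\bf b} = Y/\lambda' - \theta'$, and when $\lambda' = \lambda_{max}$ it is \eqref{ineqn:vis} specialized to $\theta = \theta'' \in \mathcal{F}$, where ${\bf b} = B_*{\bf d}_{max}$. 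I would also observe that ${\bf b} \neq 0$ in both cases: if $\lambda' < \lambda_{max}$ then $Y/\lambda'$ violates some constraint defining $\mathcal{F}$, so $\theta' \neq Y/\lambda'$; if $\lambda' = \lambda_{max}$ then $\langle B_*{\bf d}_{max},\, \theta_{max}\rangle = 1 \neq 0$ forces $B_*{\bf d}_{max} \neq 0$. Hence ${\bf v}$ and ${\bf o}$ are well defined.

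Finally I would assemble the pieces. By the definition \eqref{eqn:v} of ${\bf v}$,
\begin{equation*}
2\langle \theta'' - \theta',\, {\bf v}\rangle = 2\langle \theta'' - \theta',\, {\bf a}\rangle - \frac{2\langle {\bf a},{\bf b}\rangle}{\|{\bf b}\|_2^2}\,\langle \theta'' - \theta',\, {\bf b}\rangle .
\end{equation*}
The first term is at least $\|\theta'' - \theta'\|_2^2$ by the first consequence. The second term is nonnegative, since $\langle {\bf a},{\bf b}\rangle \ge 0$ by Lemma~\ref{lemma:ab} and $\langle \theta'' - \theta',\, {\bf b}\rangle \le 0$ by the second consequence, so the subtracted product is nonpositive. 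Adding the two bounds yields exactly the reformulated inequality, and therefore \eqref{ineqn:ball}. The only points requiring care are the bookkeeping of signs in the variational inequalities and the separate handling of the endpoint $\lambda' = \lambda_{max}$, where the role of \eqref{ineqn:vi2} is taken over by the supporting-hyperplane inequality \eqref{ineqn:vis}; the rest is a routine expansion. Conceptually this is the familiar ``a ball intersected with a half-space lies inside a smaller ball'' estimate: the first consequence confines $\theta''$ to the ball of radius $\|{\bf a}\|_2$ centered at $\theta' + {\bf a}$, the second confines it to the half-space $\{\theta : \langle \theta - \theta',\, {\bf b}\rangle \le 0\}$, and ${\bf o}$ together with $\|{\bf v}\|_2$ is exactly the center and radius of the smallest ball enclosing that intersection.
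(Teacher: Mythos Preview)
Your proof is correct and follows essentially the same approach as the paper's: both combine the variational inequality \eqref{ineqn:vi1} with the half-space inequality $\langle \theta''-\theta',\,{\bf b}\rangle\le 0$ (from \eqref{ineqn:vi2} or \eqref{ineqn:vis}), using Lemma~\ref{lemma:ab} to control the sign of the coefficient $\langle{\bf a},{\bf b}\rangle/\|{\bf b}\|_2^2$, and then recognize the result as the ball inequality \eqref{ineqn:ball}. Your version is in fact slightly more careful than the paper's, since you explicitly verify ${\bf b}\neq 0$ (needed for ${\bf v}$ and ${\bf o}$ to be defined) and spell out the geometric picture at the end.
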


\begin{proof}
To simplify notations, let $c = \frac{\langle{\bf a}(\lambda'',\lambda'),{\bf b}(\lambda')\rangle}{\|{\bf b}(\lambda')\|_2^2}$. Then, Lemma \ref{lemma:ab} implies that $c\geq0$.

Because $\theta^*(\lambda'')\in\mathcal{F}$, the inequalities in \eqref{ineqn:vi2} and \eqref{ineqn:vis} lead to
\begin{equation}
\left\langle\theta^*(\lambda'')-\theta^*(\lambda'),
{\bf b}(\lambda')\right\rangle\leq0,
\end{equation}
and thus
\begin{equation}\label{ineqn:vi3}
\left\langle\theta^*(\lambda'')-\theta^*(\lambda'),
2c{\bf b}(\lambda')\right\rangle\leq0.
\end{equation}
By adding the inequalities in \eqref{ineqn:vi1} and \eqref{ineqn:vi3}, we obtain
\begin{equation}\label{ineqn:ball0}
\left\langle\theta^*(\lambda'')-\theta^*(\lambda'),
\theta^*(\lambda'')-\frac{Y}{\lambda''}+2c{\bf b}(\lambda')\right\rangle\leq0.
\end{equation}
By noting that
\begin{align*}
\theta^*(\lambda'')-\frac{Y}{\lambda''}+c{\bf b}(\lambda')& = \theta^*(\lambda'')-\theta^*(\lambda')-\left(\frac{Y}{\lambda''}
-\theta^*(\lambda')-2c{\bf b}(\lambda')\right)\\
&=\theta^*(\lambda'')-\theta^*(\lambda')-2\left({\bf a}(\lambda'',\lambda')-c{\bf b}(\lambda')\right)\\
&=\theta^*(\lambda'')-\theta^*(\lambda')-2{\bf v}(\lambda'',\lambda'),
\end{align*}
the inequality in \eqref{ineqn:ball0} becomes
$$\left\langle\theta^*(\lambda'')-\theta^*(\lambda'),
\theta^*(\lambda'')-\theta^*(\lambda')-2{\bf v}(\lambda'',\lambda')\right\rangle\leq0,$$
which is equivalent to \eqref{ineqn:ball}.

\end{proof}

Theorem \ref{thm:ball} bounds $\theta^*(\lambda'')$ inside a ball. For notational convenience, let us denote
\begin{equation}\label{def:ball}
\mathcal{R}(\lambda'',\lambda')=\{\theta:\|\theta-{\bf o}(\lambda'',\lambda')\|_2\leq\|{\bf v}(\lambda'',\lambda')\|_2\}.
\end{equation}

\subsection{Estimating the Upper Bound}\label{subsection:bound}

Given two distinct parameters $\lambda'$ and $\lambda''$, and assume the knowledge of $\theta^*(\lambda')$, we estimate a possible region $\mathcal{R}(\lambda'',\lambda')$ for $\theta^*(\lambda'')$ in Section \ref{subsection:region}. To apply the screening rule in (\ref{rule}) to identify the inactive groups, we need to estimate an upper bound of $\varphi(\theta^*(\lambda''),B_i)=\max_{\theta\in\mathcal{R}({\lambda'',\lambda'})}\|B_i^{\rm T}\theta\|_{\bar{q}}$. If $\varphi(\theta^*(\lambda''),B_i)<1$, then ${\bf X}^*_i(\lambda'')=0$ and the $i^{th}$ group can be safely removed from the optimization of problem (\ref{prob:primal}).

We need the following technical lemma for the estimation of an upper bound.
\begin{lemma}\label{lemma:matrixbd}
Let $A\in\mathbb{R}^{n\times t}$ be a matrix and ${\bf u}\in\mathbb{R}^t$, and $q\in[1,\infty]$. Then the following holds:
\begin{equation}
\|A{\bf u}\|_q\leq T_{A}^q\|{\bf u}\|_2,
\end{equation}
where
\begin{equation}
T_{A}^q=
\begin{cases}
\left(\sum_{k=1}^n\left(\sum_{j=1}^t a_{k,j}^2\right)^{q/2}\right)^{1/q},\hspace{5mm}\mbox{if }q\in[1,\infty),\\
\max_{k\in\{1,\ldots,n\}}
\left(\sum_{j=1}^t a_{k,j}^2\right)^{1/2},\hspace{3mm}\mbox{if }q=\infty,
\end{cases}
\end{equation}
and $a_{k,j}$ is the $(k,j)^{th}$ entry of $A$.
\end{lemma}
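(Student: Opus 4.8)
The plan is to apply the Cauchy--Schwarz inequality to $A$ one row at a time and then reassemble the resulting componentwise bounds according to the definition of the $\ell_q$ norm. First I would introduce $\mathbf{r}_k^{\rm T}$ for the $k$-th row of $A$, so that the $k$-th component of $A\mathbf{u}$ is $(A\mathbf{u})_k = \langle \mathbf{r}_k, \mathbf{u}\rangle$. Cauchy--Schwarz then gives, for every $k\in\{1,\ldots,n\}$,
$$
|(A\mathbf{u})_k| \;=\; |\langle \mathbf{r}_k, \mathbf{u}\rangle| \;\le\; \|\mathbf{r}_k\|_2\,\|\mathbf{u}\|_2 \;=\; \Big(\textstyle\sum_{j=1}^t a_{k,j}^2\Big)^{1/2}\|\mathbf{u}\|_2 .
$$

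Next I would split into the two cases appearing in the definition of $T_A^q$. For $q\in[1,\infty)$, I raise the pointwise bound above to the $q$-th power and sum over $k$, obtaining $\|A\mathbf{u}\|_q^q = \sum_{k=1}^n |(A\mathbf{u})_k|^q \le \big(\sum_{k=1}^n (\sum_{j=1}^t a_{k,j}^2)^{q/2}\big)\|\mathbf{u}\|_2^q$; taking $q$-th roots yields exactly $\|A\mathbf{u}\|_q \le T_A^q\|\mathbf{u}\|_2$. For $q=\infty$, I instead take the maximum over $k$ of the pointwise bound, which gives $\|A\mathbf{u}\|_\infty = \max_k |(A\mathbf{u})_k| \le \max_k (\sum_{j=1}^t a_{k,j}^2)^{1/2}\|\mathbf{u}\|_2 = T_A^\infty\|\mathbf{u}\|_2$. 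In both cases this is precisely the claimed inequality.

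This argument is elementary and I do not anticipate a genuine obstacle. The only points that require care are keeping the two formulas for $T_A^q$ aligned with the two cases $q<\infty$ and $q=\infty$, and observing that $T_A^q$ depends only on $A$ and not on $\mathbf{u}$, so that the bound is uniform in $\mathbf{u}$; this uniformity is what makes the lemma usable when it is later applied with $\mathbf{u}$ ranging over the ball $\mathcal{R}(\lambda'',\lambda')$ in the estimation of $\varphi(\theta^*(\lambda''),B_i)$.
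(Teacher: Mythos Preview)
Your proof is correct and essentially identical to the paper's own argument: the paper also writes the $k$-th entry of $A\mathbf{u}$ as $\langle \mathbf{a}_k,\mathbf{u}\rangle$ with $\mathbf{a}_k$ the $k$-th row, bounds it by $\|\mathbf{a}_k\|_2\|\mathbf{u}\|_2$ via Cauchy--Schwarz, and then assembles the $\ell_q$ norm for finite $q$ while remarking that the case $q=\infty$ follows by the same reasoning. Your write-up is slightly more explicit in treating $q=\infty$, but there is no substantive difference.
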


\begin{proof}
Let $A = [{\bf a}_1,{\bf a}_2, \ldots,{\bf a}_n]^{\rm T}$, i.e., ${\bf a}_k$ is the $k^{th}$ row of $A$. When $q>1$, we can see that
\begin{align}
\|A{\bf u}\|_q=\left(\sum_{k=1}^n|\langle{\bf a}_k,{\bf u}\rangle|^q\right)^{1/q}\leq\left(\sum_{k=1}^n\|{\bf a}_k\|_2^q\|{\bf u}\|_2^q\right)^{1/q}=\left(\sum_{k=1}^n\|{\bf a}_k\|_2^q\right)^{1/q}\|{\bf u}\|_2.
\end{align}
By a similar argument, we can prove the statement with $q=\infty$.
\end{proof}



The following theorem gives an upper bound of $\varphi(\theta^*(\lambda''),B_i)$.
\begin{theorem}\label{thm:bound}
Given two distinct parameters $\lambda_{max}\geq\lambda'>\lambda''>0$. Let $\mathcal{R}(\lambda'',\lambda')$ be defined as in \eqref{def:ball}, then
\begin{equation}\label{ineqn:bound}
\varphi(\theta^*(\lambda''),B_i)=\max_{\theta\in\mathcal{R}({\lambda'',\lambda'})}\|B_i^{\rm T}\theta\|_{\bar{q}}\leq T_{B_i^{\rm T}}^{\bar{q}}\|{\bf v}(\lambda'',\lambda')\|_2+\|B_i^{\rm T}{\bf o(\lambda'',\lambda')}\|_{\bar{q}},
\end{equation}
where ${\bf o}(\lambda'',\lambda')$ and ${\bf v}(\lambda'',\lambda')$ are defined in \eqref{eqn:o} and \eqref{eqn:v} respectively.
\end{theorem}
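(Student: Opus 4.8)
The plan is to exploit the fact, established in Theorem~\ref{thm:ball}, that the region $\mathcal{R}(\lambda'',\lambda')$ is simply a Euclidean ball centered at ${\bf o}(\lambda'',\lambda')$ with radius $\|{\bf v}(\lambda'',\lambda')\|_2$. Thus any $\theta\in\mathcal{R}(\lambda'',\lambda')$ can be written as $\theta={\bf o}(\lambda'',\lambda')+{\bf r}$ with $\|{\bf r}\|_2\leq\|{\bf v}(\lambda'',\lambda')\|_2$, and the entire estimate reduces to controlling $\|B_i^{\rm T}\theta\|_{\bar q}$ over this parametrization.

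First I would apply the triangle inequality for the $\ell_{\bar q}$ norm:
$$\|B_i^{\rm T}\theta\|_{\bar q}=\|B_i^{\rm T}{\bf o}(\lambda'',\lambda')+B_i^{\rm T}{\bf r}\|_{\bar q}\leq\|B_i^{\rm T}{\bf o}(\lambda'',\lambda')\|_{\bar q}+\|B_i^{\rm T}{\bf r}\|_{\bar q}.$$
Next I would invoke Lemma~\ref{lemma:matrixbd} with $A=B_i^{\rm T}$, ${\bf u}={\bf r}$, and the exponent taken to be $\bar q\in[1,\infty]$, which gives $\|B_i^{\rm T}{\bf r}\|_{\bar q}\leq T_{B_i^{\rm T}}^{\bar q}\|{\bf r}\|_2$. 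Since $\|{\bf r}\|_2\leq\|{\bf v}(\lambda'',\lambda')\|_2$ by membership in the ball, this yields $\|B_i^{\rm T}{\bf r}\|_{\bar q}\leq T_{B_i^{\rm T}}^{\bar q}\|{\bf v}(\lambda'',\lambda')\|_2$. Combining the two displayed bounds, the right-hand side no longer depends on $\theta$, so taking the maximum over $\theta\in\mathcal{R}(\lambda'',\lambda')$ preserves the inequality and produces exactly \eqref{ineqn:bound}.

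Finally, I would note that this indeed bounds $\varphi(\theta^*(\lambda''),B_i)$: by Theorem~\ref{thm:ball} we have $\theta^*(\lambda'')\in\mathcal{R}(\lambda'',\lambda')$, so $\|B_i^{\rm T}\theta^*(\lambda'')\|_{\bar q}\leq\max_{\theta\in\mathcal{R}(\lambda'',\lambda')}\|B_i^{\rm T}\theta\|_{\bar q}$, consistent with the definition of $\varphi$ in \eqref{rule}. I do not anticipate a genuine obstacle here; the only points requiring a word of care are checking that $\bar q$ lies in the admissible range $[1,\infty]$ for Lemma~\ref{lemma:matrixbd} (it does, since $q\geq1$) and that the triangle inequality applies to $\|\cdot\|_{\bar q}$ as a norm (it does for $\bar q\geq1$). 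No compactness or attainment argument is needed, since we only require an upper bound on the supremum.
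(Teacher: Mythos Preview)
Your proposal is correct and follows essentially the same approach as the paper: parametrize $\theta\in\mathcal{R}(\lambda'',\lambda')$ by its displacement from the center ${\bf o}(\lambda'',\lambda')$, apply the triangle inequality for $\|\cdot\|_{\bar q}$, invoke Lemma~\ref{lemma:matrixbd} on the displacement term, and then use the ball constraint $\|\theta-{\bf o}(\lambda'',\lambda')\|_2\le\|{\bf v}(\lambda'',\lambda')\|_2$. The paper's proof is slightly more terse but identical in substance.
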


\begin{proof}
Recall that
\begin{align*}
\mathcal{R}(\lambda'',\lambda')=\{\theta:\|\theta-{\bf o}(\lambda'',\lambda')\|_2\leq\|{\bf v}(\lambda'',\lambda')\|_2\}.
\end{align*}
Then for $\theta\in\mathcal{R}(\lambda'',\lambda')$, we have
\begin{align}\label{ineqn:key_bound}
\|B_i^{\rm T}\theta\|_{\bar{q}}&\leq\|B_i^{\rm T}(\theta-{\bf o}(\lambda'',\lambda'))\|_{\bar{q}}+\|B_i^{\rm T}{\bf o}(\lambda'',\lambda')\|_{\bar{q}}\\ \nonumber
&\leq T_{B_i^{\rm T}}^{\bar{q}}\|\theta-{\bf o}(\lambda'',\lambda')\|_2+\|B_i^{\rm T}{\bf o}(\lambda'',\lambda')\|_{\bar{q}}\\ \nonumber
&\leq T_{B_i^{\rm T}}^{\bar{q}}\|{\bf v}(\lambda'',\lambda')\|_2+\|B_i^{\rm T}{\bf o}(\lambda'',\lambda')\|_{\bar{q}}.
\end{align}
The second inequality in \eqref{ineqn:key_bound} follows from Lemma \ref{lemma:matrixbd} and the proof is completed.
\end{proof}

\subsection{The Proposed Screening Method (Smin) for $\ell_1/\ell_q$-Regularized Problems}\label{subsection:rules}

Using (\ref{rule}), we are now ready to construct screening rules for the $\ell_1/\ell_q$-regularized problems.

\begin{theorem}\label{thm:rule}
For problem (\ref{prob:primal}), assume $\theta^*(\lambda')$ is known for a specific parameter $\lambda'\in(0,\lambda_{max}]$. Let $\lambda''\in(0,\lambda')$ . Then ${\bf X}^*_i(\lambda'')=0$ if
\begin{equation}\label{ineqn:rule}
\|B_i^{\rm T}{\bf o}(\lambda'',\lambda')\|_{\bar{q}}<1-T_{B_i^{\rm T}}^{\bar{q}}\|{\bf v}(\lambda'',\lambda')\|_2.
\end{equation}
\end{theorem}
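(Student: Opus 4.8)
The plan is to chain together the screening rule (\ref{rule}) with the upper bound from Theorem~\ref{thm:bound}. First I would recall that by Theorem~\ref{thm:ball}, under the hypothesis $\lambda_{max}\geq\lambda'>\lambda''>0$ and with $\theta^*(\lambda')$ known, the dual optimal solution $\theta^*(\lambda'')$ lies in the ball $\mathcal{R}(\lambda'',\lambda')$ defined in (\ref{def:ball}). Hence $\mathcal{R}=\mathcal{R}(\lambda'',\lambda')$ is a valid choice of region containing $\theta^*(\lambda'')$ in the sense required by rule (\ref{rule}); note that the case $\lambda'=\lambda_{max}$ is covered because Theorem~\ref{thm:ball} explicitly allows $\lambda'=\lambda_{max}$ and the definitions (\ref{eqn:a})--(\ref{eqn:o}) of ${\bf a},{\bf b},{\bf v},{\bf o}$ are set up so that ${\bf b}(\lambda_{max})=B_*{\bf d}_{max}$, with the corresponding variational inequality supplied by (\ref{ineqn:vis}).

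Next I would simply instantiate (\ref{rule}) with this region: ${\bf X}^*_i(\lambda'')=0$ whenever $\varphi(\theta^*(\lambda''),B_i)=\max_{\theta\in\mathcal{R}(\lambda'',\lambda')}\|B_i^{\rm T}\theta\|_{\bar{q}}<1$. By Theorem~\ref{thm:bound}, this maximum is bounded above by $T_{B_i^{\rm T}}^{\bar{q}}\|{\bf v}(\lambda'',\lambda')\|_2+\|B_i^{\rm T}{\bf o}(\lambda'',\lambda')\|_{\bar{q}}$. Therefore, if the hypothesis (\ref{ineqn:rule}) holds, i.e.
\[
\|B_i^{\rm T}{\bf o}(\lambda'',\lambda')\|_{\bar{q}}<1-T_{B_i^{\rm T}}^{\bar{q}}\|{\bf v}(\lambda'',\lambda')\|_2,
\]
then $\varphi(\theta^*(\lambda''),B_i)\leq T_{B_i^{\rm T}}^{\bar{q}}\|{\bf v}(\lambda'',\lambda')\|_2+\|B_i^{\rm T}{\bf o}(\lambda'',\lambda')\|_{\bar{q}}<1$, and rule (\ref{rule}) yields ${\bf X}^*_i(\lambda'')=0$, as claimed. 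The argument is thus a short composition of three already-established facts: (i) the ball containment (Theorem~\ref{thm:ball}), (ii) the explicit upper bound on the dual-norm maximum over that ball (Theorem~\ref{thm:bound}), and (iii) the KKT-based safe screening implication (\ref{rule}).

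There is essentially no hard computational step here; the real content was front-loaded into Theorem~\ref{thm:ball} (and Lemma~\ref{lemma:ab}) and Theorem~\ref{thm:bound}. The only point that deserves a sentence of care is verifying that the hypotheses of Theorems~\ref{thm:ball} and~\ref{thm:bound} are exactly met — in particular that $\lambda''\in(0,\lambda')$ and $\lambda'\in(0,\lambda_{max}]$ together give $\lambda_{max}\geq\lambda'>\lambda''>0$, and that $\theta^*(\lambda')$ being known is precisely the standing assumption. I would also note in passing that (\ref{ineqn:rule}) is only useful (non-vacuous) when its right-hand side is positive, but this is an applicability remark rather than something needed for the logical correctness of the implication. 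So I expect the proof to be two or three lines invoking the prior results, with the "obstacle," such as it is, being purely bookkeeping about which $\lambda'$ regime one is in.
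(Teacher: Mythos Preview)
Your proposal is correct and matches the paper's own proof essentially line for line: the paper simply invokes (\ref{rule}) together with the upper bound from Theorem~\ref{thm:bound}, then notes that hypothesis (\ref{ineqn:rule}) forces $\varphi(\theta^*(\lambda''),B_i)<1$. Your additional remarks about verifying the $\lambda'=\lambda_{max}$ case and the non-vacuousness of (\ref{ineqn:rule}) are sound extra care, but the core argument is identical.
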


\begin{proof}
From (\ref{rule}), we know that
\begin{equation}
\varphi(\theta^*(\lambda''),B_i)<1\Rightarrow {\bf X}^*_i(\lambda'')=0.
\end{equation}
By Theorem \ref{thm:bound}, we have
\begin{equation}\label{ineqn:rule0}
\varphi(\theta^*(\lambda''),B_i)\leq T_{B_i^{\rm T}}^{\bar{q}}\|{\bf v}(\lambda'',\lambda')\|_2+\|B_i^{\rm T}{\bf o(\lambda'',\lambda')}\|_{\bar{q}}.
\end{equation}
In view of \eqref{ineqn:rule}, \eqref{ineqn:rule0} results in
$$
\varphi(\theta^*(\lambda''),B_i)<1,
$$
which completes the proof.
\end{proof}

By setting $\lambda'=\lambda_{max}$ in Theorem \ref{thm:rule}, we immediately obtain the following basic screening rule.

\begin{corollary}$({\rm{Smin}_{b}})$
Consider problem (\ref{prob:primal}), let $\lambda_{max}=\max_i\|B_i^{\rm T}Y\|_{\bar{q}}$. If $\lambda\geq\lambda_{max}$, ${\bf X}^*_i(\lambda)=0$ for all $i=1,\ldots,s$. Otherwise, we have ${\bf X}^*_i(\lambda)=0$ if the following holds:
\begin{equation}
\|B_i^{\rm T}{\bf o}(\lambda,\lambda_{max})\|_{\bar{q}}<1-T_{B_i^{\rm T}}^{\bar {q}}\|{\bf v}(\lambda'',\lambda_{max})\|_2,
\end{equation}
where $\theta^*(\lambda_{max})=\frac{Y}{\lambda_{max}}$, ${\bf o}(\lambda'',\lambda_{max})$ and ${\bf v}(\lambda'',\lambda_{max})$ are defined in \eqref{eqn:o} and \eqref{eqn:v} respectively.
\end{corollary}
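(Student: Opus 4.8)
The plan is to derive the corollary as a direct specialization of Theorem~\ref{thm:rule} to the case $\lambda'=\lambda_{max}$, together with the closed-form information about the optimal solutions at $\lambda_{max}$ supplied by Theorem~\ref{thm:thetamx}. Concretely, I would split the argument into two regimes according to how $\lambda$ compares with $\lambda_{max}$.

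First I would handle the easy regime $\lambda\geq\lambda_{max}$: here Theorem~\ref{thm:thetamx} directly states that ${\bf X}^*(\lambda)=0$, so in particular ${\bf X}^*_i(\lambda)=0$ for all $i=1,\ldots,s$, and there is nothing further to prove. Second, for $\lambda<\lambda_{max}$, I would invoke Theorem~\ref{thm:rule} with the specific choice $\lambda'=\lambda_{max}$ and $\lambda''=\lambda$. The hypothesis of Theorem~\ref{thm:rule} requires that $\theta^*(\lambda')$ be known; this is precisely guaranteed by Theorem~\ref{thm:thetamx}, which gives $\theta^*(\lambda_{max})=\frac{Y}{\lambda_{max}}$. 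With this substitution, the vectors ${\bf a}(\lambda,\lambda_{max})$, ${\bf b}(\lambda_{max})$, ${\bf v}(\lambda,\lambda_{max})$ and ${\bf o}(\lambda,\lambda_{max})$ are all well-defined via \eqref{eqn:a}--\eqref{eqn:o} (note that ${\bf b}(\lambda_{max})=B_*{\bf d}_{max}$ by the second branch of \eqref{eqn:b}, and Lemma~\ref{lemma:ab} ensures $\langle{\bf a}(\lambda,\lambda_{max}),{\bf b}(\lambda_{max})\rangle\geq0$ so that ${\bf v}$ and ${\bf o}$ are the meaningful projection quantities). Then the sufficient condition \eqref{ineqn:rule} of Theorem~\ref{thm:rule}, specialized to $\lambda'=\lambda_{max}$, reads exactly
\[
\|B_i^{\rm T}{\bf o}(\lambda,\lambda_{max})\|_{\bar{q}}<1-T_{B_i^{\rm T}}^{\bar{q}}\|{\bf v}(\lambda,\lambda_{max})\|_2,
\]
and its conclusion is ${\bf X}^*_i(\lambda)=0$. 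This is precisely the claimed screening rule.

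Strictly speaking, no step here is a genuine obstacle: the corollary is essentially a restatement of Theorem~\ref{thm:rule} at the boundary parameter value, and all the analytic work has already been carried out in Theorems~\ref{thm:thetamx}, \ref{thm:ball}, \ref{thm:bound} and \ref{thm:rule}. The only point that needs a small amount of care is verifying that the case $\lambda'=\lambda_{max}$ is legitimately covered by Theorem~\ref{thm:rule}: one must check that the definitions \eqref{eqn:b}--\eqref{eqn:o} were set up to accommodate the endpoint $\lambda'=\lambda_{max}$ (the second branch of \eqref{eqn:b} and the supporting-hyperplane construction via $B_*{\bf d}_{max}$, together with inequality \eqref{ineqn:vis} replacing the now-trivial \eqref{ineqn:vi2}), which is indeed how the preceding development was arranged. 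Hence the corollary follows, and I would simply state the two-line reduction to Theorem~\ref{thm:rule} together with the citation of Theorem~\ref{thm:thetamx} for the $\lambda\geq\lambda_{max}$ case and for the identity $\theta^*(\lambda_{max})=\frac{Y}{\lambda_{max}}$.
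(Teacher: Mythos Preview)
Your proposal is correct and follows exactly the paper's approach: the paper derives this corollary in one line by ``setting $\lambda'=\lambda_{max}$ in Theorem~\ref{thm:rule}'', and your argument simply unpacks that specialization, together with the invocation of Theorem~\ref{thm:thetamx} for both the $\lambda\geq\lambda_{max}$ regime and the identity $\theta^*(\lambda_{max})=Y/\lambda_{max}$. Your additional remark about the endpoint case being covered via the second branch of \eqref{eqn:b} and inequality \eqref{ineqn:vis} is a helpful clarification but not strictly necessary, since Theorem~\ref{thm:rule} already allows $\lambda'\in(0,\lambda_{max}]$.
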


%

In practical applications, the optimal parameter value of $\lambda$ is unknown and needs to be estimated. Commonly used approaches such as cross validation and stability selection involve solving the $\ell_1/\ell_q$-regularized problems over a grid of tuning parameters $\lambda_1>\lambda_2>\ldots>\lambda_K$ to determine an appropriate value for $\lambda$. As a result, the computation is very time consuming. To address this challenge, we propose the sequential version of the proposed Smin.
\begin{corollary}$({\rm{Smin}_{s}})$
For the problem in (\ref{prob:primal}), suppose we are given a sequence of parameter values $\lambda_{max}=\lambda_0>\lambda_1>\ldots>\lambda_{\mathcal{K}}$. For any integer $0\leq k<\mathcal{K}$, we have ${\bf X}^*_i(\lambda_{k+1})=0$ if ${\bf X}^*(\lambda_k)$ is known and the following holds:
\begin{equation}
\|B_i^{\rm T}{\bf o}(\lambda_{k+1},\lambda_k)\|_{\bar{q}}<1-T_{B_i^{\rm T}}^{\bar{q}}\|{\bf v}(\lambda_{k+1},\lambda_k)\|_2,
\end{equation}
where $\theta^*(\lambda_k)=\frac{Y-\sum_{i=1}^sB_i{\bf X}_i^*(\lambda_k)}{\lambda_k}$, ${\bf o}(\lambda_{k+1},\lambda_k)$ and ${\bf v}(\lambda_{k+1},\lambda_k)$ are defined in \eqref{eqn:o} and \eqref{eqn:v} respectively.

\end{corollary}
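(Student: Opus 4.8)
The plan is to derive this corollary as an immediate specialization of Theorem~\ref{thm:rule}, taking $\lambda'=\lambda_k$ and $\lambda''=\lambda_{k+1}$ for each fixed $k$ with $0\le k<\mathcal{K}$. First I would verify the hypotheses of Theorem~\ref{thm:rule}: since the grid is strictly decreasing with $\lambda_0=\lambda_{max}$, we have $\lambda_{max}\ge\lambda_k>\lambda_{k+1}>0$, so indeed $\lambda'=\lambda_k\in(0,\lambda_{max}]$ and $\lambda''=\lambda_{k+1}\in(0,\lambda')$, exactly as required there.

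The one point that genuinely needs an argument is the transition from ``$\theta^*(\lambda')$ is known'' (the hypothesis of Theorem~\ref{thm:rule}) to ``${\bf X}^*(\lambda_k)$ is known'' (the hypothesis here): I must show that the primal optimum determines the dual optimum through the stated formula $\theta^*(\lambda_k)=(Y-\sum_{i=1}^sB_i{\bf X}_i^*(\lambda_k))/\lambda_k$. This follows directly from the stationarity KKT condition \eqref{eqn:KKT1}, namely $Y=\sum_{i=1}^sB_i{\bf X}_i^*(\lambda_k)+\lambda_k\theta^*(\lambda_k)$, solved for $\theta^*(\lambda_k)$. Since the dual \eqref{prob:dual} is a Euclidean projection onto a convex set, its solution is unique, so this expression is well defined even if the primal optimum is not; in particular $\sum_{i=1}^sB_i{\bf X}_i^*(\lambda_k)$ is the same for every primal minimizer. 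Substituting this $\theta^*(\lambda_k)$ into \eqref{eqn:a}, \eqref{eqn:b}, \eqref{eqn:v} and \eqref{eqn:o} then expresses ${\bf o}(\lambda_{k+1},\lambda_k)$ and ${\bf v}(\lambda_{k+1},\lambda_k)$ purely in terms of quantities already available, so the condition \eqref{ineqn:rule} of Theorem~\ref{thm:rule} becomes the condition stated in the corollary.

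It remains to handle the boundary index $k=0$, where $\lambda'=\lambda_0=\lambda_{max}$ and the $\lambda'=\lambda_{max}$ branch of \eqref{eqn:b} (using $B_*{\bf d}_{max}$) is invoked instead. Here Theorem~\ref{thm:thetamx} gives ${\bf X}^*(\lambda_{max})=0$ and $\theta^*(\lambda_{max})=Y/\lambda_{max}$, which is consistent with the displayed formula for $\theta^*(\lambda_0)$, so no separate treatment is needed: Theorem~\ref{thm:rule} already covers $\lambda'\in(0,\lambda_{max}]$. With the hypotheses in place, Theorem~\ref{thm:rule} applied to $(\lambda',\lambda'')=(\lambda_k,\lambda_{k+1})$ asserts precisely that ${\bf X}_i^*(\lambda_{k+1})=0$ whenever $\|B_i^{\rm T}{\bf o}(\lambda_{k+1},\lambda_k)\|_{\bar{q}}<1-T_{B_i^{\rm T}}^{\bar{q}}\|{\bf v}(\lambda_{k+1},\lambda_k)\|_2$, which is the claimed inequality, completing the proof. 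I do not anticipate any real obstacle: the mathematical substance lives entirely in Theorems~\ref{thm:ball}, \ref{thm:bound} and \ref{thm:rule}, and what remains is the bookkeeping of recovering $\theta^*(\lambda_k)$ from the primal via \eqref{eqn:KKT1} and checking the $k=0$ edge case against Theorem~\ref{thm:thetamx}.
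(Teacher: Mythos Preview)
Your proposal is correct and follows essentially the same approach as the paper: the paper's proof consists of a single sentence invoking Theorem~\ref{thm:rule} with $\lambda'=\lambda_k$, $\lambda''=\lambda_{k+1}$ together with the KKT relation \eqref{eqn:KKT1} to recover $\theta^*(\lambda_k)$ from ${\bf X}^*(\lambda_k)$. Your write-up simply spells out more carefully the bookkeeping (range of $\lambda_k$, uniqueness of the dual optimum, the $k=0$ edge case), all of which is implicit in the paper's one-line proof.
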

\begin{proof}
The statement easily follows by setting $\lambda''=\lambda_{k+1}$ and $\lambda'=\lambda_{k}$ and applying \eqref{eqn:KKT1} and Theorem \ref{thm:rule}.
\end{proof}

\section{Experiments}\label{s:experiment}

In Sections \ref{subsection:exp1} and \ref{subsection:exp2}, we conduct experiments to evaluate the efficiency of the
proposed algorithm, that is, GLEP$_1q$, using both synthetic and real-world data sets. We evaluate the proposed Smin$_s$ on large-scale data sets and compare the performance of Smin$_s$ with DPP and strong rules which achieve state-of-the-art performance for the $\ell_1/\ell_q$-regularized problems in Section \ref{subsection:screening}. We set
the regularization parameter as $\lambda= r \times
\lambda_{\max}$, where $0 < r \leq 1$ is the ratio, and
$\lambda_{\max}$ is the maximal value above which the
$\ell_1/\ell_q$-norm regularized problem
(\ref{eq:optimization:problem}) obtains a zero solution (see
Theorem~\ref{thm:thetamx}). We try the following values for
$q$: $1,1.25, 1.5, 1.75, 2, 2.33, 3, 5$, and $\infty$. The source
codes, included in the SLEP package~\cite{Liu:2009:SLEP:manual}, are
available
online\footnote{\url{http://www.public.asu.edu/~jye02/Software/SLEP/}}.

\subsection{Simulation Studies}\label{subsection:exp1}

We use the synthetic data to study the effectiveness of the
$\ell_1/\ell_q$-norm regularization for reconstructing the jointly
sparse matrix under different values of $q>1$. Let $A \in
\mathbb{R}^{m \times d}$ be a measurement matrix with entries being
generated randomly from the standard normal distribution, $X^* \in
\mathbb{R}^{d \times k}$ be the jointly sparse matrix with the first
$\tilde d < d$ rows being nonzero and the remaining rows exactly
zero, $Y = A X^* + Z$ be the response matrix, and $Z \in
\mathbb{R}^{m \times k}$ be the noise matrix whose entries are drawn
randomly from the normal distribution with mean zero and standard
deviation $\sigma=0.1$. We treat each row of $X^*$ as a group, and
estimate $X^*$ from $A$ and $Y$ by solving the following
$\ell_1/\ell_q$-norm regularized problem: $$
    X= \arg \min_{W} \frac{1}{2}\|AW-Y\|_F^2 + \lambda \sum_{i=1}^d \|W^i\|_q,$$
where $W^i$ denotes the $i$-th row of $W$. We set $m=100$, $d=200$,
and $\tilde d=k=50$. We try two different settings for $X^*$, by
drawing its nonzero entries randomly from 1) the uniform
distribution in the interval $[0,1]$ and 2) the standard normal
distribution.

\begin{figure}
  \centering
  \includegraphics[width=2.1in]{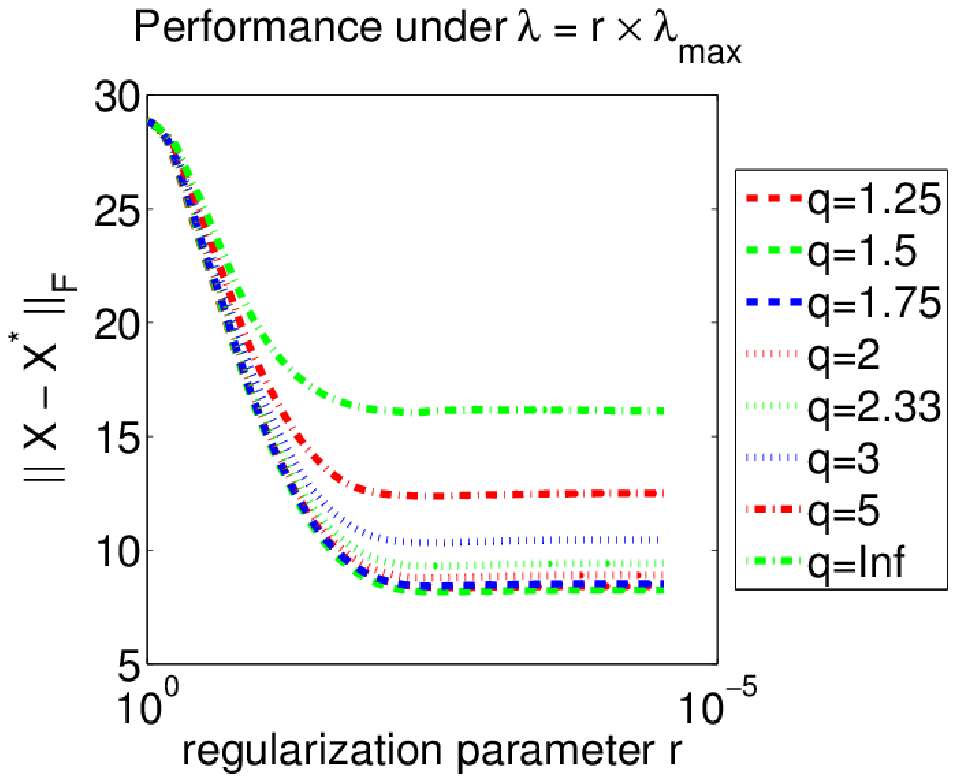}
  \hspace{0.2in}
  \includegraphics[width=2.1in]{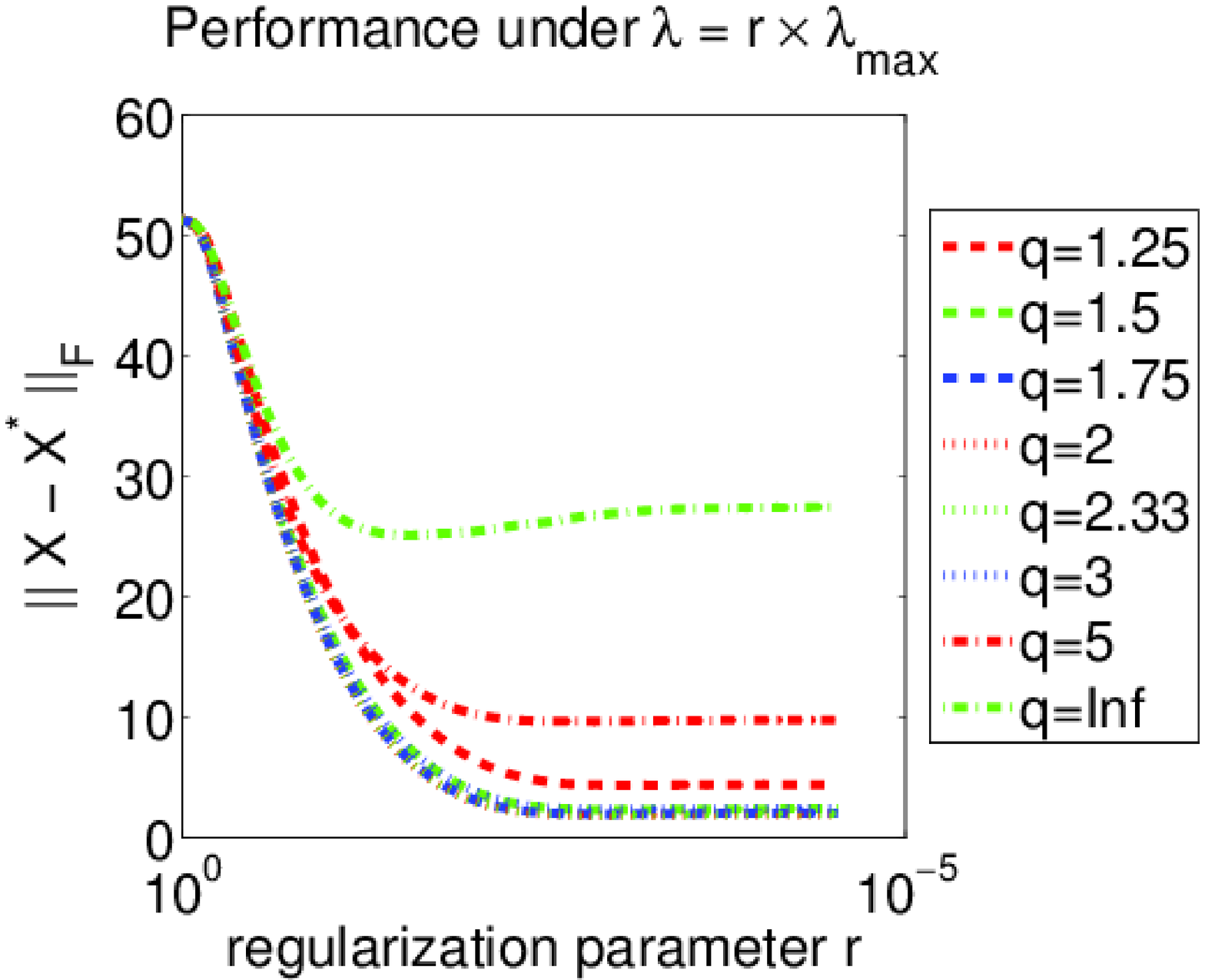}\\

  \includegraphics[width=2.1in]{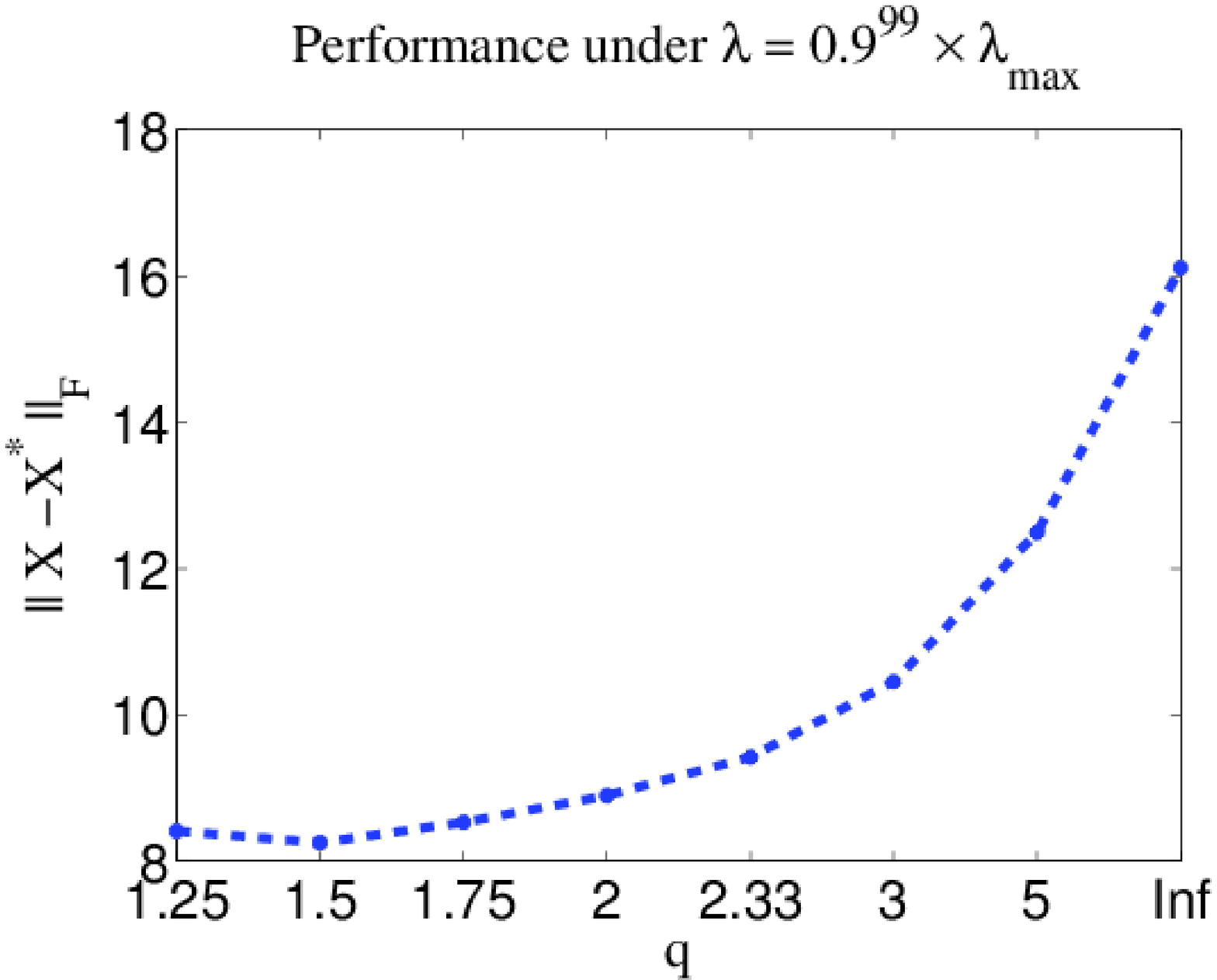}
  \hspace{0.2in}
  \includegraphics[width=2.1in]{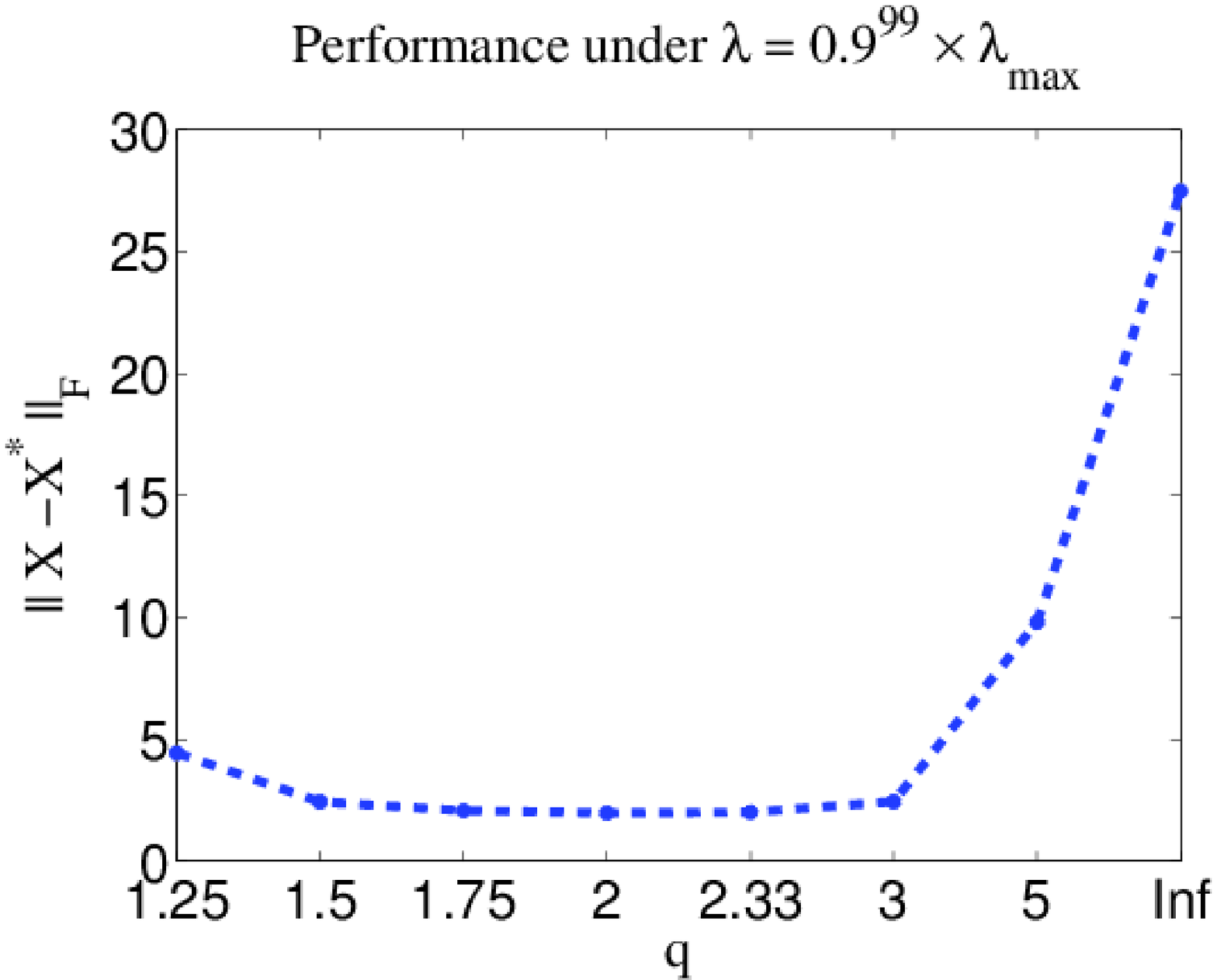}\\

  \includegraphics[width=2.1in]{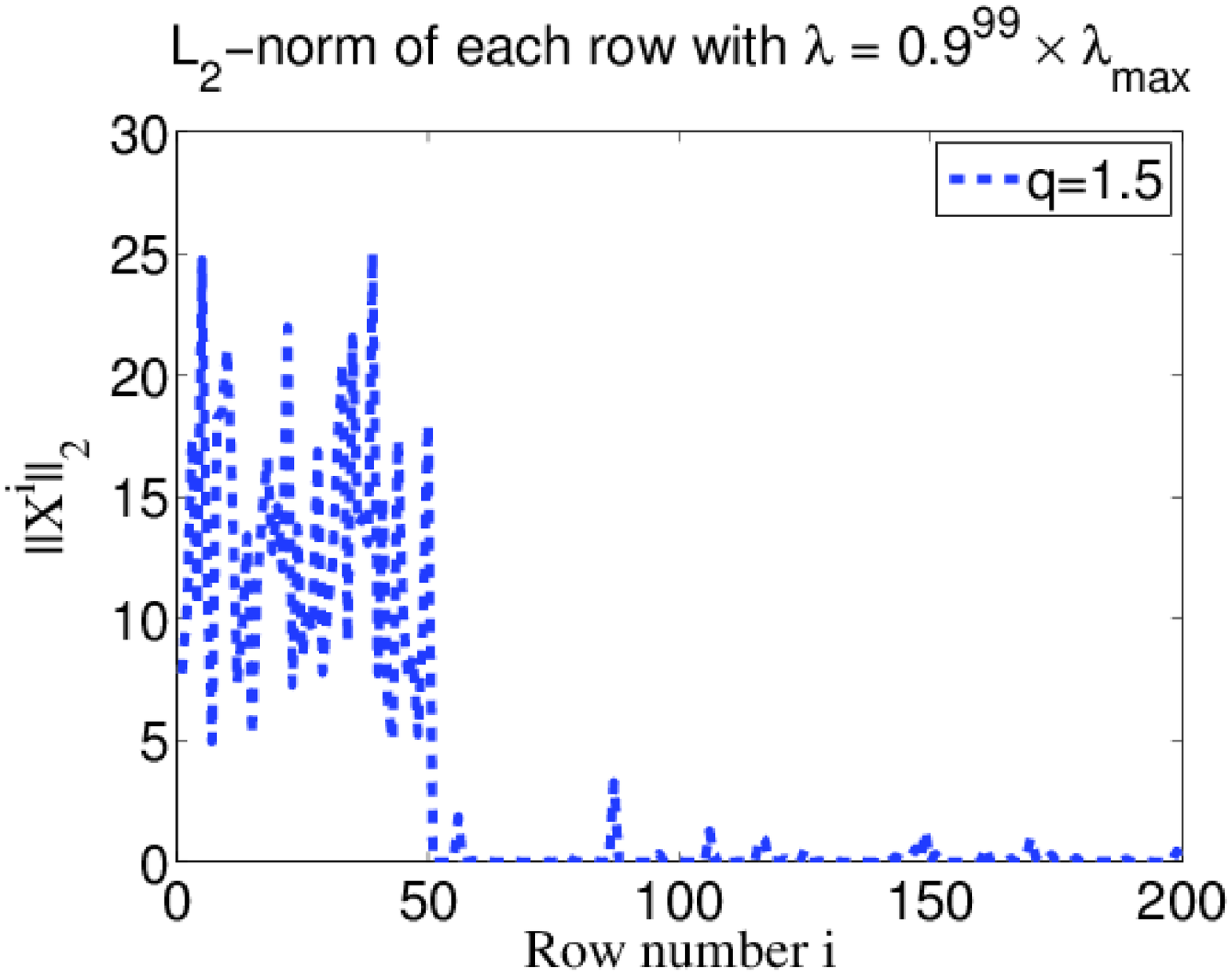}
  \hspace{0.2in}
  \includegraphics[width=2.1in]{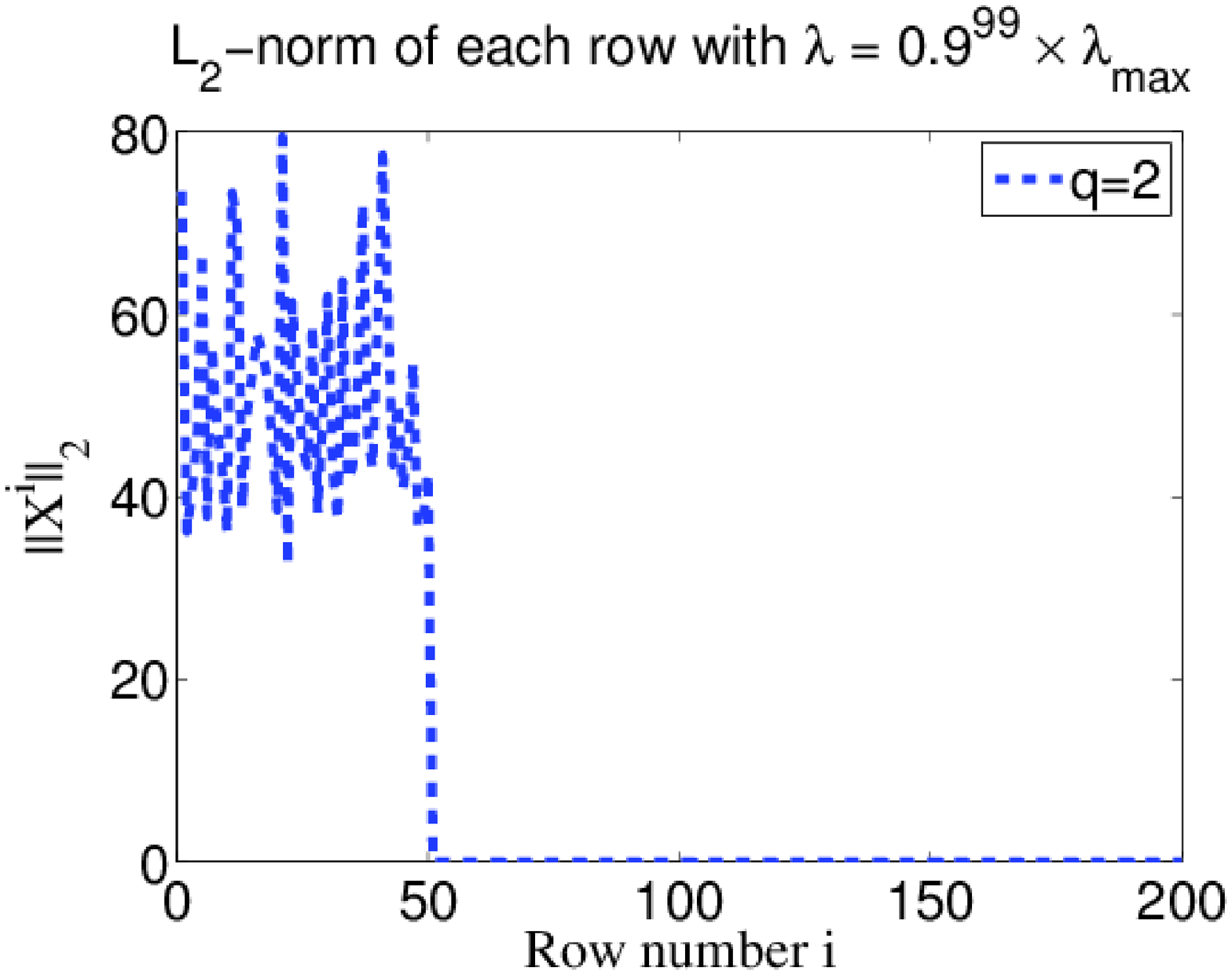}\\
\caption{ {\small Performance of the $\ell_1/\ell_q$-norm
regularization for reconstructing the jointly sparse $X^*$. The
nonzero entries of $X^*$ are  drawn randomly from the uniform
distribution for the plots in the first column, and from the normal
distribution for the plots in the second column. Plots in the first two
rows show $\|X-X^*\|_F$, the Frobenius norm difference between the
solution and the truth; and plots in the third row show the
$\ell_2$-norm of each row of the solution
$X$.}}\label{fig:synthetic}
\end{figure}

We compute the solutions corresponding to a sequence of decreasing
values of $\lambda= r \times \lambda_{\max}$, where $r=0.9^{i-1}$,
for $i=1, 2, \ldots, 100$. In addition, we use the solution
corresponding to the $0.9^i \times \lambda_{\max}$ as the ``warm"
start for $0.9^{i+1} \times \lambda_{\max}$. We report the results
in Figure~\ref{fig:synthetic}, from which we can observe: 1) the
distance between the solution $X$ and the truth $X^*$ usually
decreases with decreasing values of $\lambda$; 2) for the uniform
distribution (see the plots in the first row), $q=1.5$ performs the
best; 3) for the normal distribution (see the plots in the second
row), $q=1.5, 1.75, 2$ and 3 achieve comparable performance and
perform better than $q=1.25$, 5 and $\infty$; 4) with a properly
chosen threshold, the support of $X^*$ can be exactly recovered by
the $\ell_1/\ell_q$-norm regularization with an appropriate value of
$q$, e.g., $q=1.5$ for the uniform distribution, and $q=2$ for the
normal distribution; and 5) the recovery of $X^*$ with nonzero
entries drawn from the normal distribution is more accurate than that with
entries generated from the uniform distribution.

The existing theoretical
results~\cite{Liu:han:2009:report,Negahban:2009} can not tell which
$q$ is the best; and we believe that the optimal $q$ depends on the
distribution of $X^*$, as indicated from the above results.
Therefore, it is necessary to conduct the distribution-specific
theoretical studies (note that the previous studies usually make no
assumption on $X^*$). 
The proposed GLEP$_{1q}$ algorithm shall help verify the theoretical
results to be established.

\subsection{Performance on the Letter Data Set}\label{subsection:exp2}

We apply the proposed GLEP$_{1q}$ algorithm for multi-task learning
on the Letter data set~\cite{Obozinski:2007}, which consists of
45,679 samples from 8 default tasks of two-class classification
problems for the handwritten letters: c/e, g/y, m/n, a/g, i/j, a/o,
f/t, h/n. The writings were collected from over 180 different
writers, with the letters being represented by $8\times 16$ binary
pixel images. We use the least squares loss for $l(\cdot)$.

\begin{figure}[h]
  \centering
  \includegraphics[width=2.1in]{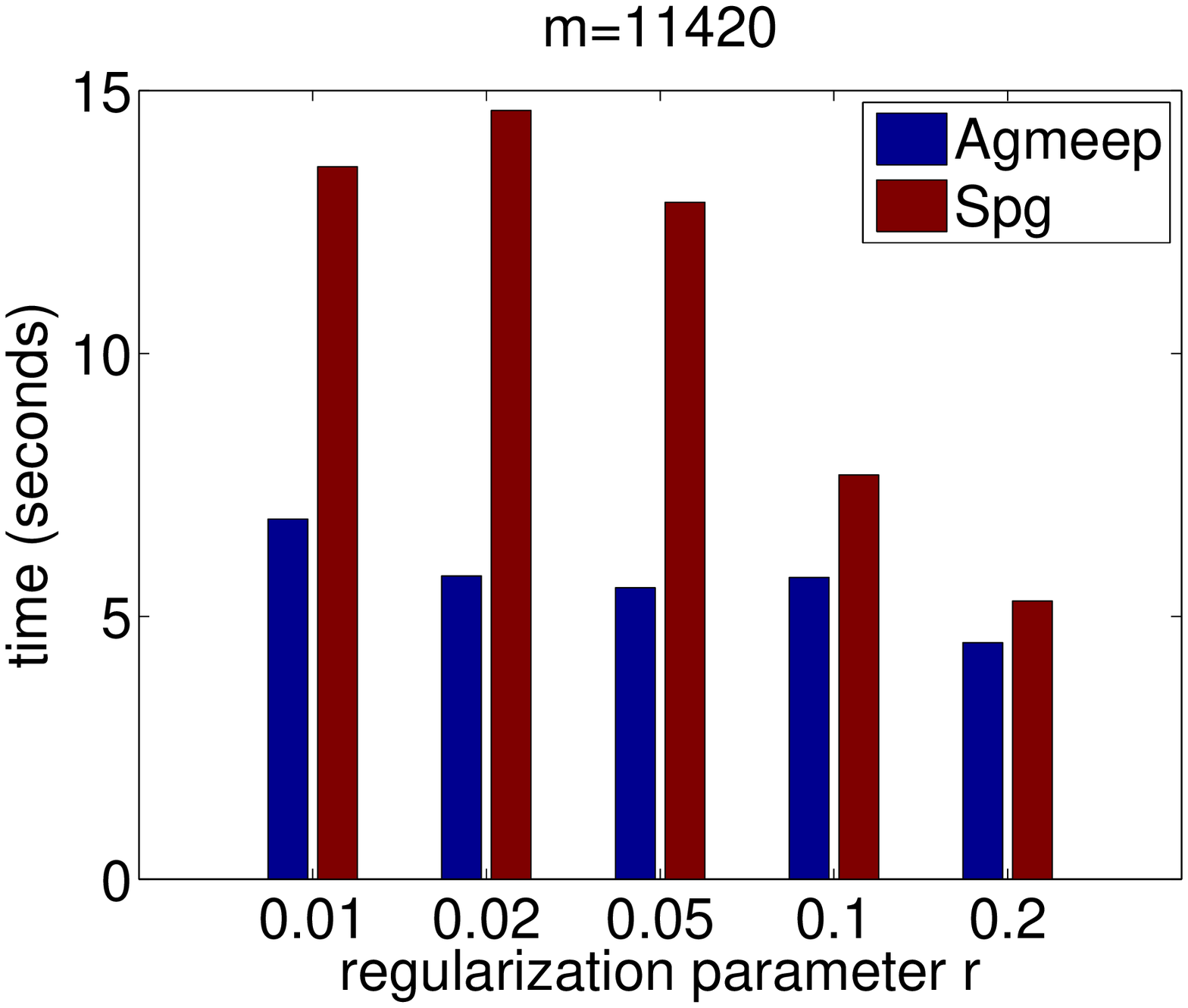}
  \includegraphics[width=2.1in]{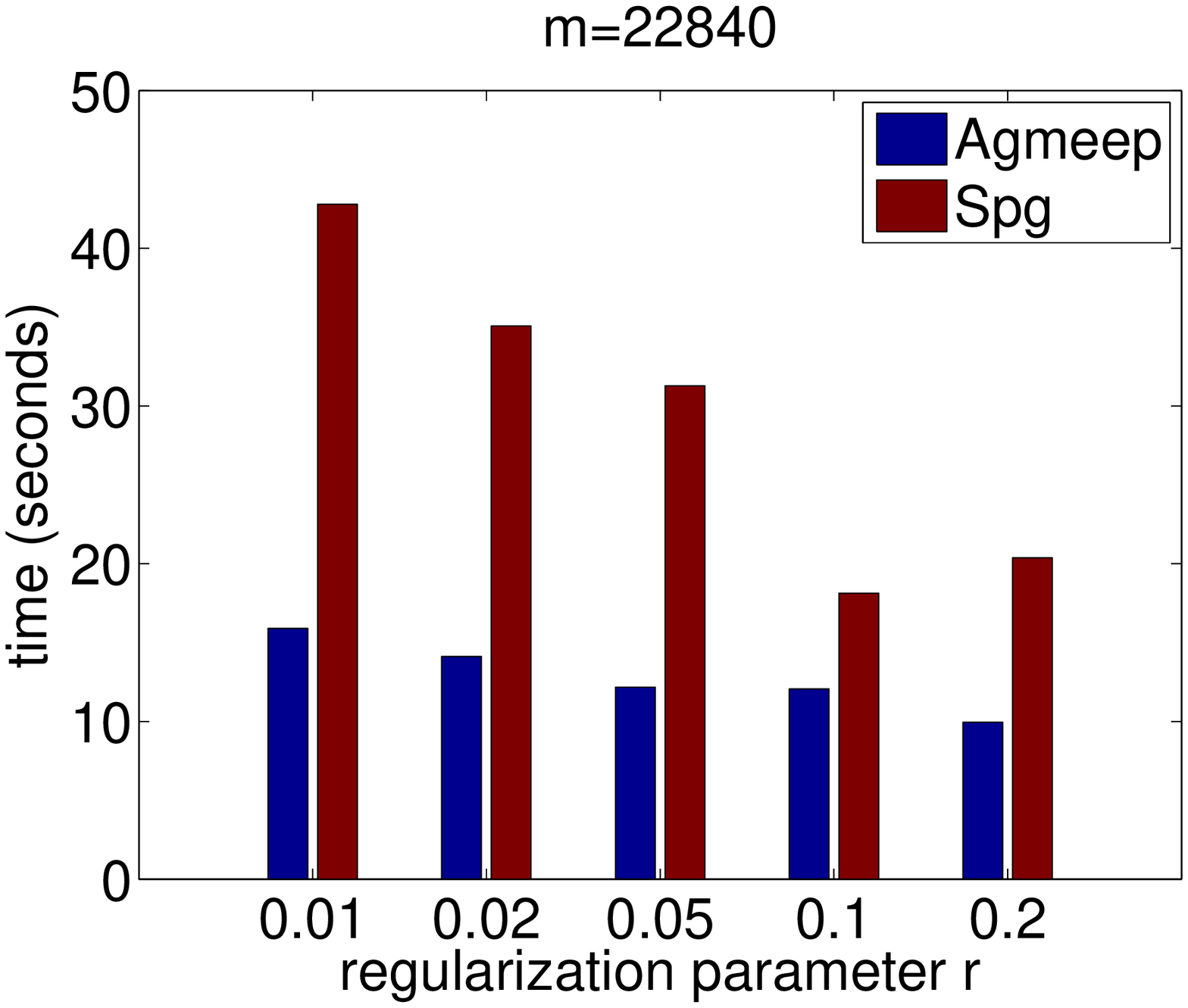}\\
  \includegraphics[width=2.1in]{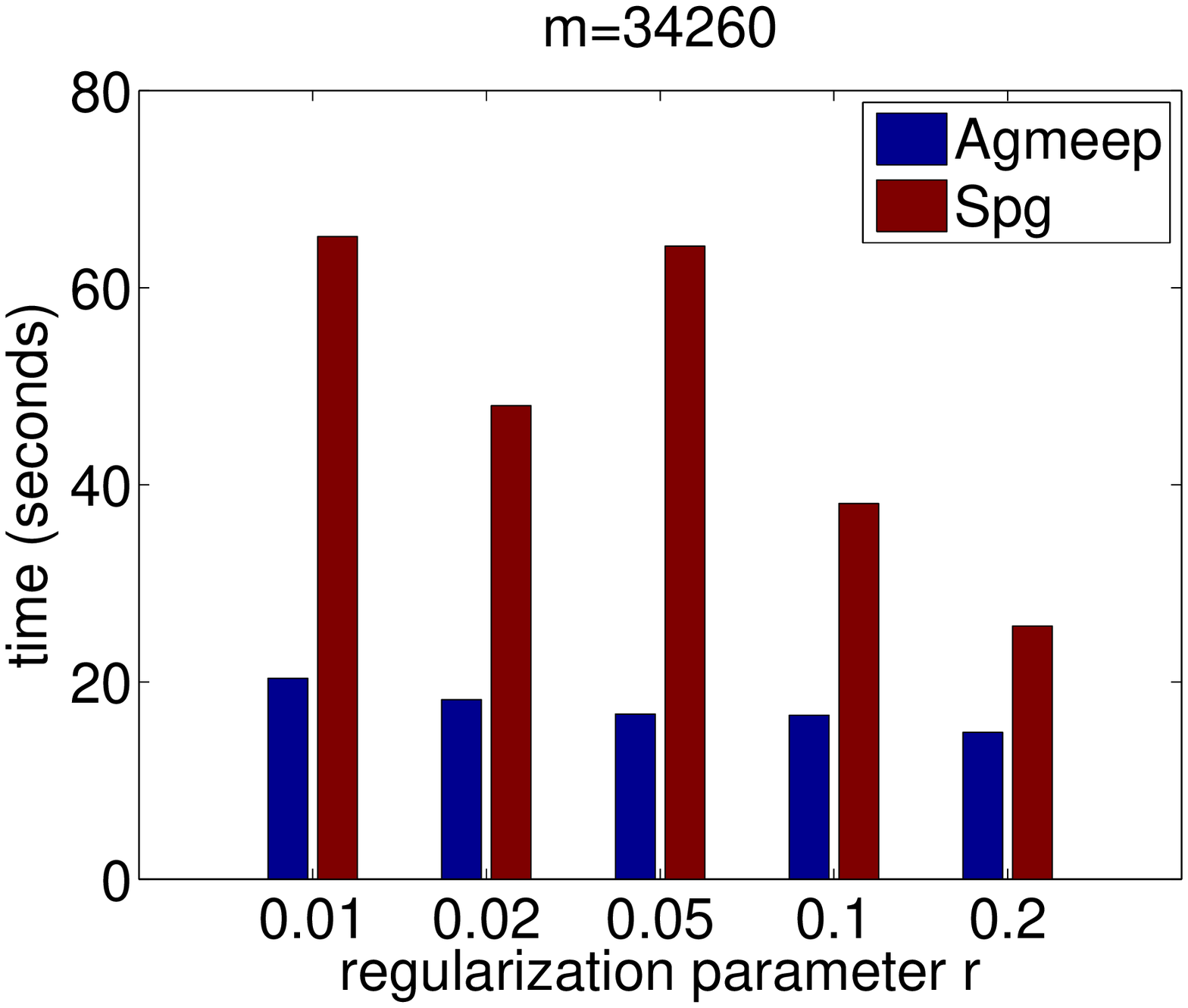}
  \includegraphics[width=2.1in]{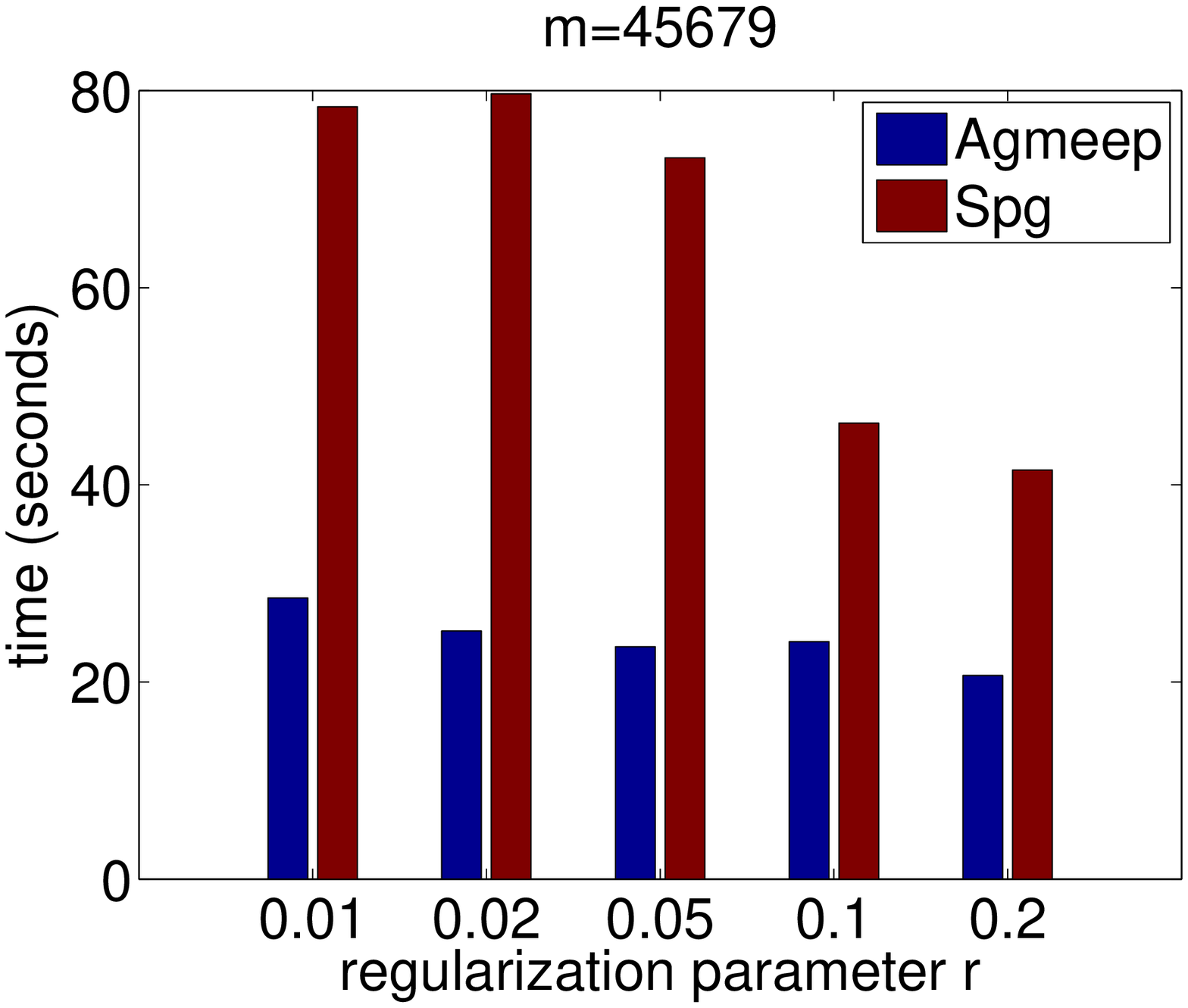}
  \caption{{\small Computational time (seconds) comparison between GLEP$_{1q}$ ($q=2$) and Spg under different values
  of $\lambda=r \times \lambda_{\max}$ and $m$.}}\label{fig:comp}
\end{figure}

\subsubsection{Efficiency Comparison with Spg} We compare GLEP$_{1q}$
with the Spg algorithm proposed in~\cite{BergFriedlander:2008}. Spg
is a specialized solver for the $\ell_1/\ell_2$-ball constrained
optimization problem, and has been shown to outperform existing
algorithms based on blockwise coordinate descent and projected
gradient. In Figure~\ref{fig:comp}, we report the computational time
under different values of $m$ (the number of samples) and $\lambda=r
\times \lambda_{\max}$ ($q=2$). It is clear from the plots that
GLEP$_{1q}$ is much more efficient than Spg, which may attribute to:
1) GLEP$_{1q}$ has a better convergence rate than Spg; and 2) when
$q=2$, the EP$_{1q}$ in GLEP$_{1q}$ can be computed analytically (see
Remark~\ref{remark:q=2}), while this is not the case in Spg.

\subsubsection{Efficiency under Different Values of $q$ } We
report the computational time (seconds) of GLEP$_{1q}$ under
different values of $q$, $\lambda=r \times \lambda_{\max}$ and $m$
(the number of samples) in Figure~\ref{fig:time}. We can observe from
this figure that the computational time of GLEP$_{1q}$ under
different values of $q$ (for fixed $r$ and $m$) is comparable.
Together with the result on the comparison with Spg for $q=2$, this
experiment shows the promise of GLEP$_{1q}$ for solving large-scale
problems for any $q \geq 1$.


\begin{figure}
  \centering
  \includegraphics[width=2.1in]{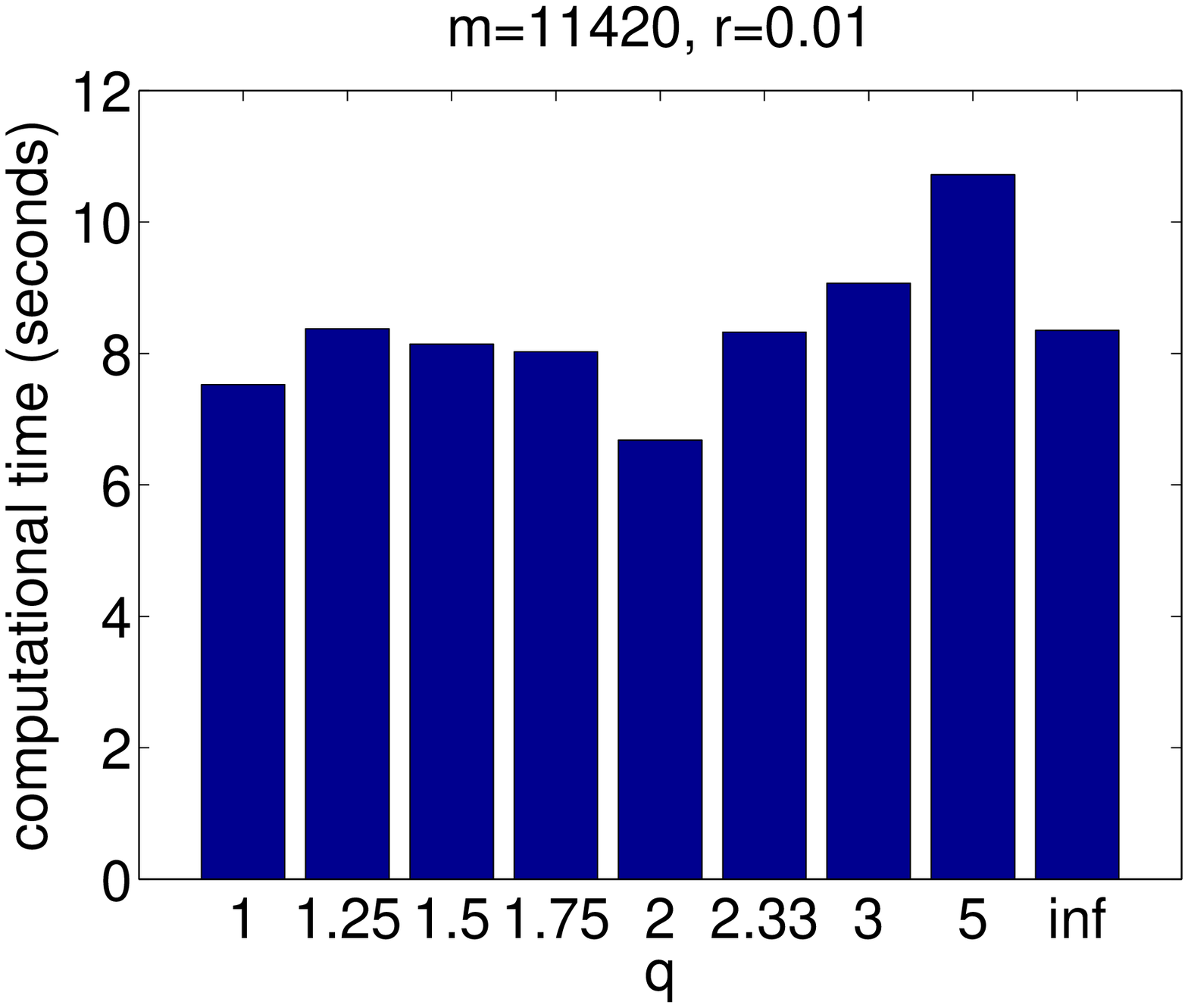}
  \includegraphics[width=2.1in]{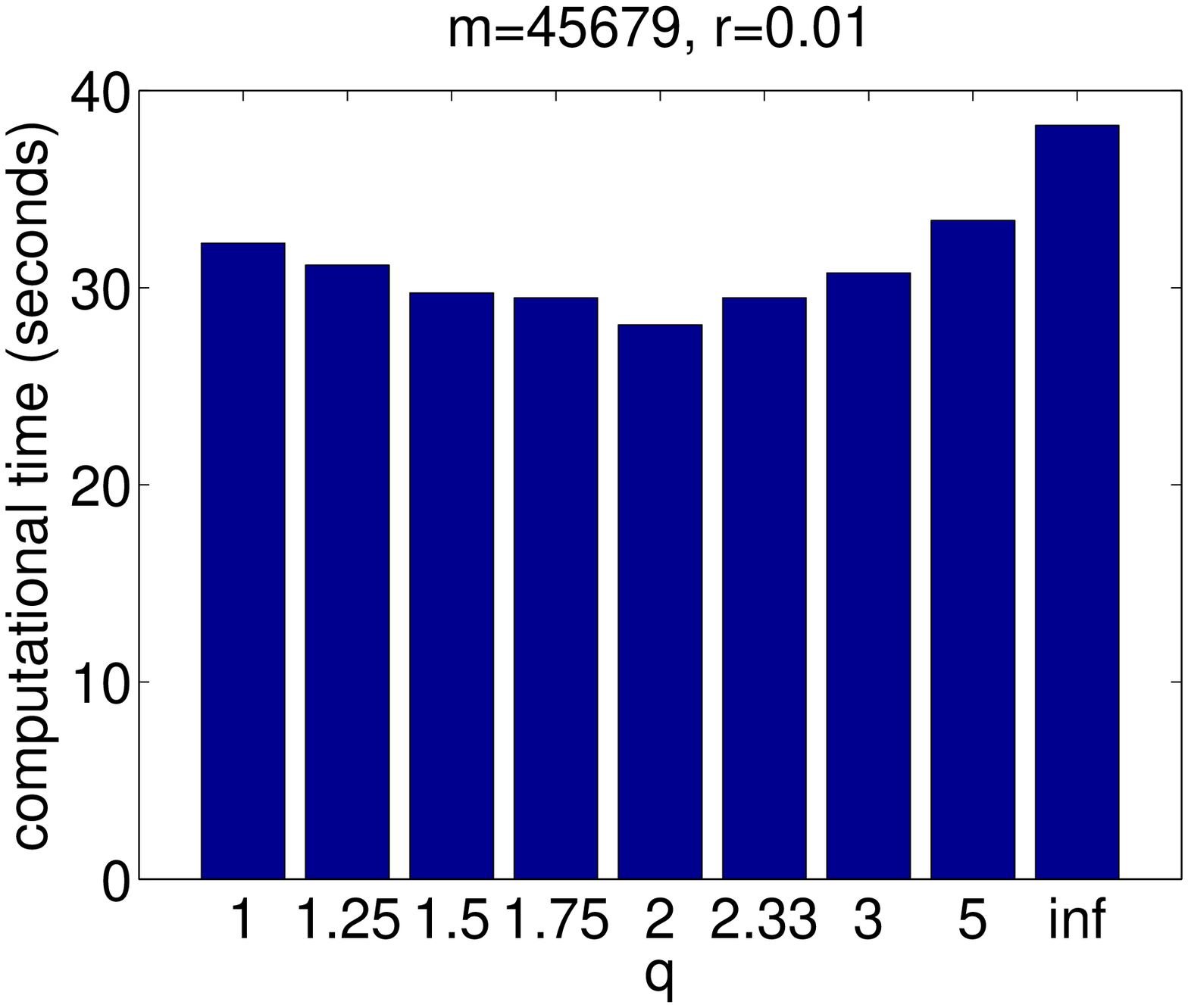}\\
  \includegraphics[width=2.1in]{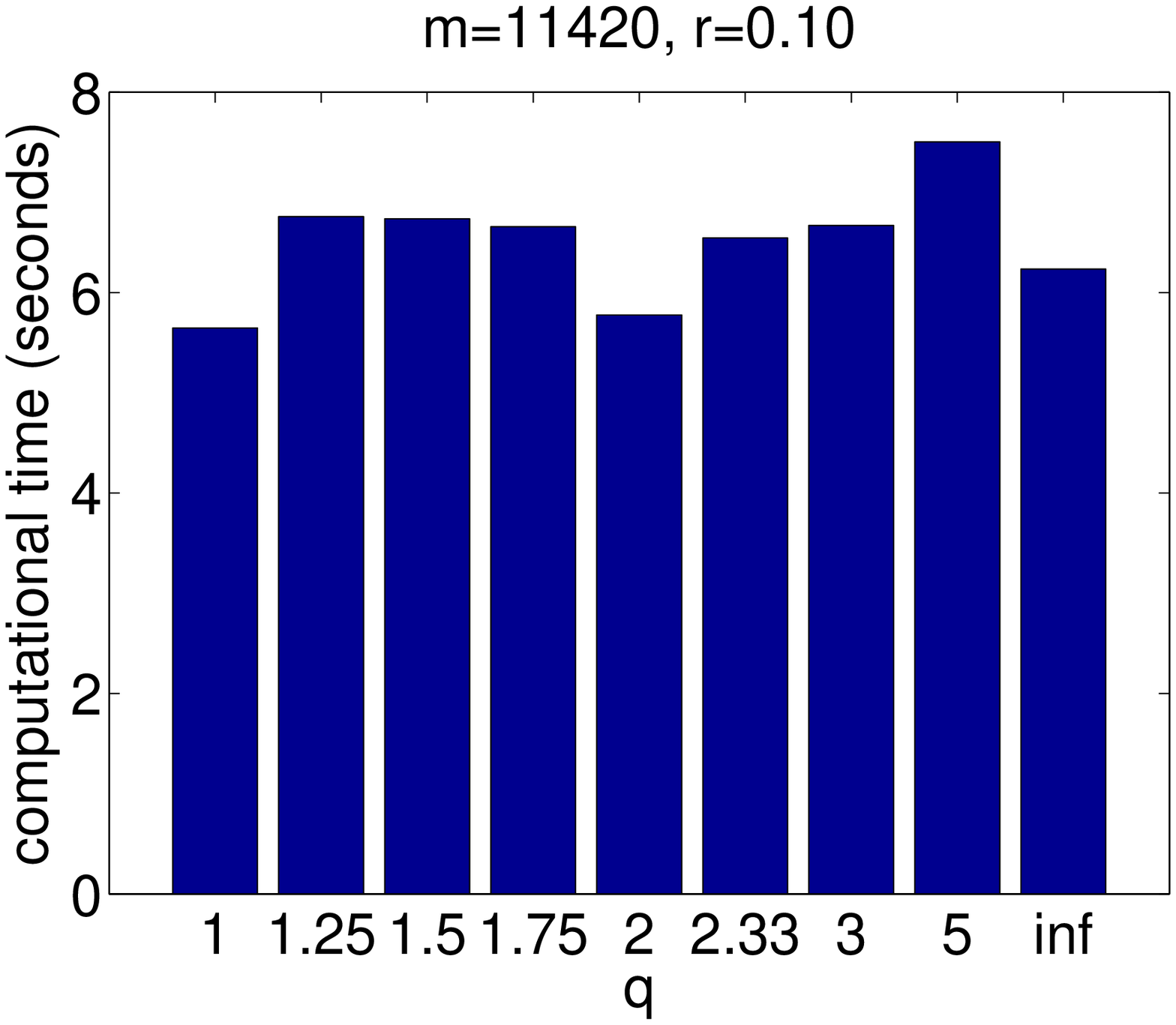}
  \includegraphics[width=2.1in]{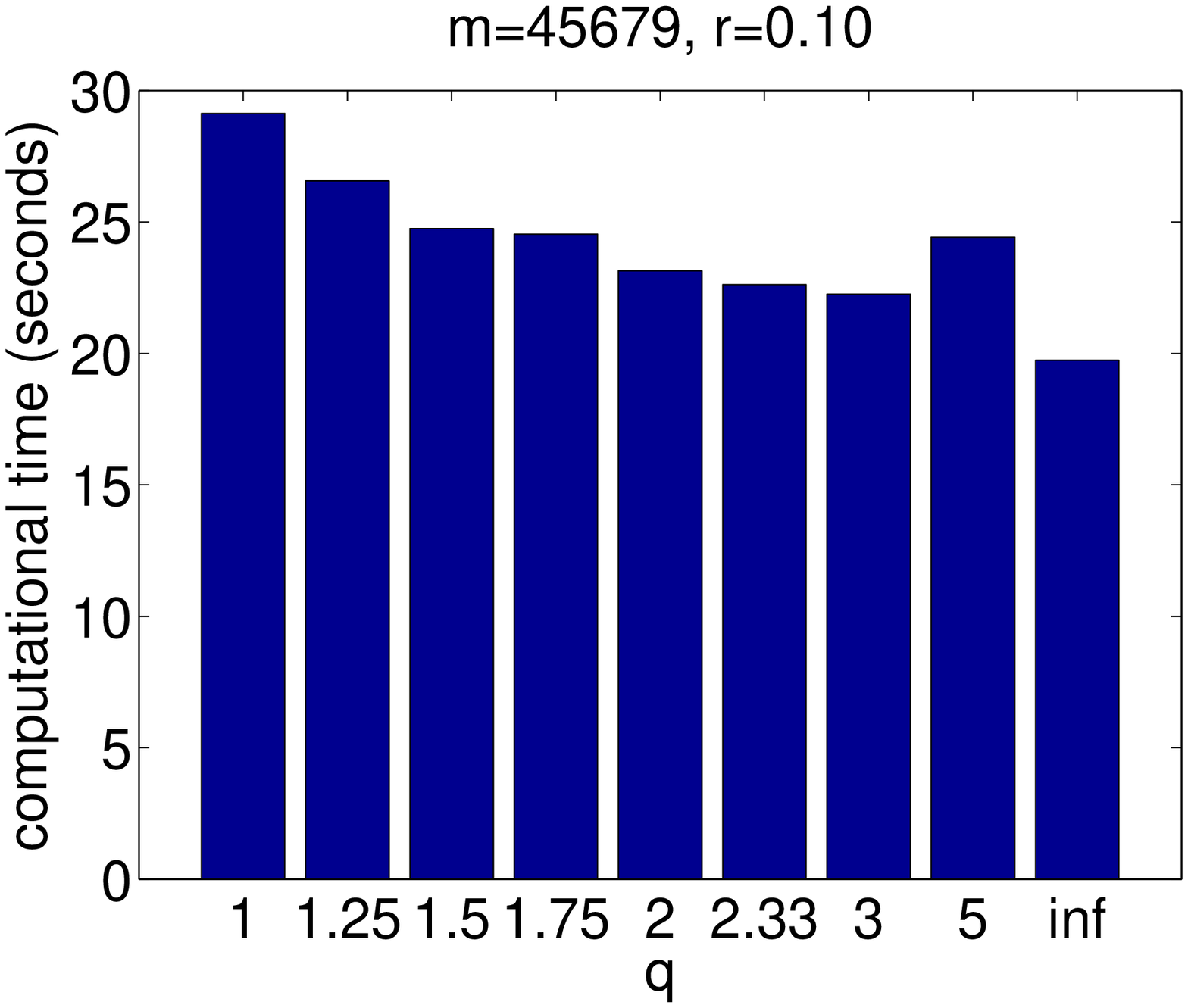}\\
  \caption{{\small Computation time (seconds) of GLEP$_{1q}$ under different values of
  $m$, $q$ and $r$.}} \label{fig:time}
\end{figure}

\begin{figure}
  \centering
  \includegraphics[width=2.1in]{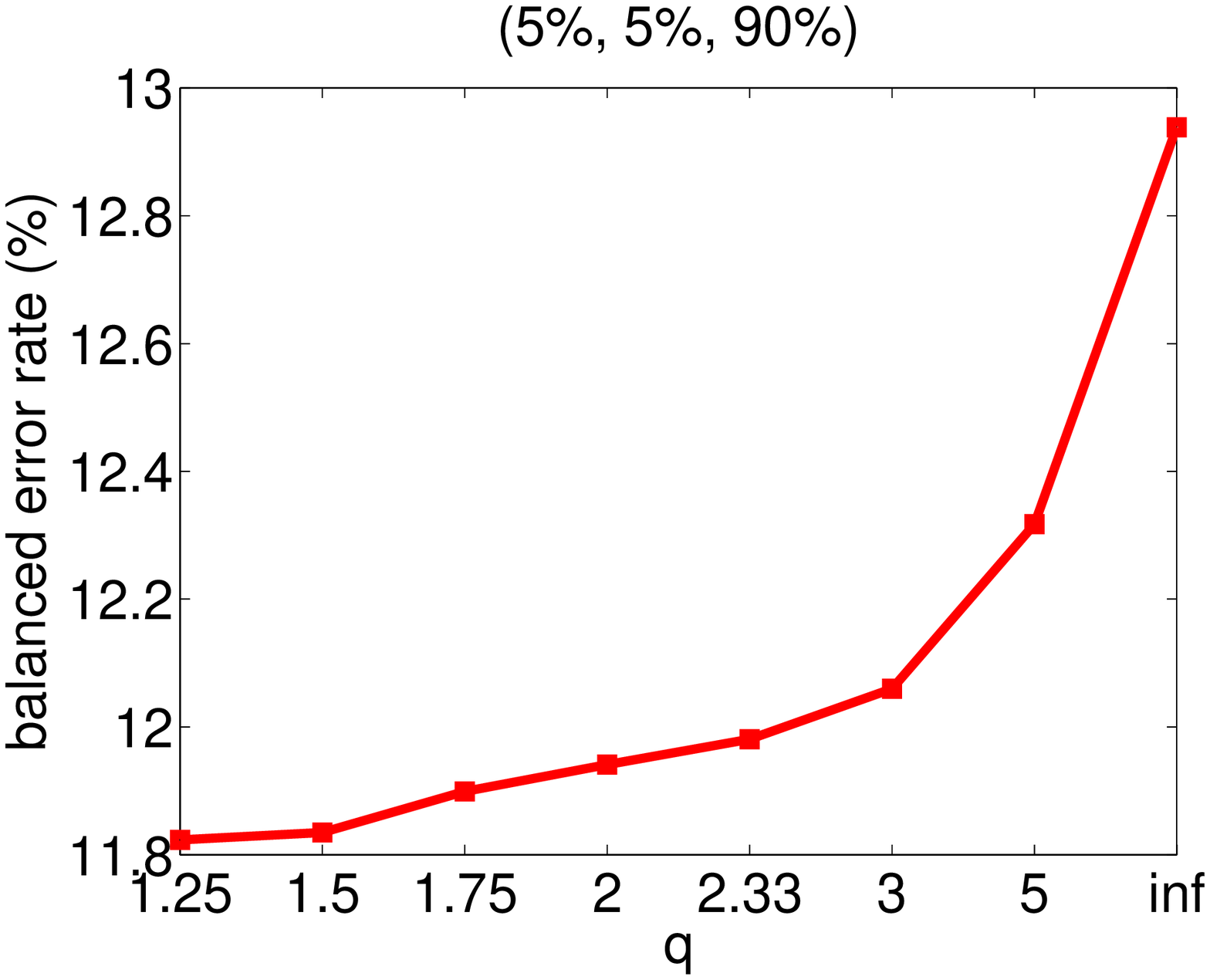}
  \includegraphics[width=2.1in]{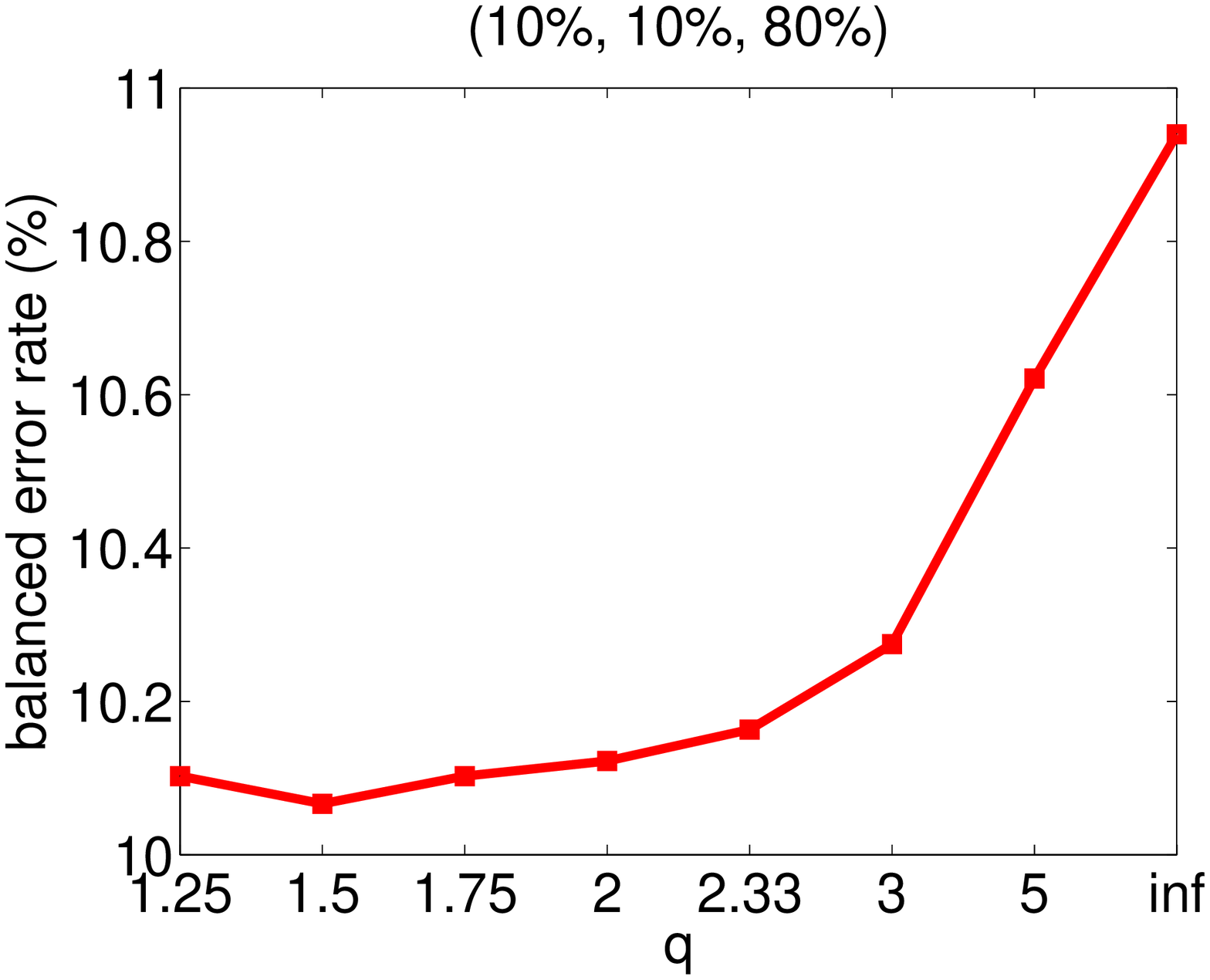}\\
  \caption{{\small
The balanced error rate achieved by the $\ell_1/\ell_q$
regularization under different values of $q$. The title of each plot
indicates the percentages of samples used for training, validation,
and testing.}} \label{fig:letter:performance}
\end{figure}

\subsubsection{Performance under Different Values of $q$ } We
randomly divide the Letter data into three non-overlapping sets:
training, validation, and testing. We train the model using the
training set, and tune the regularization parameter $\lambda=r \times
\lambda_{\max}$ on the validation set, where $r$ is chosen from $\{
10^{-1}, 5\times 10^{-2}, 2 \times 10^{-2}, 1 \times 10^{-2}, 5
\times 10^{-3}, 2 \times 10^{-3}, 1 \times 10^{-3}\}$. On the testing
set, we compute the balanced error rate~\cite{Guyon:2004}. We report
the results averaged over 10 runs in
Figure~\ref{fig:letter:performance}. The title of each plot indicates
the percentages of samples used for training, validation, and
testing. The results show that, on this data set,  a smaller value of
$q$ achieves better performance.

\subsection{Performance of The Proposed Screening Method}\label{subsection:screening}

In this section, we evaluate the proposed screening method (Smin) for solving problem (\ref{prob:primal}) with different values of $q$, $n_g$ and $p$, which correspond to the mixed-norm, the number of groups and the feature dimension, respectively. We compare the performance of Smin$_s$, the sequential DPP rule, and the sequential strong rule in identifying inactive groups of problem (\ref{prob:primal}) along a sequence of $91$ tuning parameter values equally spaced on the scale of $r=\frac{\lambda}{\lambda_{max}}$ from $1.0$ to $0.1$. To measure the performance of the screening methods, we report the rejection ratio, i.e., the ratio between the number of groups discarded by the screening methods and the number of groups with 0 coefficients in the true solution. It is worthwhile to mention that Smin$_s$ and DPP are ``{\it safe}", that is, the discarded groups are guaranteed to have 0 coefficients in the solution. However, strong rules may mistakenly discard groups which have non-zero coefficients in the solution (notice that, DPP can be easily extended to the general $\ell_1/\ell_q$-regularized problems with $q\neq2$ via the techniques developed in this paper).

More importantly, we show that the efficiency of the proposed GLEP$_{1q}$ can be improved by ``{\it three orders of magnitude}" with Smin$_s$.
Specifically, in each experiment, we first apply GLEP$_{1q}$ with warm-start to solve problem (\ref{prob:primal}) along the given sequence of $91$ parameters and report the total running time. Then, for each parameter, we first use the screening methods to discard the inactive groups and then apply GLEP$_{1q}$ to solve problem (\ref{prob:primal}) with the reduced data matrix. We repeat this process until all of the problems have been solved. The total running time (including screening) is reported to demonstrate the effectiveness of the proposed Smin$_s$ in accelerating the computation. Finally, we also report the total running time of the screening methods in discarding the inactive groups for problem (\ref{prob:primal}) with the given sequence of parameters. The entries of the response vector $Y$ and the data matrix $B$  are generated i.i.d. from a standard Gaussian distribution, and the correlation between $Y$ and the columns of $B$ is uniformly distributed on the interval $[-0.8,0.8]$. For each experiment, we repeat the computation $20$ times and report the average results.

\subsubsection{Efficiency with Different Values of $q$}\label{sssection:diff_q}

In this experiment, we evaluate the performance of the proposed Smin$_s$ and demonstrate its effectiveness in accelerating the computation of GLEP$_{1q}$ with different values of $q$. The size of the data matrix $B$ is fixed to be $1000\times 10000$, and we randomly divide $B$ into $1000$ non-overlapping groups.

\begin{figure}[t]
\centering{
\subfigure[$q=1$] { \label{fig:q1e1}
\includegraphics[width=0.31\columnwidth]{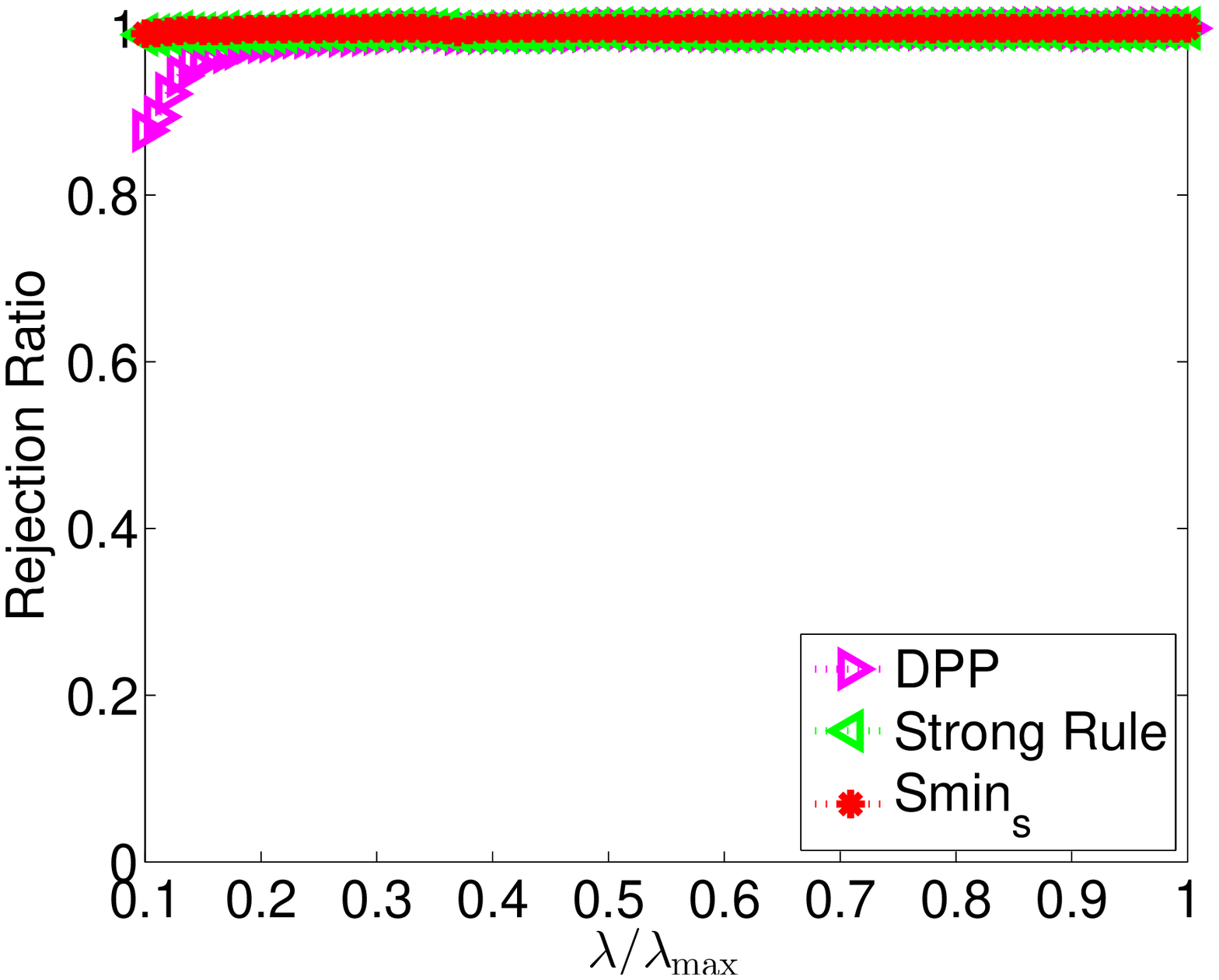}
}
\subfigure[$q=1.25$] { \label{fig:q125e1}
\includegraphics[width=0.31\columnwidth]{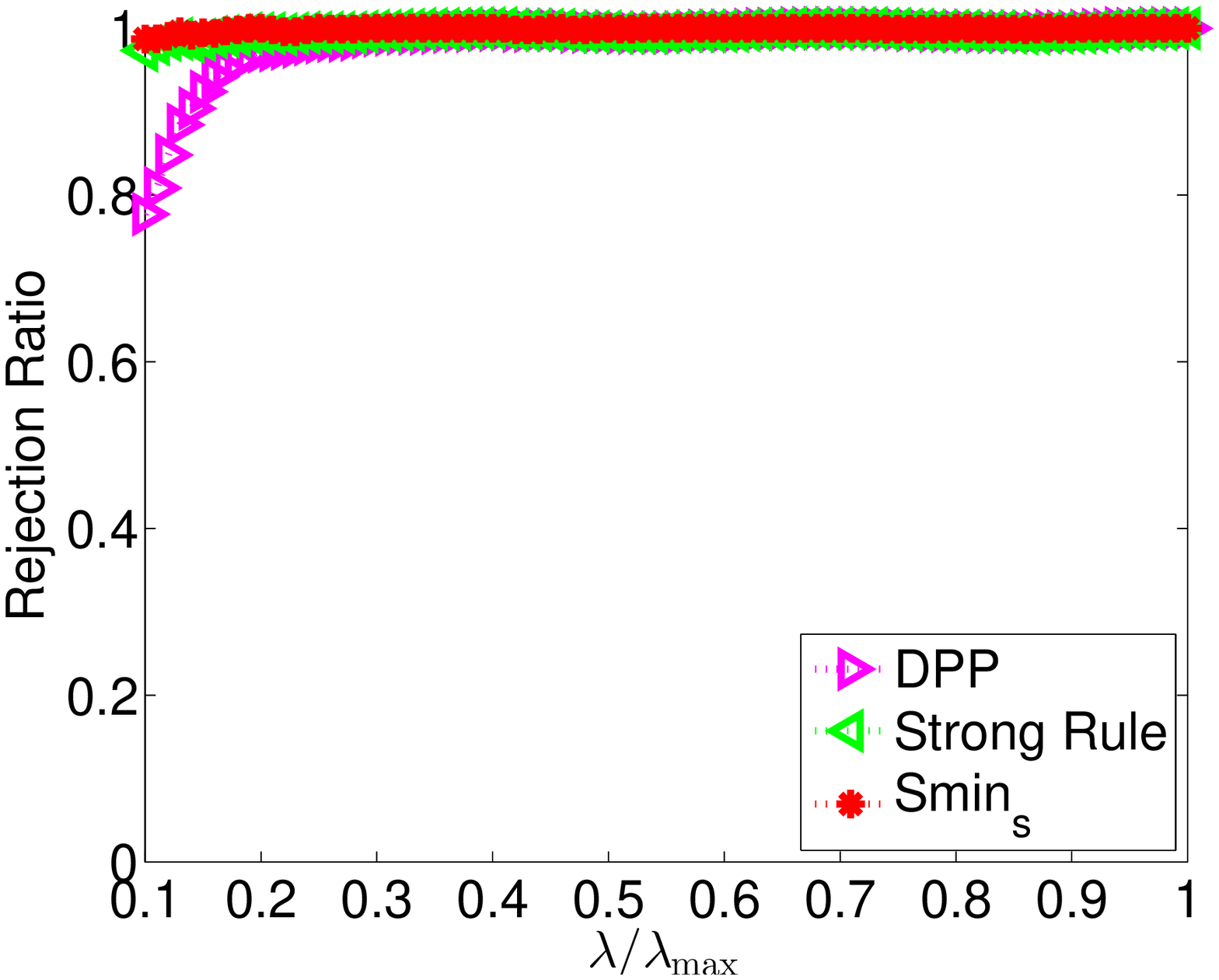}
}
\subfigure[$q=1.5$] { \label{fig:q15e1}
\includegraphics[width=0.31\columnwidth]{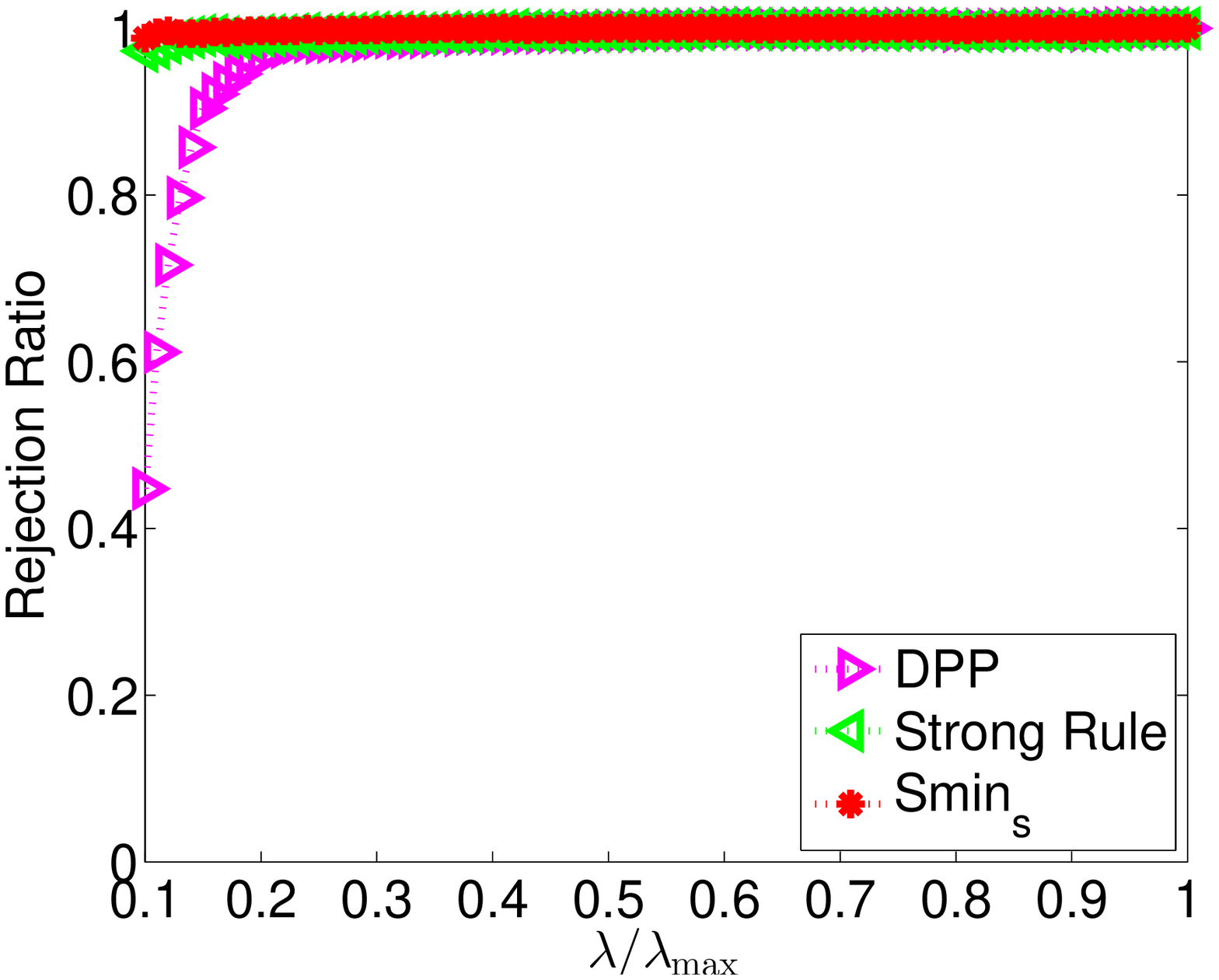}
}\\
\subfigure[$q=1.75$] { \label{fig:q175e1}
\includegraphics[width=0.31\columnwidth]{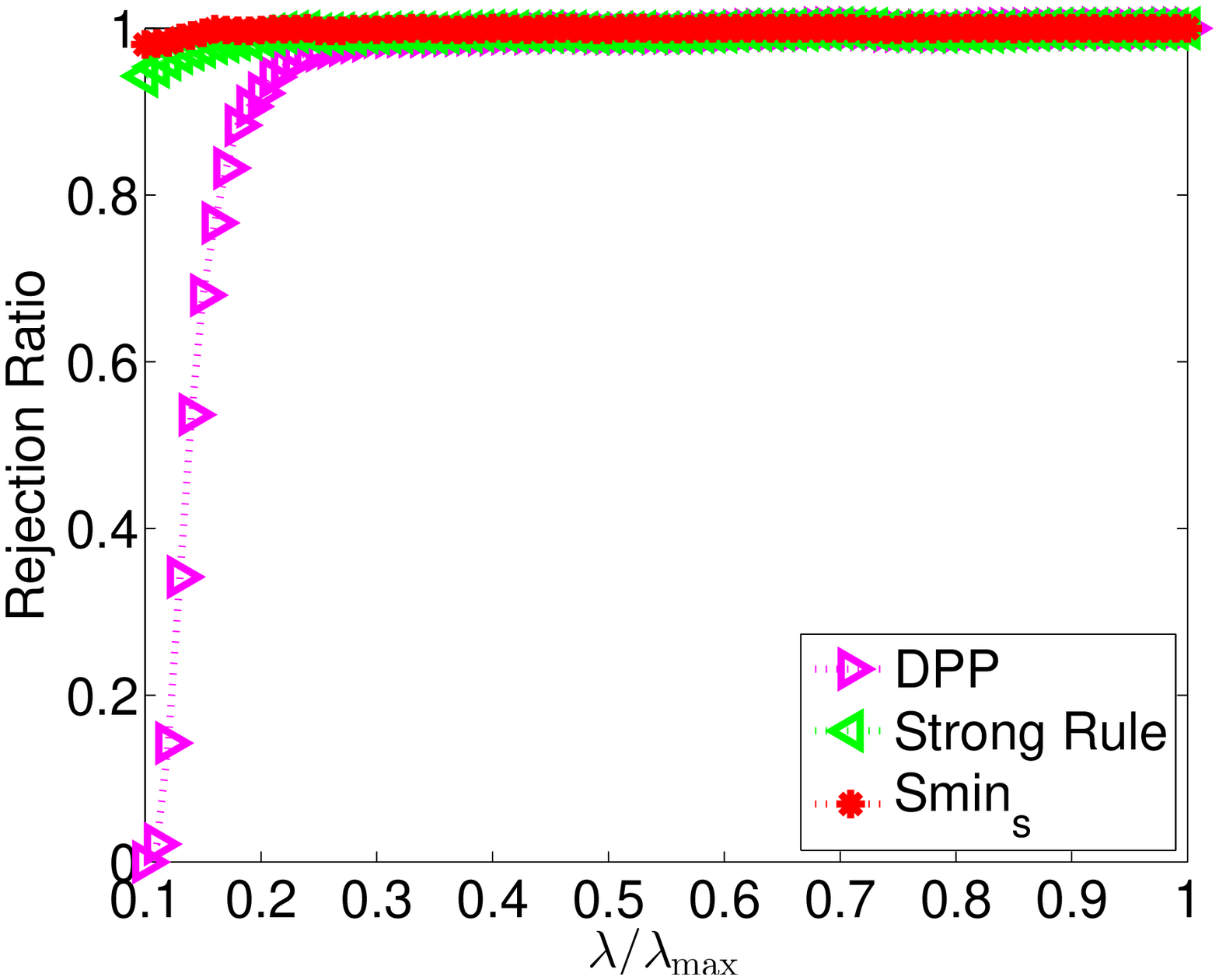}
}
\subfigure[$q=2$] { \label{fig:q2e1}
\includegraphics[width=0.31\columnwidth]{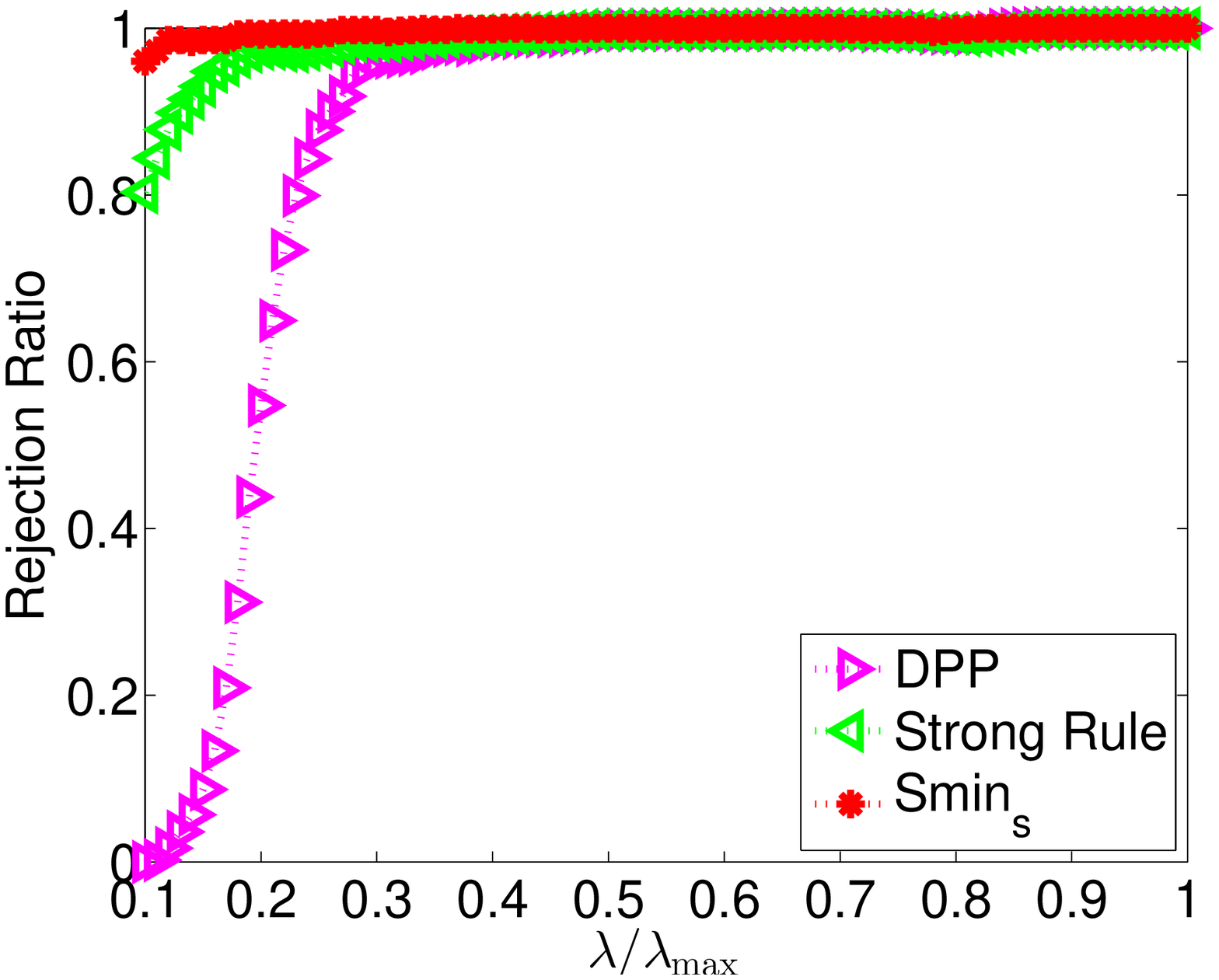}
}
\subfigure[$q=2.33$] { \label{fig:q233e1}
\includegraphics[width=0.31\columnwidth]{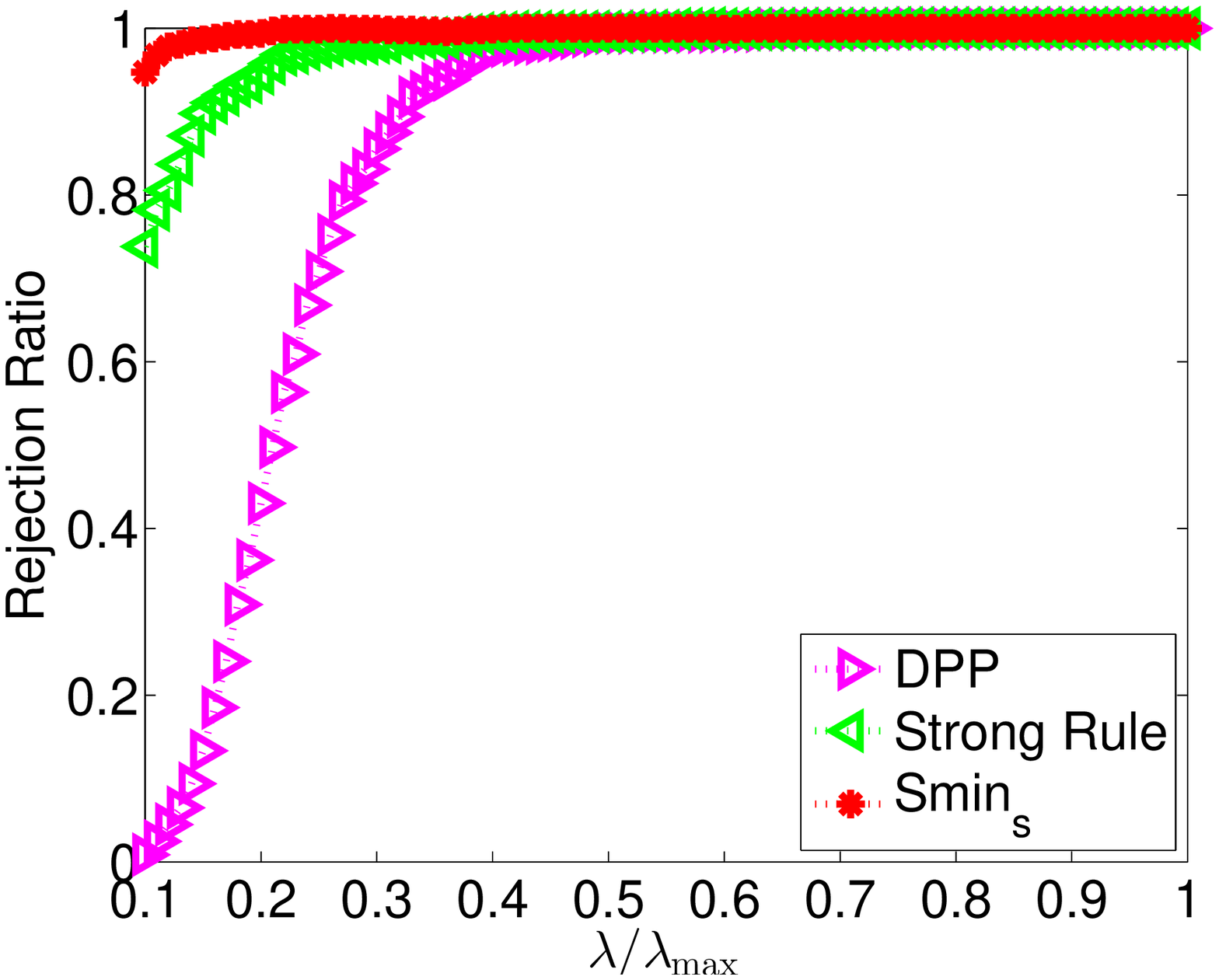}
}\\
\subfigure[$q=3$] { \label{fig:q3e1}
\includegraphics[width=0.31\columnwidth]{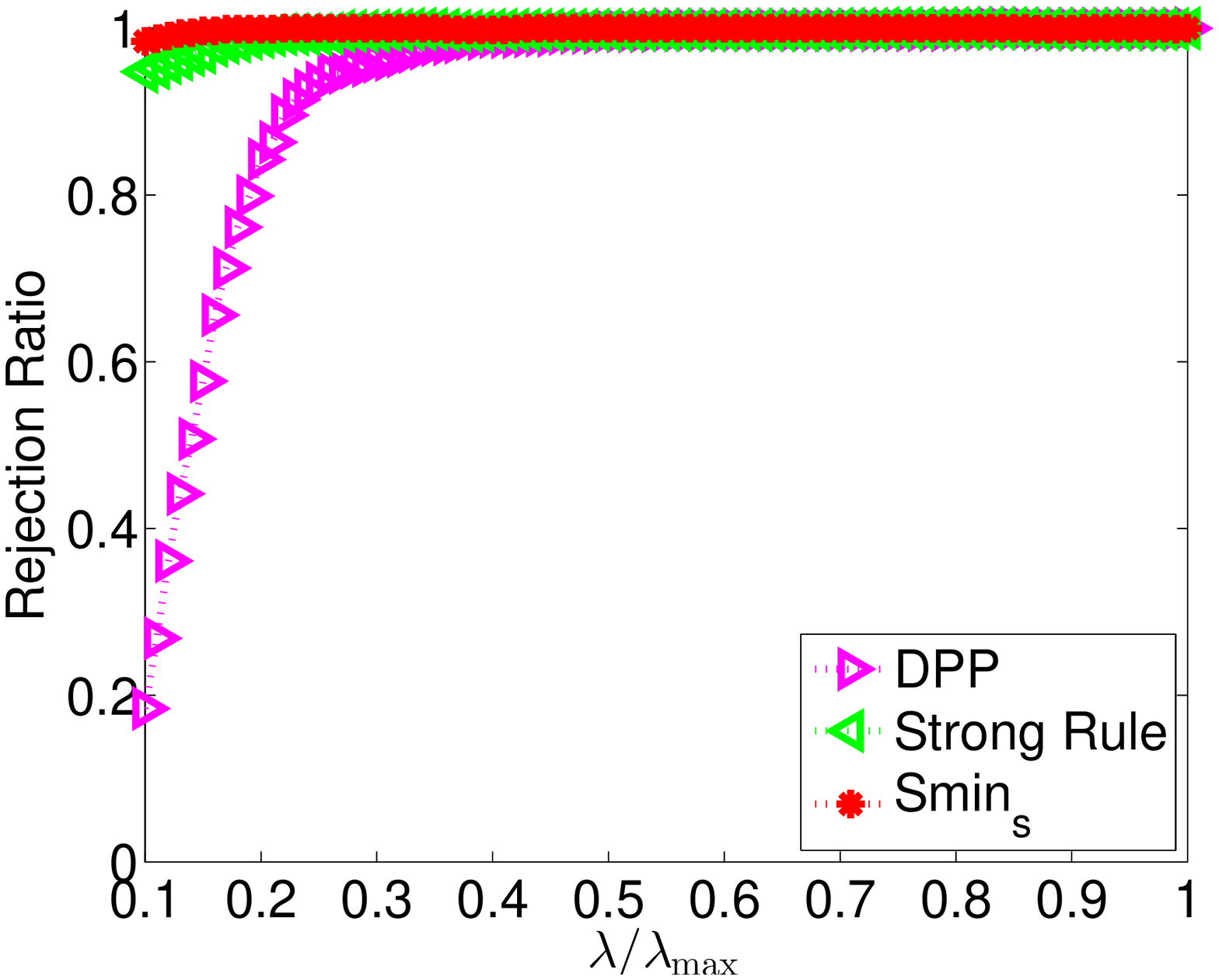}
}
\subfigure[$q=5$] { \label{fig:q5e1}
\includegraphics[width=0.31\columnwidth]{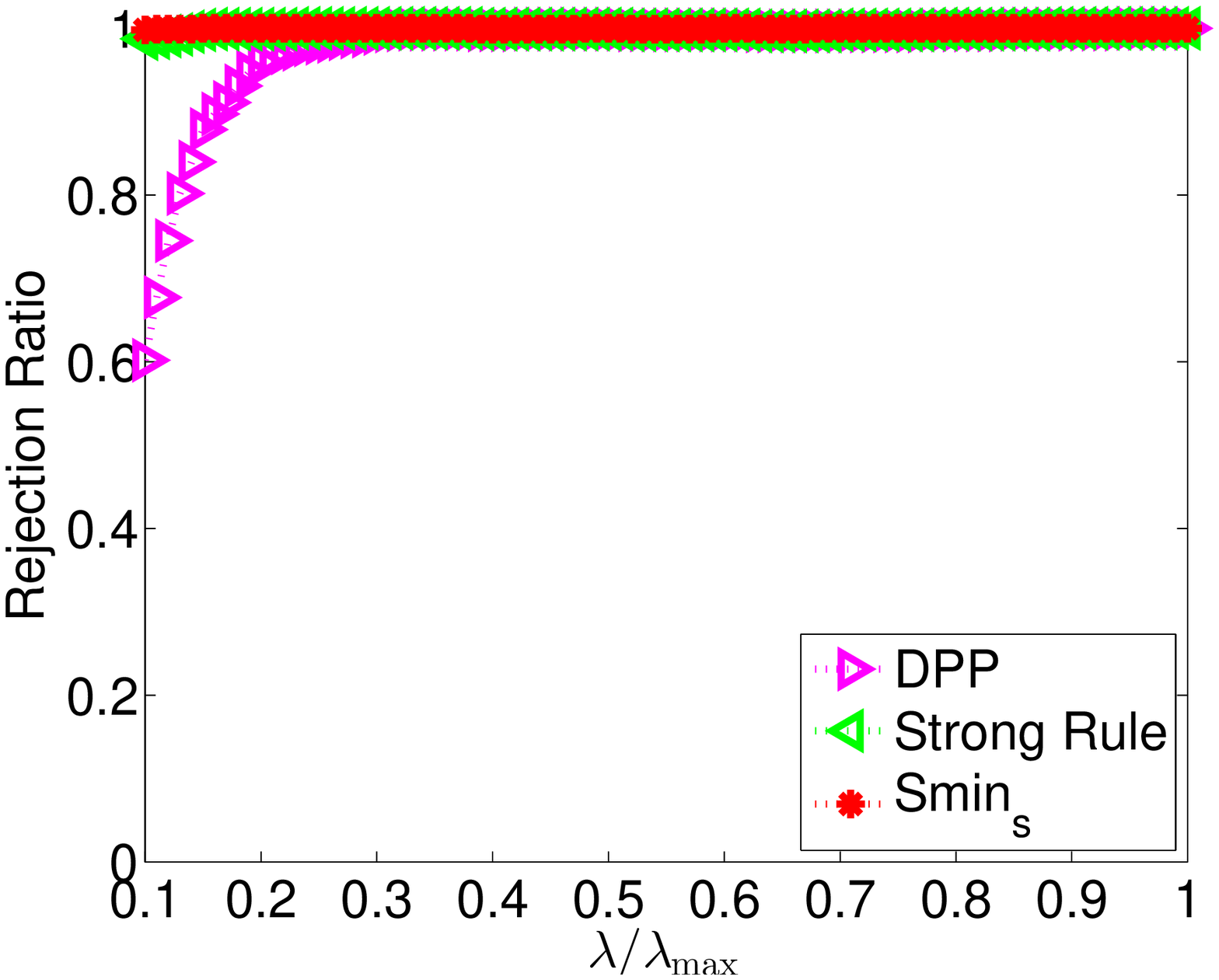}
}
\subfigure[$q=\infty$] { \label{fig:qinfe1}
\includegraphics[width=0.31\columnwidth]{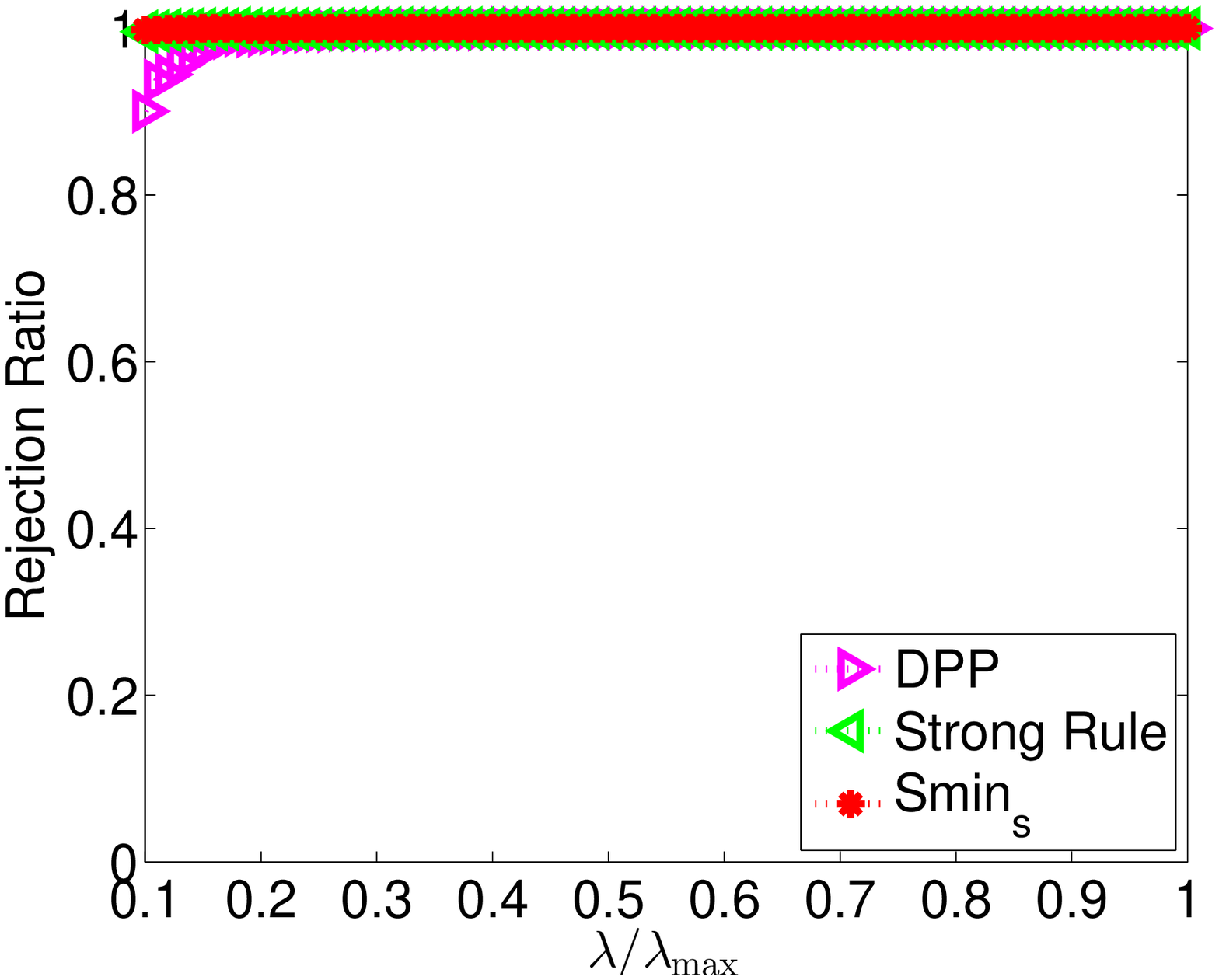}
}
}
\caption{Comparison of Smin$_s$, DPP and strong rules with different values of $q$: $B\in\mathbb{R}^{1000\times10000}$ and $n_g=1000$.}
\label{fig:rej_ratio_e1}
\end{figure}

Figure \ref{fig:rej_ratio_e1} presents the rejection ratios of DPP, strong rule and Smin$_s$ with $9$ different values of $q$. We can see that the performance of Smin$_s$ is very robust to the variation of $q$. The rejection ratio of Smin$_s$ is almost $100\%$ along the sequence of $91$ parameters for all different $q$, which implies that almost all of the inactive groups are discarded by Smin$_s$. As a result, the number of groups that need to be entered into the optimization is significantly reduced. The performance of the strong rule is less robust to different values of $q$ than Smin$_s$, and DPP is more sensitive than the other two. Figure \ref{fig:rej_ratio_e1} indicates that the performance of Smin$_s$ significantly outperforms that of DPP and the strong rule.

\begin{table}
\begin{center}
\begin{footnotesize}
\begin{tabular}{ |l||c||c|c|c||c|c|c| }
  \hline
  $q$ & GLEP$_{1q}$ & GLEP$_{1q}$+DPP & GLEP$_{1q}$+SR & GLEP$_{1q}$+Smin$_s$ & DPP & Strong Rule & Smin$_s$  \\
  \hline\hline
  $1$  & 284.89 & 3.99 & 3.63 & 2.35 & 1.07 & 1.98 & 1.02 \\\hline
  $1.25$  & 338.76 & 5.50 & 5.18 & 3.65 & 1.05 & 2.03 & 1.06 \\\hline
  $1.5$  & 380.02 & 11.58 & 5.58 & 4.19 & 1.12 & 2.04 & 1.08 \\\hline
  $1.75$  & 502.07 & 50.27 & 7.53 & 5.83 & 1.21 & 1.97 & 1.08 \\\hline
 $2$  & 484.11 & 121.90 & 10.64 & 4.94 & 1.54 & 1.88 & 1.01 \\ \hline
 $2.33$  & 809.42 & 266.60 & 89.52 & 47.72 & 1.72 & 2.33 & 1.08\\\hline
 $3$  & 959.83 & 220.81 & 105.61 & 92.14 & 1.52 & 2.05 & 1.12 \\\hline
 $5$  & 1157.69 & 185.55 & 131.85 & 128.77 & 1.24 & 2.04 & 1.06 \\\hline
 $\infty$  & 221.83 & 9.35 & 3.62 & 2.69 & 1.01 & 1.81 & 0.99 \\
  \hline
\end{tabular}
\end{footnotesize}
\end{center}
\vspace{0.1in}
\caption{Running time (in seconds) for solving the mixed-norm regularized problems along a sequence of $91$ tuning parameter values equally spaced on the scale of $r=\frac{\lambda}{\lambda_{max}}$ from $1.0$ to $0.1$ by (a): GLEP$_{1q}$ (reported in the second column) without screening; (b): GLEP$_{1q}$ combined with different screening methods (reported in the third to the fifth columns).
The last three columns report the total running time (in seconds) for the screening methods. The data matrix $B$ is of size $1000\times10000$ and $q=2$.
}
\label{table:time_diff_q}
\end{table}

Table \ref{table:time_diff_q} shows the total running time for solving problem (\ref{prob:primal}) along the sequence of $91$ values of $r=\lambda/\lambda_{max}$ by GLEP$_{1q}$ and GLEP$_{1q}$ with screening methods. We also report the running time for the screening methods. We can see that the running times of GLEP$_{1q}$ combined with Smin$_s$ are only about $1\%$ of the running times of GLEP$_{1q}$ without screening when $q=1,1.25,1.5,1.75,2,\infty$. In other words, Smin$_s$ improves the efficiency of GLEP$_{1q}$ by about $100$ times. When $q=2.33,3,5$, the efficiency of GLEP$_{1q}$ is improved by more than $10$ times. The last three columns of Table \ref{table:time_diff_q} also indicate that Smin$_s$ is the most efficient screening method among DPP and the strong rule in terms of the running time. Moreover, we can see that the running time of the strong rule is about twice as long as the other two methods. The reason is because the strong rule needs to check its screening results \cite{tibshirani2012} by verifying the KKT conditions each time.


\subsubsection{Efficiency with Different Numbers of Groups}\label{sssection:diff_ng}

In this experiment, we evaluate the performance of Smin$_s$ with different numbers of groups. The data matrix $B$ is fixed to be $1000\times10000$ and we set $q=2$. Recall that $n_g$ denotes the number of groups. Therefore, let $s_g$ be the average group size. For example, if $n_g$ is $100$, then $s_g=p/n_g=100$.

\begin{figure}[t]
\centering{
\subfigure[$n_g=100$] { \label{fig:ng100e2}
\includegraphics[width=0.31\columnwidth]{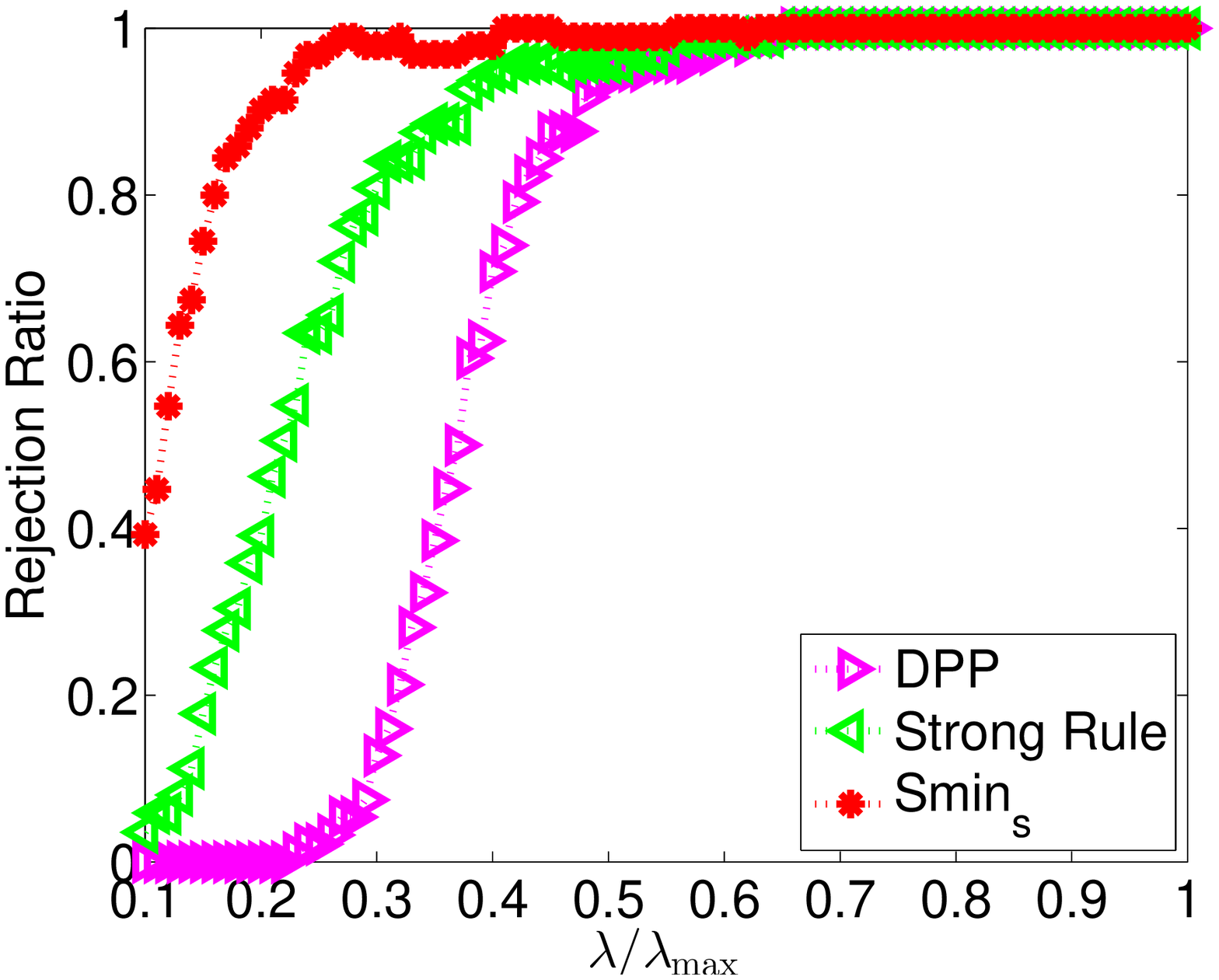}
}
\subfigure[$n_g=400$] { \label{fig:ng400e2}
\includegraphics[width=0.31\columnwidth]{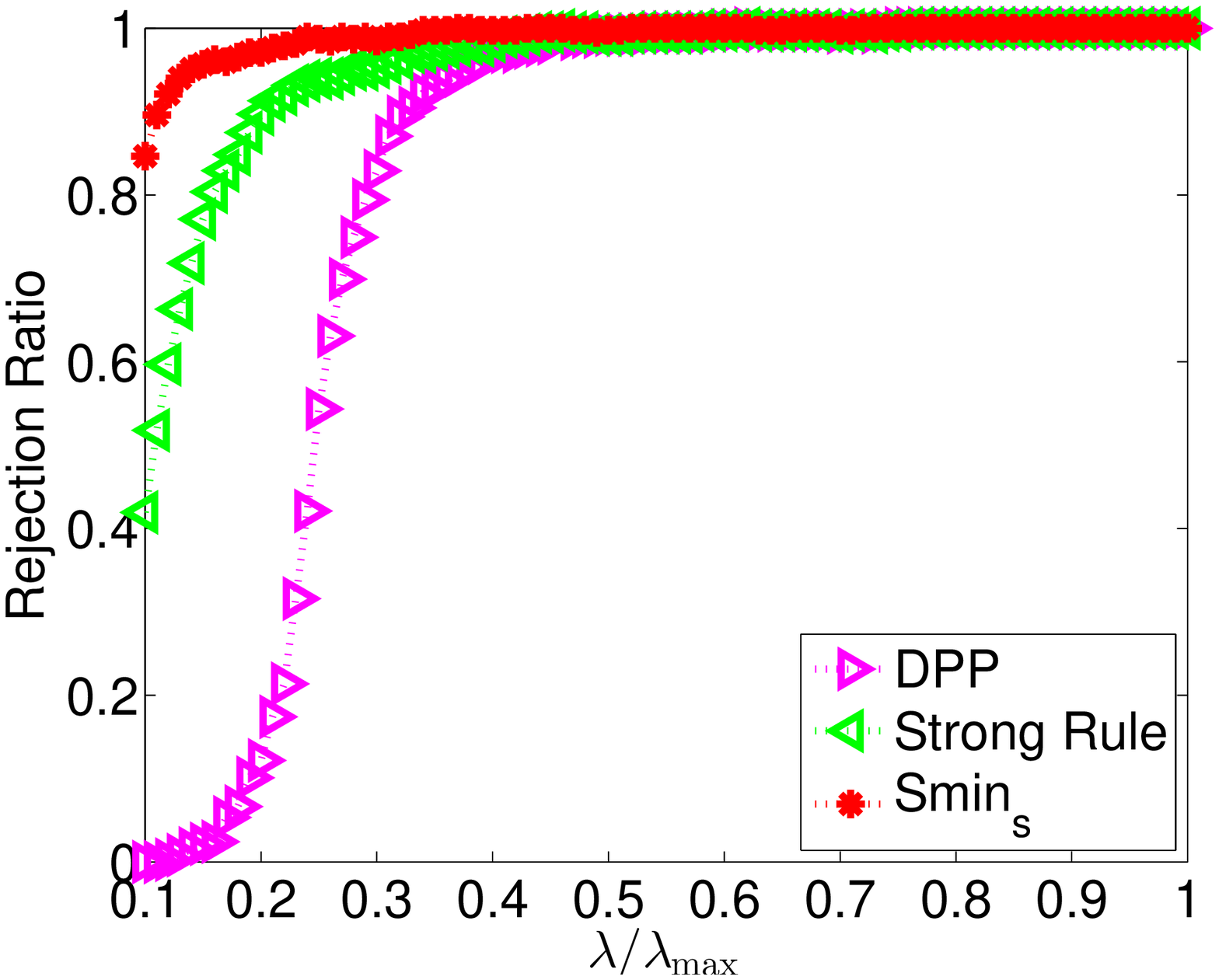}
}
\subfigure[$n_g=800$] { \label{fig:ng800e2}
\includegraphics[width=0.31\columnwidth]{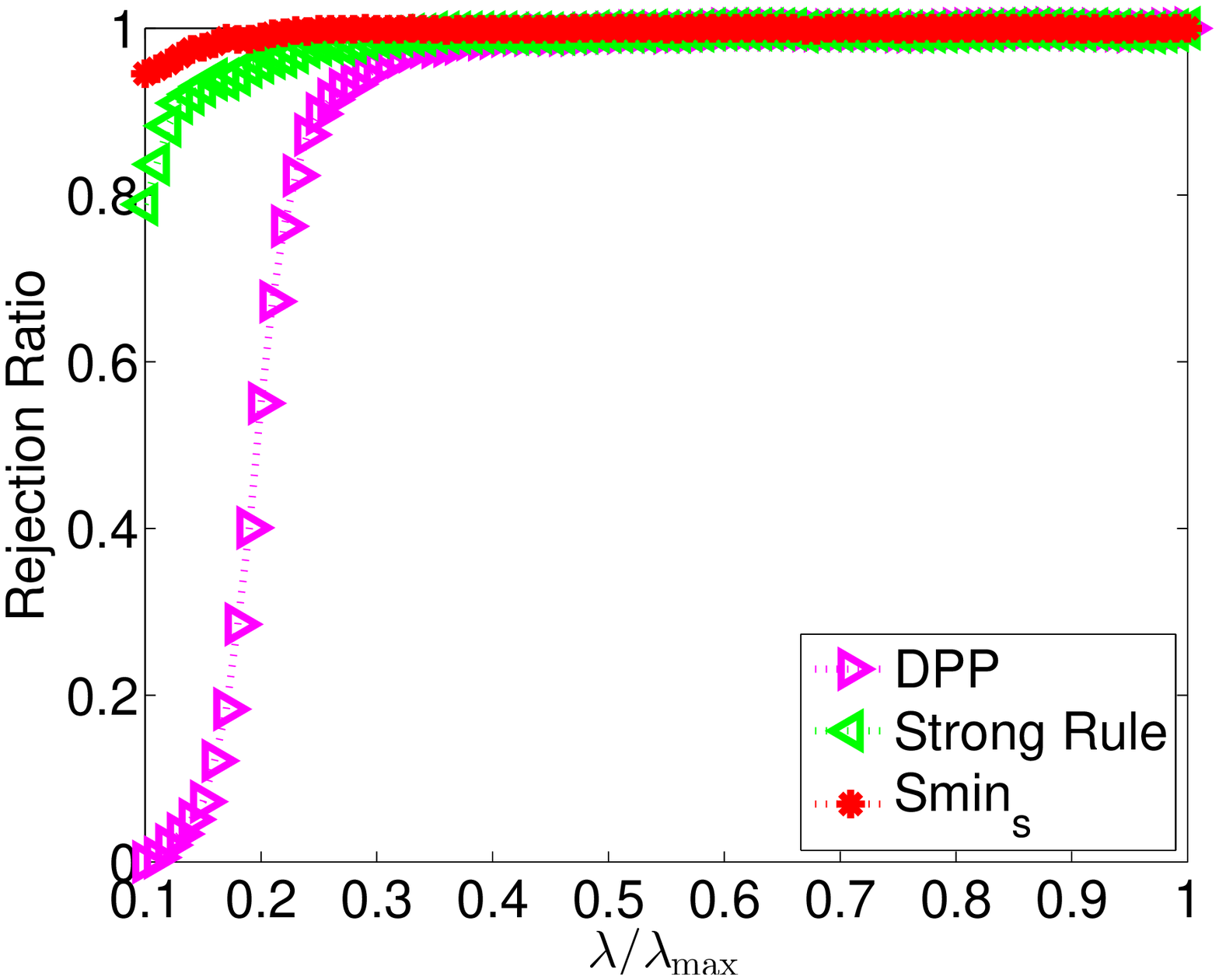}
}\\
\subfigure[$n_g=1200$] { \label{fig:ng1200e2}
\includegraphics[width=0.31\columnwidth]{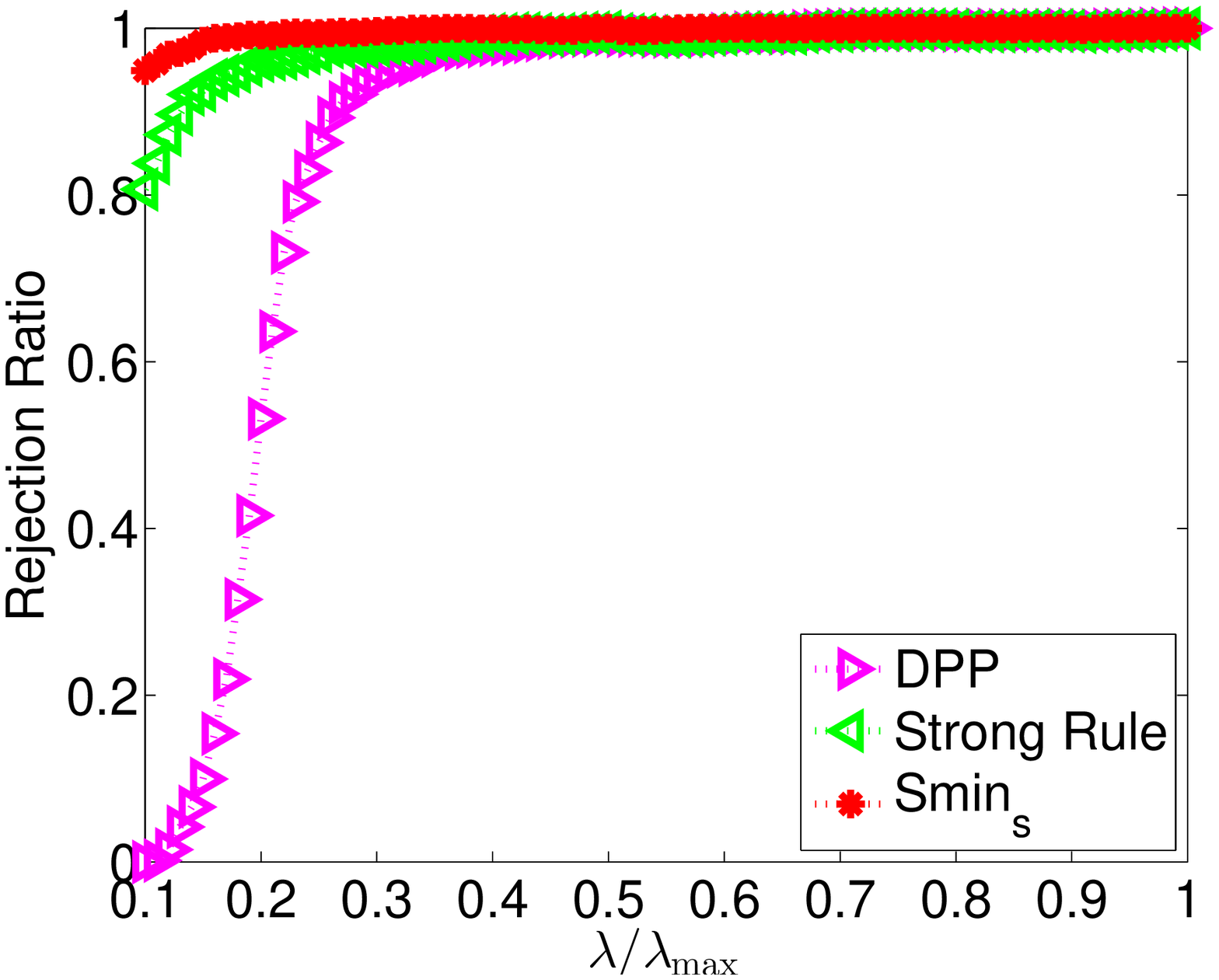}
}
\subfigure[$n_g=1600$] { \label{fig:ng1600e2}
\includegraphics[width=0.31\columnwidth]{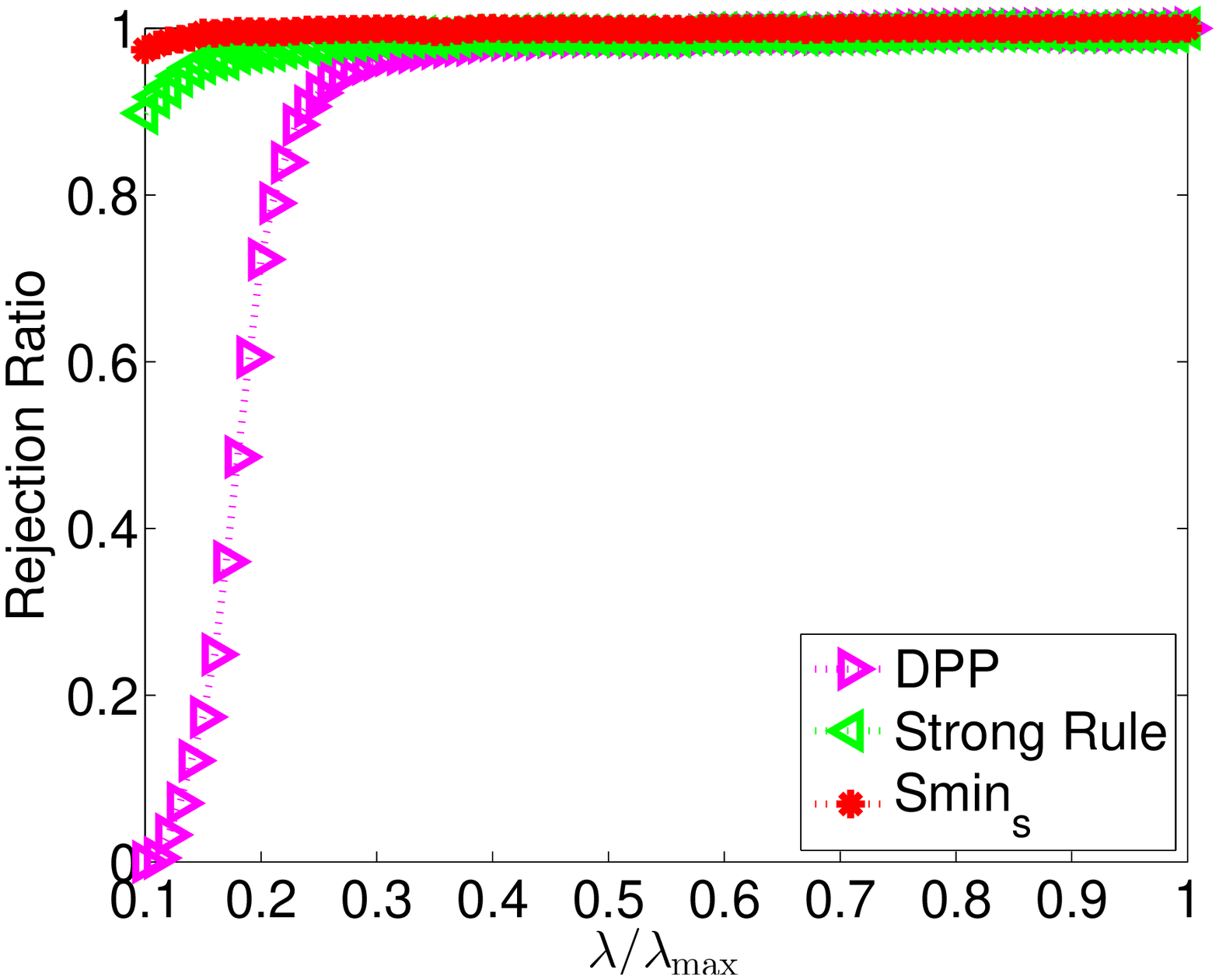}
}
\subfigure[$n_g=2000$] { \label{fig:ng2000e2}
\includegraphics[width=0.31\columnwidth]{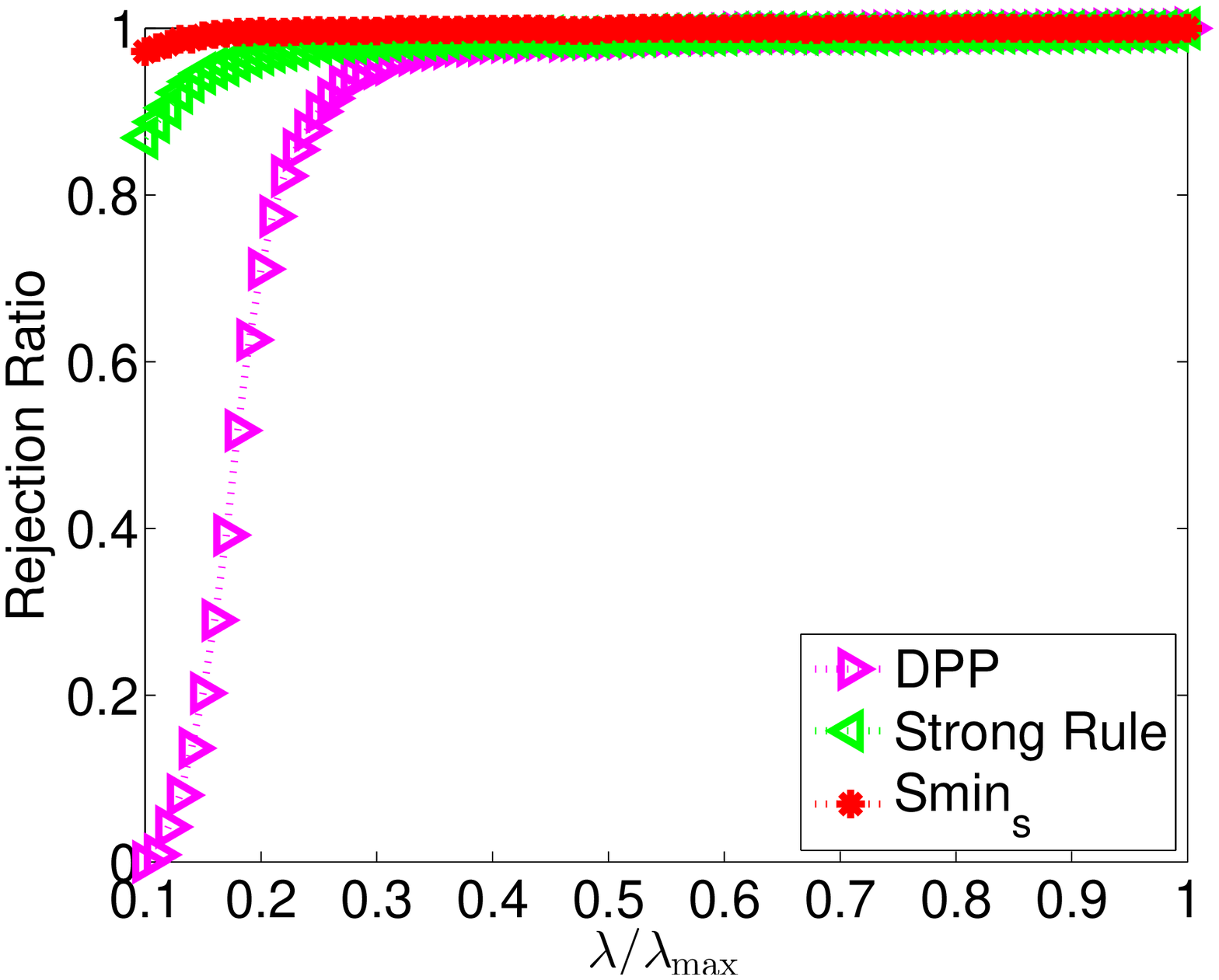}
}
}
\caption{Comparison of Smin$_s$, DPP and strong rules with different numbers of groups: $B\in\mathbb{R}^{1000\times10000}$ and $q=2$.}
\label{fig:rej_ratio_e2}
\end{figure}

From Figure \ref{fig:rej_ratio_e2}, we can see that the screening methods, i.e., Smin$_s$, DPP and strong rule, are able to discard more inactive groups when the number of groups $n_g$ increases. The reason is due to the fact that the estimation of the region $\mathcal{R}$ [please refer to \eqref{def:ball} and Section \ref{subsection:region}] is more accurate with smaller group size. Notice that, a large $n_g$ implies a small average group size. Figure \ref{fig:rej_ratio_e2} implies that compared with DPP and the strong rule, Smin$_s$ is able to discard more inactive groups and more robust with respect to different values of $n_g$.

\begin{table}
\begin{center}
\begin{footnotesize}
\begin{tabular}{ |l||c||c|c|c||c|c|c| }
  \hline
  $n_g$ & GLEP$_{1q}$ & GLEP$_{1q}$+DPP & GLEP$_{1q}$+SR & GLEP$_{1q}$+Smin$_s$ & DPP & Strong Rule & Smin$_s$  \\
  \hline\hline
  $100$  & 421.62 & 275.94 & 140.52 & 41.91 & 1.07 & 2.39 & 1.10 \\\hline
  $400$  & 364.61 & 168.04 & 34.14 & 6.12 & 1.55 & 1.98 & 1.01 \\\hline
  $800$  & 410.65 & 128.33 & 9.20 & 5.03 & 1.44 & 2.02 & 1.03 \\\hline
  $1200$  & 627.85 & 136.68 & 12.33 & 6.84 & 1.56 & 1.99 & 1.13 \\\hline
 $1600$  & 741.25 & 115.77 & 10.69 & 7.10 & 1.42 & 1.91 & 1.01 \\\hline
 $2000$  & 786.89 & 130.74 & 12.54 & 7.24 & 1.57 & 2.19 & 1.09\\\hline
\end{tabular}
\end{footnotesize}
\end{center}
\vspace{0.1in}
\caption{Running time (in seconds) for solving the mixed-norm regularized problems along a sequence of $91$ tuning parameter values equally spaced on the scale of $r=\frac{\lambda}{\lambda_{max}}$ from $1.0$ to $0.1$ by (a): GLEP$_{1q}$ (reported in the second column) without screening; (b): GLEP$_{1q}$ combined with different screening methods (reported in the third to the fifth columns).
The last three columns report the total running time (in seconds) for the screening methods. The data matrix $B$ is of size $1000\times10000$ and $q=2$.
}
\label{table:time_diff_ng}
\end{table}

Table \ref{table:time_diff_ng} further demonstrates the effectiveness of Smin$_s$ in improving the efficiency of GLEP$_{1q}$. When $n_g=100,400$, the efficiency of GLEP$_{1q}$ is improved by $10$ and $50$ times respectively. For the other cases, the efficiency of GLEP$_{1q}$ is boosted by about $100$ times with Smin$_s$.

\subsubsection{Efficiency with Different Dimensions}\label{sssection:diff_p}

In this experiment, we apply GLEP$_{1q}$ with screening methods to large dimensional data sets. We generate the data sets $B\in\mathbb{R}^{1000\times p}$, where $p=20000, 50000, 80000, 100000, 150000, 200000$. Notice that, the data matrix $B$ is a dense matrix. We set $q=2$ and the average group size $s_g=10$.

\begin{figure}[t!]
\centering{
\subfigure[$p=20000$] { \label{fig:q2_2e4_c08_e3}
\includegraphics[width=0.31\columnwidth]{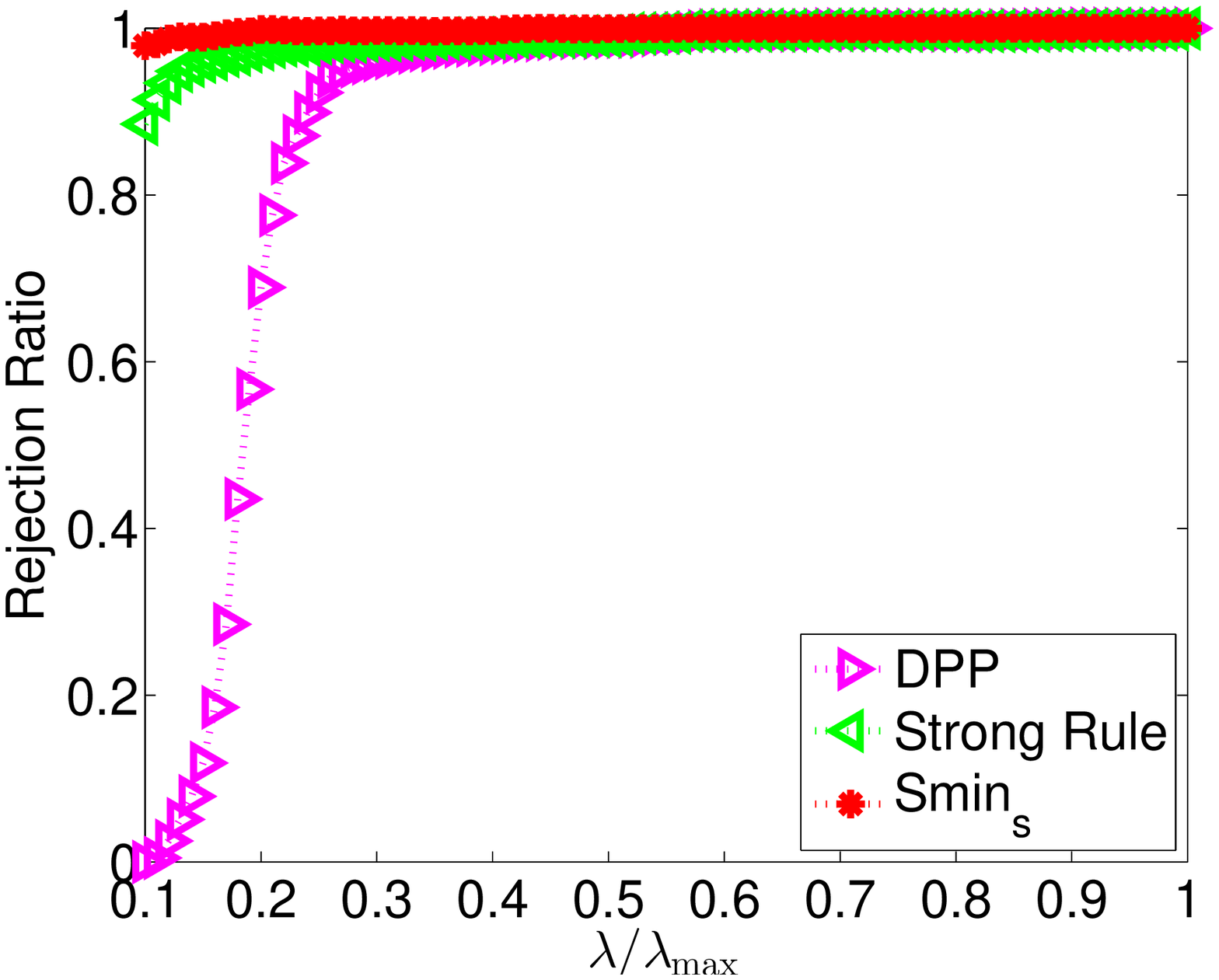}
}
\subfigure[$p=50000$] { \label{fig:q2_5e4_c08_e3}
\includegraphics[width=0.31\columnwidth]{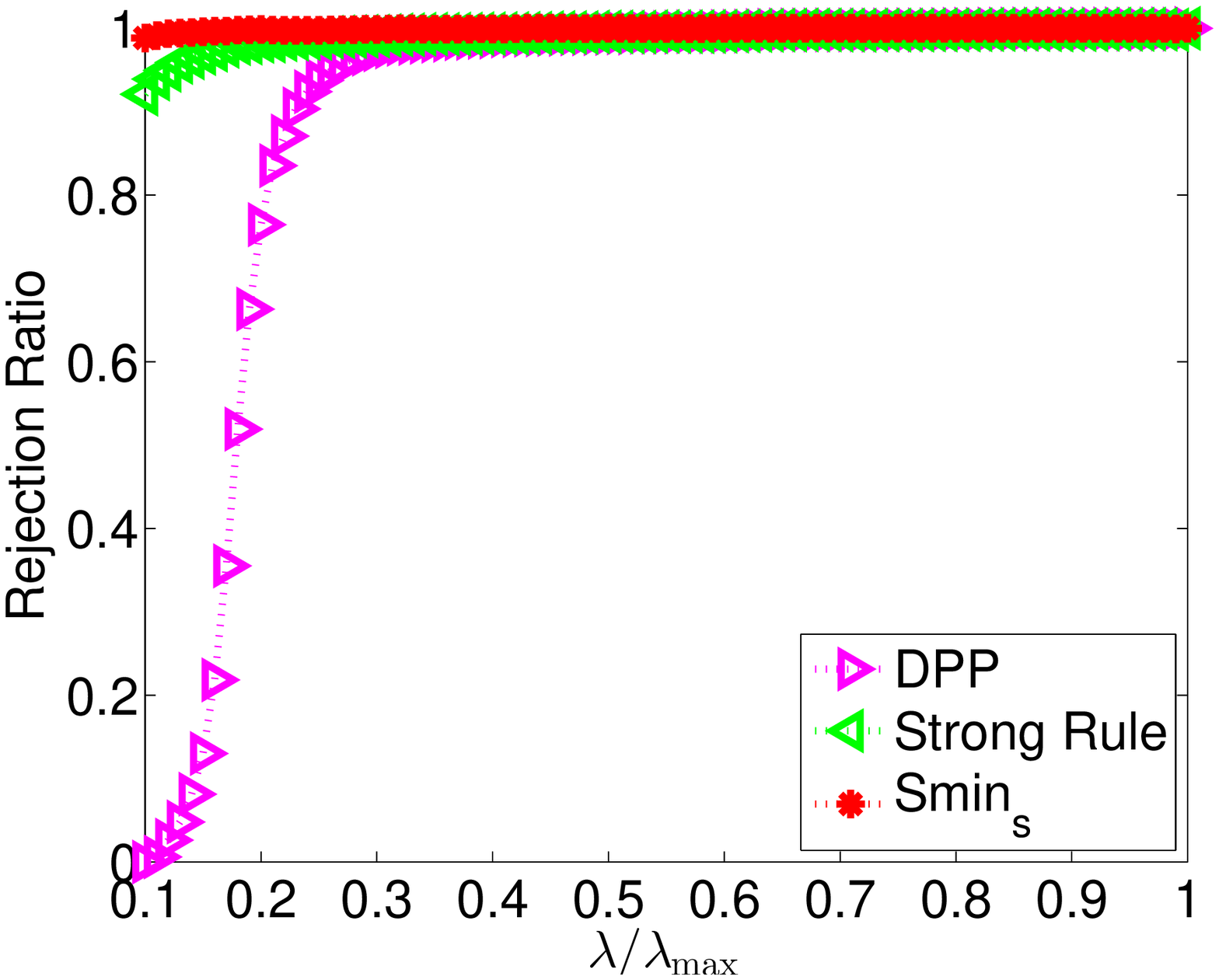}
}
\subfigure[$p=80000$] { \label{fig:q2_8e4_c08_e3}
\includegraphics[width=0.31\columnwidth]{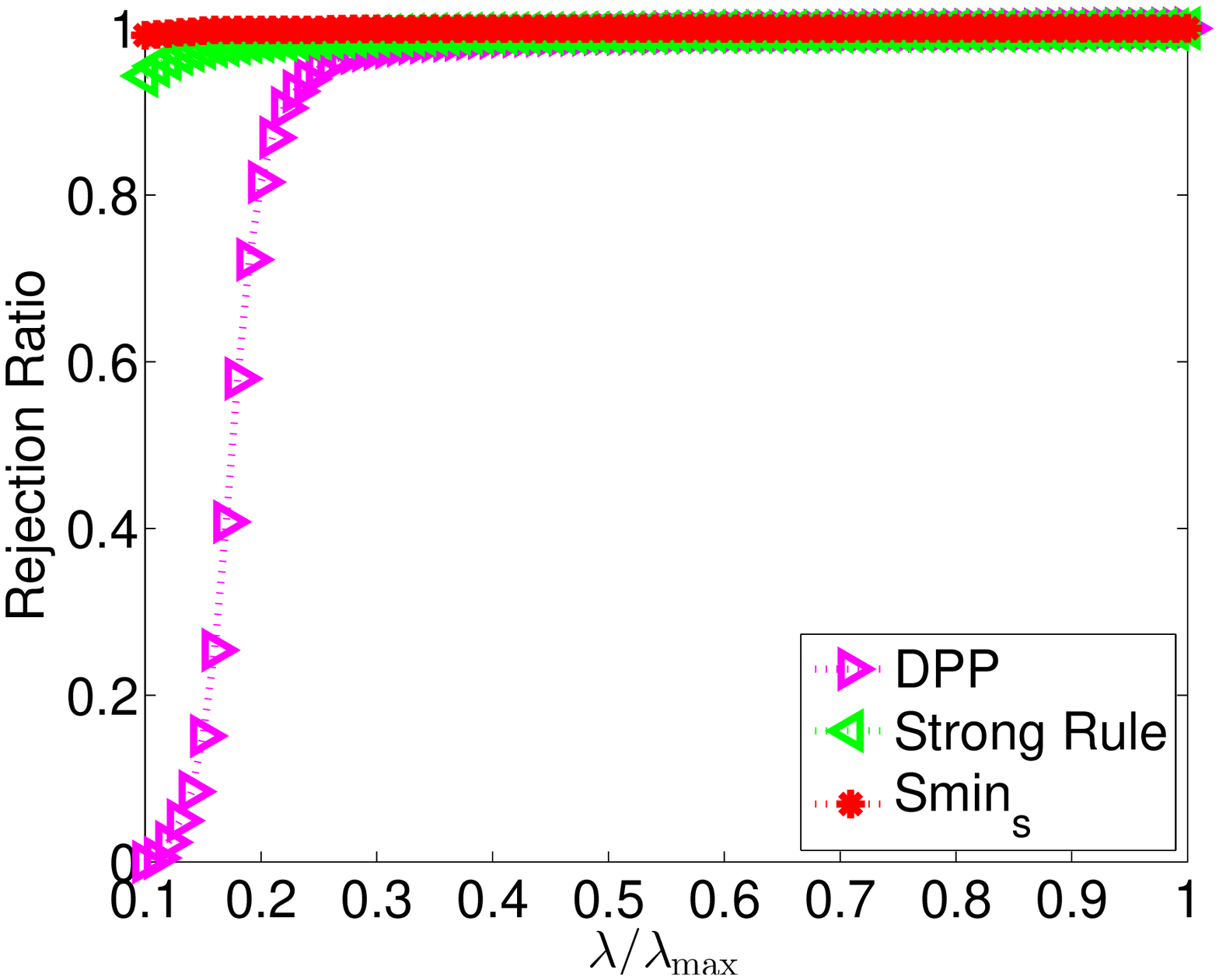}
}\\
\subfigure[$p=100000$] { \label{fig:q2_1e5_c08_e3}
\includegraphics[width=0.31\columnwidth]{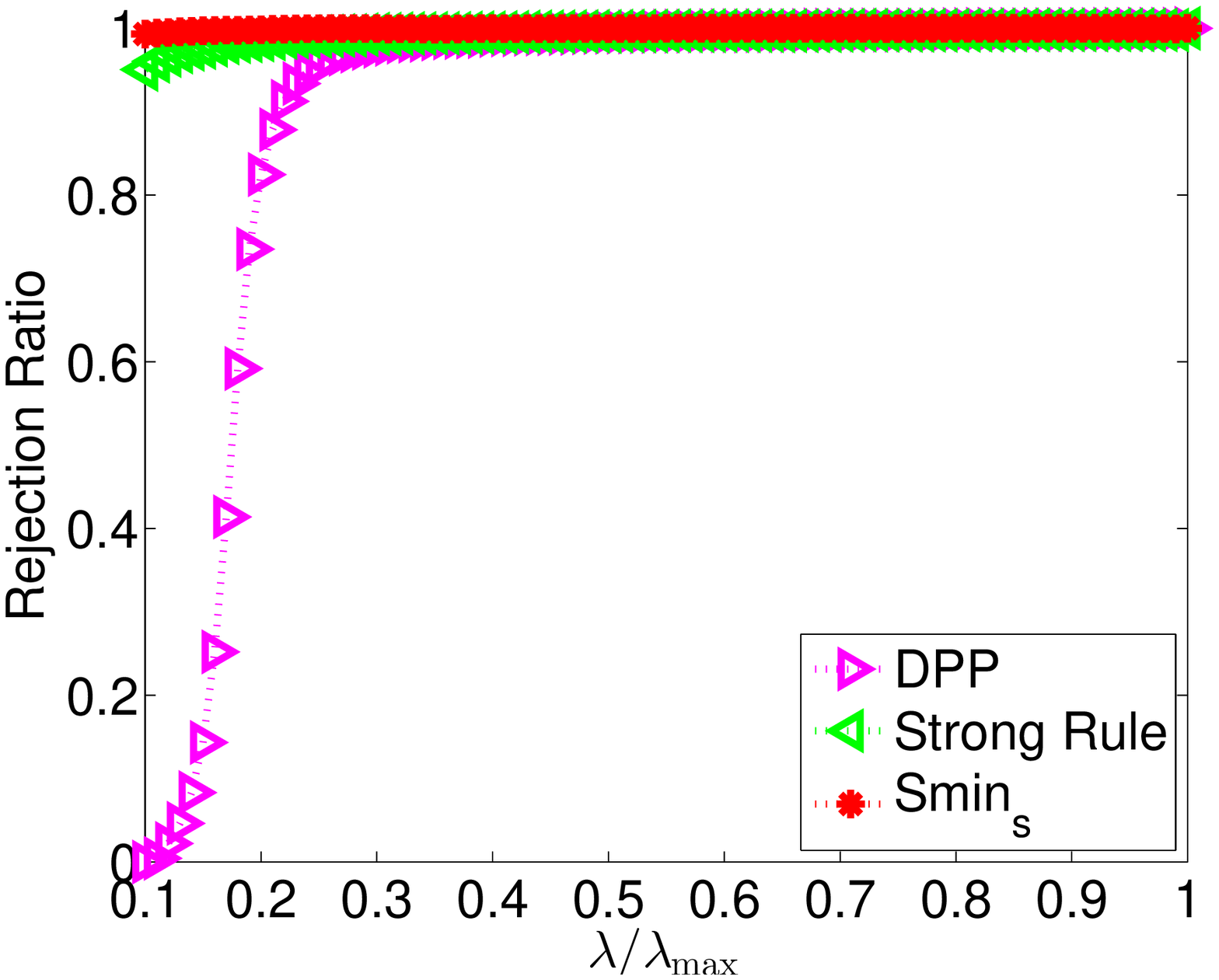}
}
\subfigure[$p=150000$] { \label{fig:q2_15e5_c08_e3}
\includegraphics[width=0.31\columnwidth]{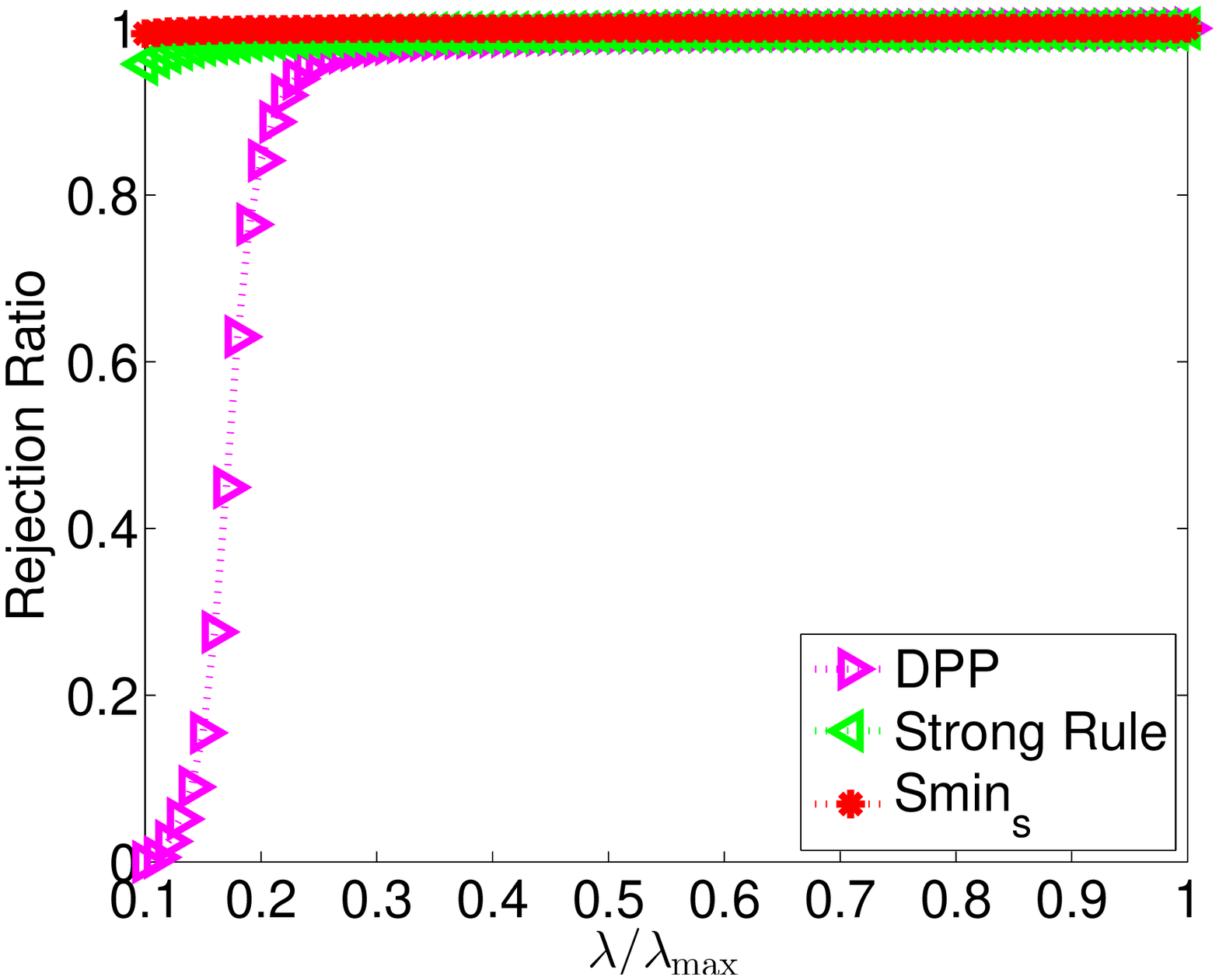}
}
\subfigure[$p=200000$] { \label{fig:q2_2e5_c08_e3}
\includegraphics[width=0.31\columnwidth]{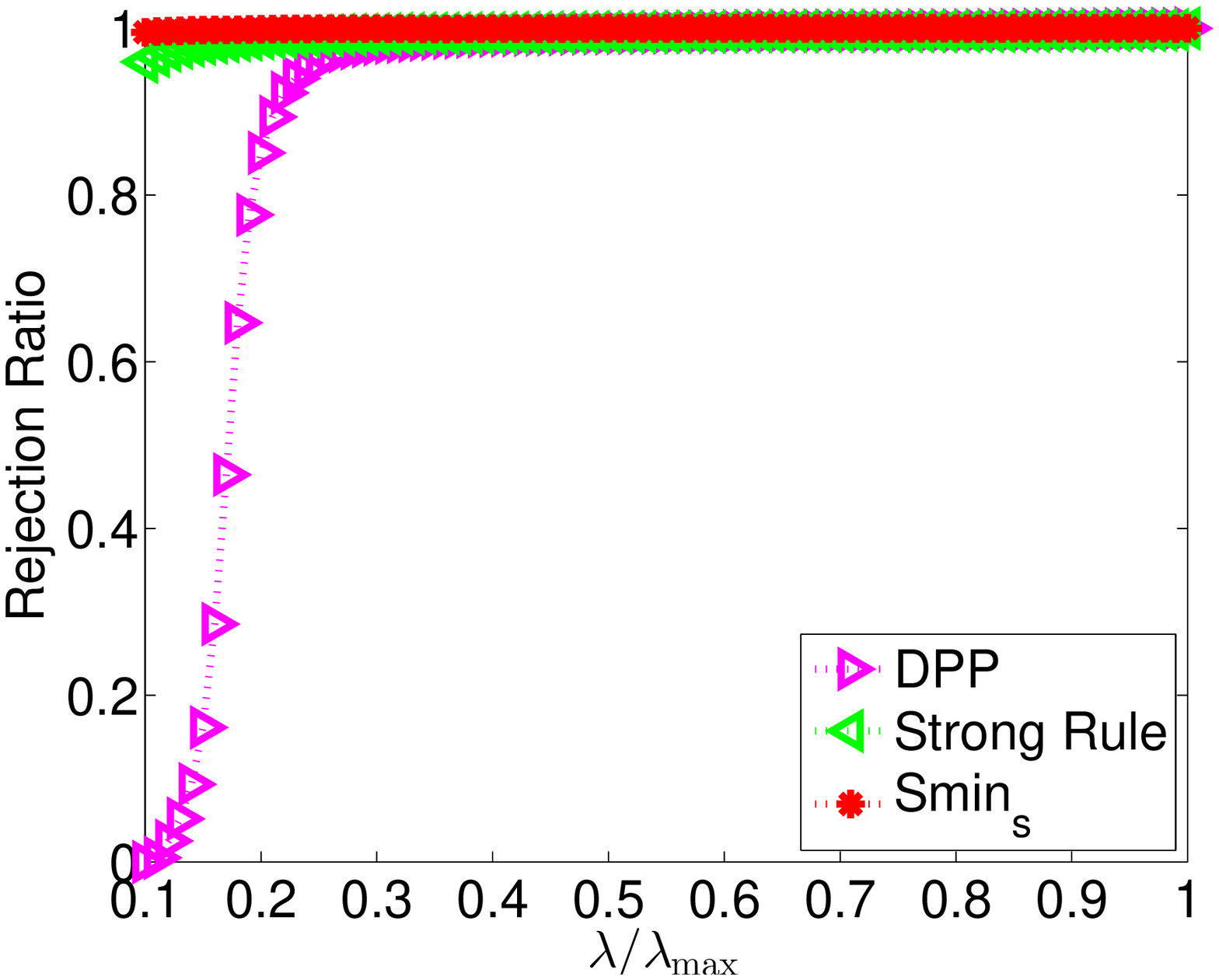}
}
}
\caption{Comparison of Smin$_s$, DPP and strong rules with different dimensions. We set $n_g=p/10$ and $q=2$.}
\label{fig:rej_ratio_e3}
\end{figure}

As shown in Figure \ref{fig:rej_ratio_e3}, the performance of all three screening methods, i.e., Smin$_s$, DPP and the strong rule, are robust to different dimensions. We can observe from the figure that Smin$_s$ is again the most effective screening method in discarding inactive groups.

\begin{table}
\begin{center}
\begin{footnotesize}
\begin{tabular}{ |l||c||c|c|c||c|c|c| }
  \hline
  $p$ & GLEP$_{1q}$ & GLEP$_{1q}$+DPP & GLEP$_{1q}$+SR & GLEP$_{1q}$+Smin$_s$ & DPP & Strong Rule & Smin$_s$  \\
  \hline\hline
  $20000$  & 1103.52 & 218.49 & 12.54 & 7.42 & 2.75 & 3.66 & 2.01 \\\hline
  $50000$  & 3440.17 & 508.61 & 23.07 & 11.74 & 7.21 & 9.24 & 4.93 \\\hline
  $80000$  & 6676.28 & 789.35 & 33.26 & 17.17 & 12.31 & 15.06 & 7.97 \\\hline
  $100000$  & 8525.73 & 983.03 & 37.49 & 18.88 & 16.68 & 18.82 & 9.73 \\\hline
  $150000$  & 14752.20 & 1472.79 & 55.28 & 26.88 & 26.16 & 29.00 & 15.28 \\\hline
  $200000$  & 19959.09 & 1970.02 & 69.32 & 33.51 & 38.04 & 38.88 & 21.04 \\\hline
\end{tabular}
\end{footnotesize}
\end{center}
\vspace{0.1in}
\caption{Running time (in seconds) for solving the mixed-norm regularized problems along a sequence of $91$ tuning parameter values equally spaced on the scale of $r=\frac{\lambda}{\lambda_{max}}$ from $1.0$ to $0.1$ by (a): GLEP$_{1q}$ (reported in the second column) without screening; (b): GLEP$_{1q}$ combined with different screening methods (reported in the third to the fifth columns).
The last three columns report the total running time (in seconds) for the screening methods. The size of the data matrix $B$ is $1000\times p$, $n_g=p/10$ and $q=2$.}
\label{table:time_diff_p}
\end{table}

The computational savings gained by the proposed Smin$_s$ method are more significant than those reported in Sections \ref{sssection:diff_q} and \ref{sssection:diff_ng}. From Table \ref{table:time_diff_p}, when $p=200000$, the efficiency of GLEP$_{1q}$ is improved by more than $6000$ times with Smin$_{1q}$, which demonstrates that Smin$_{1q}$ is a powerful tool to accelerate the optimization of the mixed-norm regularized problems especially for large dimensional data.

\section{Conclusion}\label{s:conclusion}

In this paper, we propose the GLEP$_{1q}$ algorithm and the corresponding screening method, that is, Smin, for solving the
$\ell_1/\ell_q$-norm regularized problem with any $q \geq 1$. The
main technical contributions of this paper include two parts.

First, we develop an efficient algorithm
for the $\ell_1/\ell_q$-norm regularized Euclidean projection
(EP$_{1q}$), which is a key building block of GLEP$_{1q}$.
Specifically, we analyze the key theoretical properties of the
solution of EP$_{1q}$, based on which we develop an efficient
algorithm for EP$_{1q}$ by solving two zero finding problems. Our
analysis also reveals why EP$_{1q}$ for the general $q$ is
significantly more challenging than the special cases such as $q=2$.

Second, we develop a novel screening method (Smin) for large dimensional mixed-norm regularized problems, which is based on an accurate estimation of the possible region of the dual optimal solution. Our method is safe and can be integrated with any existing solvers. Our extensive experiments demonstrate that the proposed Smin is very powerful in discarding inactive groups, resulting in huge computational savings (up to three orders of magnitude) especially for large dimensional problems.

In this paper, we focus on developing efficient algorithms for solving the
$\ell_1/\ell_q$-regularized problem. We plan to study the
effectiveness of the $\ell_1/\ell_q$ regularization under different
values of $q$ for real-world applications in computer vision and
bioinformatics, e.g., imaging genetics \cite{Thompson2013}. We also plan to conduct the
distribution-specific~\cite{Eldar:2010} theoretical studies for
different values of $q$.

{\small
\bibliographystyle{plain}
\bibliography{referenceGallery}
}

\end{document}